%\pdfoutput=1
\documentclass[12pt]{article} % For LaTeX2e
\usepackage{times}
 \usepackage[left=2cm,top=2cm,right=2cm]{geometry}

\usepackage[utf8]{inputenc} % allow utf-8 input
\usepackage{adjustbox}

\usepackage[T1]{fontenc}    % use 8-bit T1 fonts
\usepackage{url}            % simple URL typesetting
\usepackage{booktabs}       % professional-quality tables
\usepackage{amsfonts}       % blackboard math symbols
\usepackage{nicefrac}       % compact symbols for 1/2, etc.
\usepackage{microtype}      % microtypography
\usepackage{xspace}
\usepackage{amsmath, amsthm}
\usepackage{algorithm}
\usepackage{algorithmic}
\usepackage{color}
\usepackage{enumitem}
\usepackage{comment}
\usepackage{bm}

% Makes textsf render at a reasonable size.
\usepackage[scaled=.9]{helvet}

%% ARXIV specific
\usepackage{url}
\usepackage{authblk}
\usepackage[dvipsnames]{xcolor}
\usepackage[colorlinks=true, linkcolor=blue, citecolor=blue]{hyperref}
\usepackage{cleveref}

\newtheorem{remark}{Remark}

\definecolor{LightCyan}{rgb}{0.88,1,1}
\usepackage{caption}

\usepackage{natbib}
\bibliographystyle{plainnat}
\bibpunct{(}{)}{;}{a}{,}{,}

%% Spacing hack on enum

\newcount\Comments  % 0 suppresses notes to selves in text
\Comments=1 % TODO: change to 0 for final version
\definecolor{darkgreen}{rgb}{0,0.5,0}
\definecolor{darkred}{rgb}{0.7,0,0}
\definecolor{teal}{rgb}{0.3,0.8,0.8}
\definecolor{orange}{rgb}{1.0,0.5,0.0}
\definecolor{purple}{rgb}{0.8,0.0,0.8}
\newcommand{\kibitz}[2]{\ifnum\Comments=1{\textcolor{#1}{\textsf{\footnotesize #2}}}\fi}

\def\tr{\mathop{\text{tr}}\kern.2ex}

\long\def\comment#1{}

\def\tr{\mathop{\text{Tr}}}

\def\cS{{\mathcal{S}}}

\def\cX{{\mathcal{X}}}
\def\cD{{\mathcal{D}}}
\def\cP{{\mathcal{P}}}

\def\cM{{\mathcal{M}}}
\def\cZ{{\mathcal{Z}}}
\def\cA{{\mathcal{A}}}
\def\cF{{\mathcal{F}}}
\def\cO{{\mathcal{O}}}

\def\tr{{\text{Tr}}}

\def\*#1{\bm{#1}} 
\def\+#1{\mathcal{#1}} 
\def\-#1{\mathrm{#1}} 
\def\^#1{\mathbb{#1}} 
\def\$#1{\mathtt{#1}}
\def\@#1{\mathsf{#1}}

\def\dual{\@D(\bm{\alpha},\bm{\beta})}

\newcommand{\bel}{\begin{eqnarray}\label}
\newcommand{\eel}{\end{eqnarray}}
\newcommand{\bes}{\begin{eqnarray*}}
\newcommand{\ees}{\end{eqnarray*}}

\usepackage{mathrsfs}
\usepackage{colortbl}

\usepackage{fullpage}
%\addtolength{\textwidth}{1in} \addtolength{\oddsidemargin}{-0.5in}
%\addtolength{\textheight}{1in} \addtolength{\topmargin}{-0.62in}

\usepackage{hyperref}
\usepackage{breakcites}
\def\##1\#{\begin{align}#1\end{align}}
\def\$#1\${\begin{align*}#1\end{align*}}
\setlength\parindent{0pt}
\usepackage{def1}

%\title{POMDP with computational }
\begin{document}
\title{Provably Efficient Representation Learning with Tractable Planning in Low-Rank POMDP}
\author[1]{\small Jiacheng Guo}
\author[1]{\small Zihao Li}
\author[3]{\small Huazheng Wang }
\author[1]{\small Mengdi Wang}
\author[4]{\small Zhuoran Yang}
\author[1]{\small Xuezhou Zhang}
\affil[1]{\small Princeton University, \{jg9901,zl9045,mengdiw,xz7392\}@princeton.edu}

\affil[3]{\small Oregon State University, huazheng.wang @oregonstate.edu}
\affil[4]{\small Yale University, zhuoran.yang@yale.edu}
\date{}

\maketitle

\begin{abstract}
    In this paper, we study representation learning in partially observable Markov Decision Processes (POMDPs), where the agent learns a decoder function that maps a series of high-dimensional raw observations to a compact representation and uses it for more efficient exploration and planning. 
    We focus our attention on the sub-classes of \textit{$\gamma$-observable} and \textit{decodable POMDPs}, for which it has been shown that statistically tractable learning is possible, but there has not been any computationally efficient algorithm. We first present an algorithm for decodable POMDPs that combines maximum likelihood estimation (MLE) and optimism in the face of uncertainty (OFU) to perform representation learning and achieve efficient sample complexity, while only calling supervised learning computational oracles. We then show how to adapt this algorithm to also work in the broader class of $\gamma$-observable POMDPs.
    
\end{abstract}
\section{Introduction}
Markov Decision Processes (MDPs) are commonly used in reinforcement learning to model problems across a range of applications that involve sequential decision-making. However, MDPs assume that the agent has perfect knowledge of the current environmental state, which is often not realistic. To address this, Partially Observable Markov Decision Processes (POMDPs) have been introduced as an extension of MDPs \citep{cassandra1998survey,murphy2000survey, braziunas2003pomdp}. In POMDPs, the agent does not have direct access to the environmental state. Instead, it receives observations that are sampled from a state-dependent distribution. POMDPs are an important class of models for decision-making that is increasingly being used to model complex real-world applications, ranging from robotics and navigation to healthcare, finance, manufacturing \citep{roy2006planning,chen2016pomdp,ghandali2018pomdp,liu2020adaptive}.

POMDPs differ from MDPs in the presence of \textit{non-Markovian} observations. In the MDP setting, we assume that the system follows a Markovian law of transition induced by the agent's policy, meaning that the current state of the environment depends solely on the previous state and the action taken. As a result, there exists a Markovian optimal policy, i.e. a policy whose action only depends on the current state. However, with the assumption of unobserved hidden states and therefore the lack of Markovian property, generally, the optimal policy 
of POMDP depends on the full history. This causes great difficulties in both computational and statistical aspects, as the agent has to maintain a long-term \textit{memory} while performing learning and planning. In fact, the agent can only learn the posterior distribution of the latent state given the whole history, known as the belief state, and then maps the belief state to the action. Therefore, learning POMDP is well-known to be intractable in general, even with a small state and observation space \citep{10.2307/3689975, 10.1145/347476.347480,golowich2022planning}. 

Nevertheless, this doesn't rule out special problem structures that enable statistically efficient algorithms. In particular, \citet{katt2018bayesian} and \citet{liu2022partially} study a setting called $\gamma$-\textit{observable} POMDP, i.e. POMDP whose omission matrix is full-rank, and achieve polynomial sample complexities. Another more tractable class is called the \textit{decodable} POMDP, in which we can decode the latent state by $L$-step back histories \citep{efroni2022provable}. Subsequent works on provably efficient learning in POMDP have since found broader classes of statistically tractable problem instances that generalized the above setting \cite{zhan2022pac,uehara2022provably,liu2022optimistic}. 
One common drawback of the aforementioned works, however, is their computational intractability. Almost all of these algorithms follow the algorithmic template of \textit{optimistic planning}, where the algorithm iteratively refines a version space of plausible functions in the function class. To achieve this, these methods perform an iteration and elimination procedure over all remaining functions in the function class in every iteration of the algorithm, making them unimplementable for large function classes. In this paper, we take a representation learning approach to overcome this issue.

Recent works have found that by performing representation learning and using the learned feature to construct exploration bonuses, one can achieve efficient reinforcement learning in rich-observation environments both in theory \citep{doi:10.1126/science.aar6404,pmlr-v139-stooke21a,uehara2021representation} and in practice \citep{guo2022byol}.
However, learning representation is more subtle in POMDPs. The main challenge is two-folded: (1) To efficiently learn a near-optimal policy, the agent must be able to construct a representation of the history that compactly captures the belief state. This causes difficulty in computation, as the agent will suffer from the curse of \textit{long memory} \citep{golowich2022learning,liu2022partially}. (2) Without direct knowledge of the underlying state, it is hard to design an exploration bonus that balances \textit{exploration and exploitation}.
Motivated by the above challenges, the main inquiry we aim to answer in this study is:
 \begin{center}
        \emph{Can we design a representation-learning-aided RL algorithm for POMDPs that is both statistically efficient and amenable to efficient implementation?}
\end{center} 

In this work, we give an affirmative answer to the question. 
% The significance of our work is that our algorithm is sample efficient.
% It is also computationally tractable in the sense that it only requires a supervised learning oracle in MLE. OFU is efficiently achieved by LSVI-UCB. \mw{unfinished paragraph?}
We consider the subclass of \textit{$\gamma$-observabile} and \textit{decodable} POMDP. To handle (1), we consider a short memory learning scheme that considers a fixed-size history instead of the whole history, which is common in POMDP planning \citep{golowich2022planning}. To handle (2), we follow the principle of \textit{optimism in the face of uncertainty (OFU)}, which chooses actions according to the upper bound of the value function. Such a method is typical for balancing exploration and exploitation trade-offs in online MDP learning \citep{cai2020provably,jin2020provably,yang2020reinforcement,ayoub2020model}. We also remark that our UCBVI-type algorithm is computationally efficient, as it avoids intractable optimistic planning (e.g., UCRL \citep{auer2008near}). Specifically, we present a representation learning-based reinforcement learning algorithm for solving POMDPs with low-rank latent transitions, which is common for representation learning in MDP cases \citep{agarwal2020flambe,uehara2021representation, zhan2022pac}.

\begin{table*}[th!]
\centering
\captionsetup{font=footnotesize}

\begin{adjustbox}{width=\textwidth}
\begin{tabular}{|c|c|c|c|}
\hline
Algorithm & Setting & Sample Complexity & Computation\\\hline

\citet{liu2022partially} & Tabular & $\text{poly}(S, A, O, H, \gamma^{-1})/\epsilon^{2}$ & Version Space\\\hline

\citet{golowich2022learning} & Tabular & $\text{quasi-poly}(S, A, O, H, (OA)^{\gamma^{-1}},\epsilon^{-1})$ & $\text{quasi-poly}(S, A, O, H, (OA)^{\gamma^{-1}},\epsilon^{-1})$\\\hline

\citet{uehara2022provably} & Low-rank & $\text{quasi-poly}(H,d, A^{c/\gamma}, \log|\mathcal{M}|)/\epsilon^{2}$ & Version Space\\\hline

\citet{zhan2022pac} & Low-rank & $\text{poly}(H,d, A,1/\gamma,\log|\mathcal{M}|,\log O)/\epsilon^{2}$ & Version Space\\\hline

\citet{wang2022embed} & Low-rank$^*$ & $\text{quasi-poly}(H,d, A^{c/\gamma},\log|\mathcal{M}|)/\epsilon^{2}$ & Version Space\\\hline

\rowcolor{LightCyan}\textsc{PORL$^2$} (ours) & Low-rank & $\text{quasi-poly}(H,d, A^{c/\gamma},\log|\mathcal{M}|)/\epsilon^{2}$ & MLE Oracle+LSVI-LLR\\\hline
\end{tabular}
\end{adjustbox}
\caption{Comparison to all existing works that solve $\gamma$-observable POMDP with formal sample complexity guarantees. Among them, \citet{liu2022partially} and \citet{golowich2022learning} study the tabular setting and their algorithm cannot easily incorporate function approximations. \citet{golowich2022learning} proposed the only algorithm for POMDP that is known to have a formal computational complexity guarantee, and yet the algorithm relies heavily on iterating over the observation space, which cannot be done when the observation space is large or even infinite. Among the algorithms that handle low-rank observable POMDPs, that of \citet{zhan2022pac} escapes the exponential dependency on $\gamma$ at a cost of an explicit dependency on the size of the observation space $O$. Computationally, all existing algorithms other than \citet{golowich2022learning} fall into the category of version-space learners, where the algorithm must keep track of the set remaining plausible functions in the function class and eliminate the ones that are inconsistent with existing observations. Such a procedure will have a computational complexity scale linearly with the size of the function class, which is generally considered to be inefficient, and particularly not amendable for modern neural-network-based implementation. In contrast, our algorithm only relies on calling an MLE computational oracle, which is standard in supervised learning and amendable to efficient implementation
%\footnote{see e.g. \cite{zhang2022making} for a modern implementation using contrastive learning} 
and calling the LSVI-LLR algorithm, which has a poly$(H,A,d)$ computational complexity.}

\label{c}
\end{table*}

\section{Related Work}
Our work is built upon the bodies of literature on both (i) reinforcement learning in POMDPs and (ii) representation learning. In this section, we will focus on the related works in these two directions.
\paragraph{Learning in POMDPs} Provably efficient RL methods for POMDPs have been studied in a number of recent works \citep{li2009multi,guo2016pac,katt2018bayesian,jin2020sample,jafarnia2021online,liu2022partially}. Even when the underlying dynamics of the POMDP are known, simply planning is still hard and relies on \textit{short-memory approximation} \citep{papadimitriou1987complexity}. Moreover, when learning POMDP, the estimation of the model is computationally hard \citep{mossel2005learning}, and learning the POMDPs is statistically intractable \citep{krishnamurthy2016pac}.

However, this doesn't rule out the possibility of finding an efficient algorithm for a particular class for POMDP. A line of work studies a wide sub-classes of POMDPs and has achieved positive results.  \citet{guo2016pac} and \citet{azizzadenesheli2016reinforcement} use spectral methods to learn POMDPs and obtain polynomial sample complexity results without addressing the strategic exploration. \cite{jafarnia2021online} uses the posterior sampling technique to learn POMDPs in the Bayesian setting, with time, the posterior distribution will converge to the true distribution.

The \textit{observability assumption}, or the \textit{weakly-revealing} assumption, has been widely studied for learning in POMDPs. It assumes that the distribution on observations space can recover the distribution on the latent states. It is a very rich subset as it contains the tractable settings in \citep{guo2016pac, jin2020provably} such as the overcomplete setting.  By incorporating observability and assuming a computation oracle such as optimistic planning, \citep{katt2018bayesian,liu2022partially}  achieve favorable polynomial sample complexities. At the same time, \citet{golowich2022planning} proposed algorithms that can achieve quasi-polynomial sample and computational complexity. 

Another common assumption used in POMDPs is the \textit{decodability} assumption, which assumes we can reveal the latent state by $L$-step back histories \citep{efroni2022provable}. \citet{efroni2022provable} obtained polynomial sample complexities. It can be regarded as a generalization of block MDPs \cite{krishnamurthy2016pac}, in which the latent state can be uniquely determined by the current observation. More generally, in an $L$-step \textit{decodable} POMDP,  the latent state can be uniquely decoded from
the most recent history (of observations and actions) of a short length $L$. We remark on the existing literature in block MDPs or decodable POMDPs (e.g. \citet{krishnamurthy2016pac,du2019provably,misra2020kinematic,efroni2022provable,liu2022partially})

\paragraph{Representation Learning in RL}
There has been considerable progress on provable representation learning
for RL in recent literature \citep{https://doi.org/10.48550/arxiv.2006.01107,agarwal2020flambe,DBLP:journals/corr/abs-2102-07035,https://doi.org/10.48550/arxiv.2110.04652}. \citet{DBLP:journals/corr/abs-2102-07035,DBLP:journals/corr/abs-2202-00063} present the model-free approach base on block MDPs and \citep{uehara2021representation,agarwal2020flambe} study the model-based approach through MLE on general low-rank MDP setting. These works focus on fully observable settings, 
% and assume that the model of the  \citep{cai2020provably,jin2021bellman}. \xz{what do you mean by "state feature"? The reader may confuse it with feature learning in representation learning. In general, avoid ambiguous terminologies unless clearly defined in context.} Algorithms developed for this case
only consider an environment with a Markovian transition. For POMDPs, \citet{wang2022embed} learn the representation with a constant past sufficiency assumption, and their sample complexity has an exponential dependence on that constant. \citet{https://doi.org/10.48550/arxiv.2206.12020} uses an actor-critic style algorithm to capture the value link functions with the assumption that the value function is linear in historical trajectory, which can be too strong in practice. \citet{zhan2022pac,liu2022partially} uses MLE to construct a confidence set for the model of POMDP or PSR, and \citet{wang2022embed} assumes a density estimation oracle that controls the error between the estimated model and the real one.  However, both algorithms use optimistic planning, which is computationally inefficient. In comparison, our algorithm achieves optimism by a UCB-type algorithm, and the only necessary oracle is MLE, which is more amenable to computation. To ensure short memory, \citet{wang2022embed} make a constant past sufficiency assumption, which means that the trajectory density from the past $L$ steps could determine the current state distribution. Therefore the sample complexity has an exponential dependence on that constant. As a comparison, our result for $\gamma$-observable POMDPs is more general and achieves better sample complexity. In Table \ref{c}, we compare our work to all prior works that achieve sample efficient learning in $\gamma$-observation POMDPs.

\section{Preliminaries}
\paragraph{Notations}
For any natural number $n\in \NN$, we use $[n]$ to denote the set $\{1,\cdots,n\}$. For vectors we use $\|\cdot\|_p$ to denote $\ell_p$-norm, and we use $\|x\|_{A}$ to denote $ \sqrt{x^\top Ax}$. For a set $\cS$ we use $\Delta(\cS)$ to denote the set of all probability distributions on $\cS$. For an operator $\mathbb{O}: \cS \rightarrow \RR$ and $b\in \Delta(\cS)$, we use   $\mathbb{O} b : \mathcal{O} \rightarrow \RR$ to denote $\int_{\cS}  \mathbb{O}(o\mid s)b(s)\operatorname{ds} $. For two series $\{a_n\}_{n\geq 1}$ and $\{b_n\}_{n\geq 1}$, we use $a_n \leq O(b_n)$ to denote that there exists $C>0$ such that $a_n \leq C\cdot b_n$. 

\textbf{POMDP Setting} In this paper we consider a finite horizon \textit{partially observable Markov decision process (POMDP)}  $\mathcal{P}$, which can be specified as a tuple $\mathcal{P}=\left(\mathcal{S}, \mathcal{A}, H, \mathcal{O}, d_0, \{r_h\}_{h=1}^H, \{\mathbb{P}_h\}_{h=1}^H, \{\mathbb{O}_h\}_{h=1}^H\right)$. Here $\mathcal{S}$ is the state space, $\mathcal{A}$ is a finite set of actions, $H \in \mathbb{N}$ is the episode length,  $\mathcal{O}$ is the set of observations, $d_0$ is the known initial distribution over states. For a given $h\in[H]$, 
$\mathbb{P}_h:\mathcal{S}\times\mathcal{A}\rightarrow \mathcal{S}$ is the transition  kernel and $r_h: \cO \rightarrow [0,1]$ is the reward function at step $h$. For each $a \in \mathcal{A}$, we abuse the notation and use $\mathbb{P}_h(a) $ to denote the probability transition kernel over the state space conditioned on action $a$ in step $h$. $\mathbb{O}_h:\cS\rightarrow \Delta(\cO)$ is the  observation distribution at step $h$, where for $s \in \mathcal{S}, o \in \mathcal{O}$, $\mathbb{O}_h(o\mid s)$ is the probability of observing $o$ while in state $s$ at step $h$. We denote $r=(r_1, \ldots, r_H)$,  $\mathbb{P}=(\mathbb{P}_1,\cdots,\mathbb{P}_H)$ and  $\mathbb{O}=(\mathbb{O}_1,\cdots,\mathbb{O}_H)$.  

 We remark that \textit{Markov Decision Process} (MDP) is a special case of POMDP, where $\cO=\cS$ and $\mathbb{O}_h(o\mid s)=\ind\{o=s\}$ for all $h\in[H],~o\in\cO$ and $s\in \cS$. 
 %In this work, we assume the observation function also has a low-rank structure: $\mathbb{O}^*_h(o\mid s)=O^{*}(o)^{\top}X(s)$ for all $o,s$. 
\paragraph{Low-rank transition kernel} In our work, we focus on the problem where the underlying transition of the environment has a low-rank structure.
\begin{definition}[Low-rank transition \citep{agarwal2020flambe,uehara2021representation}]
A transition kernel $\mathbb{P}_h:\cS\times\cA\rightarrow \cS$ admits a low-rank decomposition of dimension $d$ if there exists two mappings $\omega_h^{*}: \mathcal{S} \rightarrow \mathbb{R}^d \text {, and } \psi_h^{*}: \mathcal{S} \times \mathcal{A} \rightarrow \mathbb{R}^d$ such that $\mathbb{P}_h(s' \mid s, a)=\omega_h^*(s')^{\top}\psi_h^*(s,a)$. 
\end{definition}

\paragraph{Extended POMDP} For notation convenience in the proof, we define the extended POMDP to allow $h<0$. Specifically, we will extend a POMDP from step $h=3-2L$ for a suitable choice of $L$. This particular choice is for the proof only and does not affect the execution of the algorithm in any way. The agent's interaction with the extended POMDP starts with an initial distribution $d_0$. 
For $s,s'\in \cS$, $a\in \cA$ and $h\leq 0$, we define $\phi^*_{h}(s, a)=d_0$, $\mu(s')=d_0(s')e_1$. Hence for a fixed constant $L>0$, all the dummy action and observation sequences $\{o_{3-2L},a_{3-2L},\cdots,o_0,a_0\}$ leads to the same initial state distribution $d_0$. 
A general policy $\pi$ is a tuple $\pi=(\pi_{3-2L}, \ldots, \pi_H)$, where $\pi_h: \mathcal{A}^{2L+h-3} \times \mathcal{O}^{2L+h-2} \rightarrow \Delta(\mathcal{A})$ is a mapping from histories up to step $h$, namely tuples $(o_{3-2L: h}, a_{3-2L: h-1})$, to actions. For  $L$ we will denote the collection of histories up to step $h$ by $\mathcal{H}_h:=\mathcal{A}^{2L+h-3} \times \mathcal{O}^{2L+h-3}$ and the set of policies by $\Pi^{\text {gen}}$, meaning that $\Pi^{\text {gen}}=\Delta\bigl(\prod_{h=1}^H \mathcal{A}^{\mathcal{H}_h}\bigl).$
For $z_h=(o_{h-L+1:h}, a_{h-L+1:h-1})$, we denote $z_{h+1}=c(z_h,a_h,o_{h+1})=(o_{h-L+2:h+1},a_{h-L+2:h})$. 
For any policy $\pi,~h\in[H]$, and positive integer $n$, we use $\pi_h \circ_{n}U(\cA)$ to denote the policy that $(\pi_h \circ_{n}U(\cA))_i=\pi_i$ for $i\leq h-n$ and $(\pi_h \circ_{n}U(\cA))_i=U(\cA)$ for $i\geq h-n+1$, which takes the first $h-n$ actions from $\pi$ and takes remaining actions uniformly.

\paragraph{$L$-memory policy class}In our work, we consider $L$-memory policies. For all $h\in[H]$, let $\cZ_h=\cO^{L}\times\cA^{L-1}$. An element $z_h\in \cZ_h$ is represented as $z_h=(o_{h+1-L:h},a_{h+1-L:h-1})$. A $L$-memory policy is a tuple $\pi=(\pi_{3-2L}, \ldots, \pi_H)$, where $\pi_h: \cZ_h \rightarrow \Delta(\mathcal{A})$.  We define the set of $L$-memory policy by $\Pi^L$. We remark that $\Pi^L\subset \Pi^{\text{gen}}$.

We define the value function for $\pi$ at step 1 by $V_1^{\pi, \mathcal{P}}(o_1)=\mathbb{E}_{o_{2: H} \sim \pi}^{\mathcal{P}}[\sum_{h=1}^H r_h(o_h)\mid o_1]$, namely as the expected reward received by following $\pi$. 

% We also introduce the following performance metrics $$
% \operatorname{Regret}(K) = \sum_{k=1}^K V_1^{\pi^*} - V_1^{\pi^k},
% $$
% \mw{main results are not regret bound. is the definition needed?}
% here $\pi^*\in \Pi^{\text{gen}}$ \mw{in your theorem optimal pi is from $\Pi^L$. inconsistent} is the optimal policy which maximize the value function $V_1^\pi(o_1)$.

%%%%%%%%
\section{Warmup: Low-rank POMDPs with $L-$step decodability}

To begin with, let us use the $L$-step decodable POMDP setting to motivate our algorithm.

\begin{assum}[Low-Rank $L$-step decodability]\label{ass_decodability_linear} 
For all $h\in[H]$, $s_{h+1}\in \cS$ and $(z_h,a_h)\in \cZ_h\times \cA$. There exists $\phi^*_h$ such that $\mathbb{P}_h(s_{h+1}\mid z_h,a_h)=\phi_h^*(z_h,a_h)^{\top}\omega_h^*(s_{h+1})$, where $\omega^*_h$ is the same function as given in Definition 1. 
\end{assum}

Note that Assumption \ref{ass_decodability_linear} is more general than the decodability assumption made in recent works \citep{DBLP:journals/corr/abs-2202-03983,https://doi.org/10.48550/arxiv.2206.12020}. They assume that a suffix of length $L$ of the history suffices to predict the latent state, i.e., there is some decoder $\mathbf{x}:\cZ_i\rightarrow \cS, i\in[H]$ such that $s_h=\mathbf{x}(z_h)$. It leads to our assumption that $\mathbb{P}(s_{h+1}\mid z_h,a_h)=\psi_h^*(\mathbf{x}(z_h),a_h)^{\top}\omega_h^*(s_{h+1})$, but not vice versa.

With Assumption \ref{ass_decodability_linear}, for any $z_h=(o_{h+1-L: h}, a_{h+1-L: h-1})$, $a_h$ and $o_{h+1}$, we have 
\begin{align}\label{eq:mdp_low_decom_L}
&\PP_h(o_{h+1} \mid o_{h+1-L: h}, a_{h+1-L: h}) \\ 
&\qquad\quad
= \Big[\int_{\cS'}\omega^*_h(s')^{\top}\mathbb{O}_{h+1}(o^*_{h+1}\mid s')\operatorname{d}s' \Big] \phi^*_h(z_h,a_h).\nonumber
\end{align}

For any $o_h\in \cO$, we denote \#\mu^*_h(o_h)=\int_{\cS'}\omega^*_h(s')^{\top}\mathbb{O}^*_{h}(o_{h}\mid s')\operatorname{d}s'\label{eq:construct mu}\#

When considering state spaces of arbitrary size, it is necessary to apply function approximation to generalize across states. Representation learning is a natural way to allow such generalization by granting the agent access to a function class of candidate embeddings. Towards this end, we make a commonly used realizability assumption that the given function class contains the true function. 
\begin{assum}[Realizability]\label{ass_realizability_deco}
For all $h\in[H]$, there exists a known model class $\cF=\{(\mathbb{O}_h,\omega_h,\phi_h):\mathbb{O}_h\in \Gcal,\omega_h \in \Omega, \phi_h\in \Phi\}_{h=1}^H$, where $\mathbb{O}_h^*\in\mathcal{G},~\omega_h^*\in \Omega$ and $\phi_h^*\in \Phi$ for all $h\in[H]$.
\end{assum}
We compare this assumption to the work of \citet{DBLP:journals/corr/abs-2202-03983}, which also studies in $L$-decodable POMDPs. Their learning in the function approximation sets assumes that the agent could get access to a function class that contains the $Q^*$ function. Our assumption is more realizable since it is easier to have access to the transition class than the class of $Q^*$ function.
\subsection{Algorithm Description}
In this section, we present our algorithm 
\underline{P}artially \underline{O}bservable  \underline{R}epresentation \underline{L}earning-based \underline{R}einforcement \underline{L}earning
(PORL$^2$).

In Algorithm \ref{PORL_decode}, the agent operates in the underlying POMDP environment $\cP$. In each episode, the agent takes three steps to update the new policy: (i) using a combination of the latest policy $\pi$ and uniform policy $U(\cA)$ to collect data for each time step $h\in[H]$, (ii) calling the MLE oracle to learn and updating the representation $\widehat{\phi}_h$, and (iii) calculating an exploration bonus $\widehat{b}_h$ and applying Least Square Value Iteration for $L$-step Low-Rank POMDPs (LSVI-LLR, cf. Algorithm \ref{LSVI_low_rank}) to update our policy with the combined reward $r+\widehat{b}.$

\begin{algorithm}[t]
    \begin{algorithmic}[1]
    \caption{Partially Observable Representation Learning for $L$-decodable POMDPs (PORL$^2$-decodable)}
    \label{PORL_decode}
        \REQUIRE Representation classes $\{\cF_h\}_{h=0}^{H-1}$, parameters $K$, $\alpha_k$, $\lambda_k$
        \STATE Initialize policy $\pi^0=\{\pi_0, \ldots, \pi_{H-1}\}$ to be arbitrary policies and replay buffers $\cD_h=\varnothing,~\cD'_h=\varnothing$ for all $h$.
        \FOR{$ k \in[K]$} 
            \STATE\label{data}{Data collection from $\pi^{k-1},~\forall h\in [H],$
            $
            \tau_h\sim d^{\pi_h^{k-1}\circ_{L} U(\cA)}_{\cP};~\cD_h=\cD_h\cup
            \{\tau_h\}$,~
            $\tilde{\tau}_h\sim d^{\pi_h^{k-1}\circ_{2L}U(\cA)}_{\cP};~\cD'_h=\cD'_h\cup\{\tilde{\tau}_h\}.
            $
            %&o_{2-L:h-L-1}:a_{2-L:h-L}\sim d_{\cP}^{\pi^{k-1}},~a_{h-L+1:h}\sim U(\cA),~o_{i+1}\sim \mathbb{P}_h^{\cP}(a_{i-L+1:i}:o_{i-L+1:i})\\&~\text{for}~i\in [h-L-1,h-1];~~~\cD_h=\cD_h\cup\{a_{h-L+1:h}:o_{h-L+1:h+1}\}\\& \tilde{o}_{2-L:h-L-2}:\tilde{a}_{2-L:h-2L}\sim d_{\cP}^{\pi^{k-1}},~\tilde{a}_{h-2L+1:h}\sim U(\cA),~\tilde{o}_{i+1}\sim \mathbb{P}_h^{\cP}(\tilde{a}_{i-L+1:i}:\tilde{o}_{i-L+1:i})\\&~\text{for}~i\in[h-L-2,h-1];~~~\cD'_h=\cD'_h\cup\{\tilde{a}_{h-L+1:h}:\tilde{o}_{h-L+1:h+1}\}\$
            }
            \STATE   Learn representations for all $h\in [H]$:
            %$(\widehat{\phi}_h^k,\widehat{\mu}_h^k)=\argmax_{(\phi_h,\mu_h)\in \cF}$ $\mathbb{E}_{\cD\cup\cD'}[\log\phi_h(z_h,a_h)^{\top}\mu_h(o_{h+1})]$.
           $(\widehat{\mathbb{O}}_h^k,\widehat{\omega}_h^k,\widehat{\phi}_h^k)=\argmax_{(\mathbb{O}_h,\omega_h,\psi_h)\in \cF}$ $\mathbb{E}_{\cD\cup\cD'}[\log\phi_h(z_h,a_h)^{\top}\mu_h(o_{h+1})],$ where $\mu_h$ is computed by \eqref{eq:construct mu}.

            \STATE{ Define exploration bonus for all $h\in [H]$:
            $\widehat{b}_h^k(z,a)=\min\bigg\{\alpha_k\sqrt{\widehat{\phi}_h^k(z,a)^{\top}\Sigma_h^{-1}\widehat{\phi}^k_h(z,a)},2\bigg\},
            $
            with $\Sigma_h:=\sum_{z\sim\cD_h}\widehat{\phi}_h^k(z,a)\widehat{\phi}_h^k(z,a)^{\top}+\lambda_{k}I$.
            }\label{bonus}
            \STATE{Set $\pi^k$ as the policy returned by calling Algorithm 2:
            $
            \operatorname{LSVI-LLR}(\{r_{h}+\widehat{b}_{h}^{k}\}_{h=0}^{H-1},\{\widehat{\phi}_{h}^{k}\}_{h=0}^{H-1},\{\widehat{\mu}_{h}^{k}\}_{h=0}^{H-1},\{\mathcal{D}_{h} \cup \mathcal{D}_{h}'\}_{h=0}^{H-1}, \lambda_{k}).
            $
            }
        \ENDFOR
        \STATE{\textbf{Return} $\pi^0,\cdots,\pi^K$}
    \end{algorithmic}
\end{algorithm}
The data collection process has two rounds. In the first round, the agent rollouts the current policy $\pi^{k-1}$ for the first $h-L$ and takes $U(\cA)$ for the remaining steps
 to collect data. In the second round, the agent rollouts $\pi^{k-1}$ for the first $h-2L$ and takes $U(\cA)$ for the remaining steps. After collecting new data and concatenating it with the existing data, the agent learns the representation by calling the MLE oracle on the historical dataset (Line 3). Then, we set the exploration bonus based on the learned representation (Line 4), and we update the policy with the learned representations and bonus-enhanced reward (Line 5).

To update our policy, we apply \textsc{LSVI-LLR} -- an adaptation of the classic LSVI algorithm to L-step low-rank POMDPs. For a given reward $r$ and a model $(\mu, \phi)$, the probability $\mathbb{P}^{\cP}_h(c(z,a,o')\mid z,a)=\mu_h(o')^{\top}\phi_h(z,a)$. Therefore, we have $Q_h(z,a)=r_h(o)+\sum_{o'\in \cD}\phi_h(z,a)^{\top}\mu_h(o_{h+1}')V_{h+1}(z_{h+1})$, where $z_{h+1}=c(z_h,a_h,o_h')$. After inductively computing the Q-function, we can out the greedy policy $\pi^t_h = \argmax_{a}Q_h(z_h,a)$.

 \begin{remark}[Computation]
     Regarding the computational cost, our algorithm only requires calling MLE computation oracle $H$ times in every iteration. Optimism is achieved by adding a bonus to the reward function, which takes $O(Hd^2)$ flops in each iteration to compute with the Sherman-Morrison formula. Importantly, we avoid the optimistic planning procedure that requires iterating over the whole function class $\cF$. Then the time complexity is dominated by the LSVI-LLR step 5 $\pi_h(z) = \operatorname{argmax}_{a\in\Acal}Q_h(z,a)$ for all $(z,a)\in\Dcal^k$, which causes a  $O(AHd^2K)$ running time in every iteration and a total $O(AHd^2K^2)$ running time. Therefore, our algorithm is much more amendable to a practical implementation.
 \end{remark}

\begin{algorithm}[t]
    \begin{algorithmic}[1]
    \caption{Least Square Value Iteration for $L$-step Low-Rank POMDPs (LSVI-LLR)}
    \label{LSVI_low_rank}
        \REQUIRE $\{r_h\}_{h=0}^{H-1}$ 
        , features $\{\phi_h\}_{h=0}^{H-1},\{\mu_{h}\}_{h=0}^{H-1}$, datasets $\{\mathcal{D}_h\}_{h=0}^{H-1}$,  regularization $\lambda$.
        \STATE Initialize $V_H(z)=0$ for any $z\in\Zcal$.
        \FOR{$h=H-1\rightarrow 1$} 
            %\STATE{$\Sigma_h=\sum_{z, a \in \mathcal{D}_h} \phi_h(z, a) \phi_h(z, a)^{\top}+\lambda I$.}
            \STATE{For $(z_h,a_h)\in \cZ_h\times \cA_h$, set \$Q_h(z_h,a_h)&=r_h(o_h)+\sum_{o_{h+1}\in \cD}\phi_h(z_h,a_h)^{\top}\mu_h(o_{+1})\\&\qquad\quad \cdot V_{h+1}(c(z_h,a_h,o_{h+1})),\$
            where $z_h=(o_{h-L+1:h},a_{h-L+1:h-1})$.}
            \STATE{Set $V_h(z)=\max_{a\in \cA}Q_h(z,a)$.}
            %\STATE{Set $\overline{w}_h=\Sigma_h^{-1} \sum_{s'\in \mathcal{D}_h}\overline{V}_{h+1}(s')$.}
            %\STATE{Set $\underline{w}_h\Sigma_h^{-1} \sum_{s'\in \mathcal{D}_h}\underline{V}_{h+1}(s')$}
            %\STATE{Set $\overline{Q}_h(s,a)=\overline{w}_h^{\top}\phi_h(s,a)+r_h(s,a)$, and $\overline{V}_h(s)=\max_a \overline{Q}_h(s, a)$.}
            %\STATE{Set $\underline{Q}_h(s,a)=\underline{w}_h^{\top}\phi_h(s,a)+r_h(s,a)$, and $\underline{V}_h(s)=\min_a \underline{Q}_h(s, a)$.}
            %\FOR{$s'\in \cD$}
                %\STATE{$\widehat{\mu_{s'}}=\Sigma_h^{-1}\sum_{s, a,s'\in \mathcal{D}_h} \phi_h(s, a)$.}
                %\STATE{Set $A=\{s'\mid\widehat{\mu{s'}}\notin\mathbb{R}^{d}_+\}$.}
                
                %\STATE{Set $\overline{Q}_h(s,a)=\overline{Q}_h(s,a)+\ind\{\phi(s,a)^{\top}\mu(s')<0\}\Sigma_h^{-1}(\underline{V}_{h+1}(s')-\overline{V}_{h+1}(s'))$, and $\overline{V}_h(s)=\max_a \overline{Q}_h(s, a)$.}
                %\STATE{Set $\underline{Q}_h(s,a)=\underline{Q}_h(s,a)+\ind\{\phi(s,a)^{\top}\mu(s')<0\}\Sigma_h^{-1}(\overline{V}_{h+1}(s')-\underline{V}_{h+1}(s'))$, and $\underline{V}_h(s)=\min_a \underline{Q}_h(s, a)$.}
            %\ENDFOR
            
            %\STATE{$w_h=\Sigma_h^{-1} \sum_{s'\in \mathcal{D}_h}V_{h+1}(s')\sum_{s, a,s'\in \mathcal{D}_h} \phi_h(s, a) \ind\{\Sigma_h^{-1}\sum_{s, a,s'\in \mathcal{D}_h} \phi_h(s, a)}\in \mathbb{R}^{d}_+\}$.}
            %\STATE{$\text { Set } Q_h(s, a)=w_h^{\top} \phi_h(s, a)+r_h(s, a) \text {, and } V_h(s)=\max _a Q_h(s, a)$.
          \STATE{$\text {Set } \pi_h(z)=\operatorname{argmax}_{a\in \cA} Q_h(z, a)$.
            }
        \ENDFOR
        \STATE{\textbf{Return} $\pi=\{\pi_0,\cdots,\pi_{H-1}\}$.}
    \end{algorithmic}
\end{algorithm}
%Let $\widehat{\phi}=\operatorname{REPLEARN}(\mathcal{D}, \Phi_h, \mathcal{F}_h, \lambda_k, T_k, \ell_k)$. 

\subsection{Analysis}

We have the following guarantee of our Algorithm \ref{PORL_decode}.
\begin{theorem}[Sample complexity of PORL$^2$-$L$-decodable]\label{theorem:sample}
Under Assumption \ref{ass_decodability_linear} and Assumption \ref{ass_realizability_deco}, for  fixed $\delta, \epsilon \in(0,1)$, and let $\overline{\pi}$ be a uniform mixture of $\pi^0, \ldots, \pi^{K-1}$. By setting the parameters as
$$
\begin{aligned}
\alpha_k&=\tilde{\Theta}(\sqrt{k|\mathcal{A}|^{L} \zeta_{k}+
\lambda_{k} d+k \zeta_{k}}),\\
\lambda_k&=\Theta(d \log (|\cF|k/\delta)), \\
\end{aligned}
$$
with prob. at least $1-\delta$, we have $V_1^{\cP,\pi^{*},r}-V_1^{\cP,\overline{\pi},r} \leq \epsilon$, after 
$$
H\cdot K=\cO\left(\frac{H^{5}|\mathcal{A}|^{2L} d^{4}\log (d |\cF| / \delta)}{\epsilon^2}\right).
$$
samples, where $\pi^*$ is the optimal policy of $\Pi^{\operatorname{gen}}$.
\end{theorem}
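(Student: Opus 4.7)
The plan is to run an optimism-in-the-face-of-uncertainty argument tailored to the $L$-memory, low-rank structure. \emph{First,} I would establish an MLE estimation bound for the predictive density. The MLE step fits $\mathbb{P}_h(o_{h+1}\mid z_h, a_h) = \phi^*_h(z_h, a_h)^\top \mu^*_h(o_{h+1})$ on the combined dataset $\cD_h \cup \cD'_h$, and standard MLE-for-log-loss analysis (as in \citet{agarwal2020flambe, uehara2021representation}) gives the in-sample bound $\sum_{k=1}^K \mathbb{E}_{(z_h,a_h)\sim d^{\pi^{k-1}\circ_L U(\cA)}}\operatorname{TV}^2(\widehat{\mathbb{P}}_h^k, \mathbb{P}_h^*) \lesssim \zeta_K$, where $\zeta_K = \log(|\cF|K/\delta)$. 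Leveraging the low-rank decomposition, this converts into a pointwise guarantee: for any test function $g$ with $\|g\|_\infty \le H$,
\begin{align*}
\bigl|\widehat{\phi}_h^k(z,a)^\top(\widehat{\mu}_h^k - \mu^*_h)\, g\bigr| \;\lesssim\; \|\widehat{\phi}_h^k(z,a)\|_{\Sigma_h^{-1}}\,\sqrt{\lambda_k d + k\zeta_k}.
\end{align*}

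\emph{Second,} I would prove optimism. Let $\widehat{V}_h^k$ denote the value function produced by LSVI-LLR using $(\widehat{\phi}_h^k, \widehat{\mu}_h^k)$ and augmented reward $r_h+\widehat{b}_h^k$. A backward induction on $h$ shows $\widehat{V}_1^k(o_1) \ge V_1^{*,\cP}(o_1)$ with high probability under the prescribed $\alpha_k$, since the previous display dominates the per-step model error against any bounded next-step value. A telescoping performance-difference argument on the true POMDP then yields
\begin{align*}
V_1^{*,\cP} - V_1^{\pi^k, \cP} \;\le\; \sum_{h=0}^{H-1} \mathbb{E}^{\pi^k}_{\cP}\bigl[\widehat{b}_h^k(z_h, a_h)\bigr] + (\text{lower-order MLE slack}).
\end{align*}

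\emph{Third,} and this is the main obstacle, I would perform a change of measure from the evaluation distribution $\pi^k$ to the data-collection distribution $\pi^{k-1}\circ_L U(\cA)$ underlying $\Sigma_h$. These two coincide on steps $1,\dots,h-L$ in law (any residual drift between $\pi^{k-1}$ and $\pi^k$ can be absorbed by telescoping across $k$) and differ only on the last $L$ actions where the data uses $U(\cA)$; importance weighting therefore pays a multiplicative factor $|\cA|^L$, which is exactly what appears inside $\alpha_k$. The second dataset $\cD'_h$ with its $2L$-step uniform tail provides the analogous coverage for the observations composing $z_h$ itself---this is necessary because under Assumption~\ref{ass_decodability_linear} the latent state is not a deterministic function of $z_h$, so the elliptic-potential machinery has to be run in $\cZ \times \cA$ rather than in $\cS \times \cA$ as in fully-observable low-rank MDPs.

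\emph{Fourth,} combining the above with the elliptic potential lemma $\sum_{k=1}^K \|\widehat{\phi}_h^k(z_h^k, a_h^k)\|_{\Sigma_h^{-1}}^2 \lesssim d\log K$, a Cauchy--Schwarz step, the prescribed $\alpha_k$, and the choice $\lambda_k = \Theta(d\zeta_k)$, and finally summing over $h\in[H]$ and $k\in[K]$ and dividing by $K$ to pass to the uniform mixture $\bar\pi$, yields the advertised $HK = \tilde{O}(H^5 |\cA|^{2L} d^4 \zeta_K / \epsilon^2)$ sample complexity; the $d^4$ and $|\cA|^{2L}$ factors arise from the combination of the $\alpha_K^2$ term, the regularizer $\lambda_K d$, the elliptic-potential constant, and the change of measure. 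I expect the first, second, and fourth steps to follow the low-rank-MDP template closely once the change-of-measure machinery in the third step is set up; the bulk of the new technical work lies in verifying that the two uniform-tail datasets give simultaneous coverage both for the MLE bound and for the $|\cA|^L$-loss change of measure against any subsequent $L$-memory evaluation policy.
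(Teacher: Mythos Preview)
Your outline has the right four-part shape, but Steps~1--3 contain genuine gaps. The pointwise bound you state in Step~1 is not available: the MLE guarantee is only $\mathbb{E}_{(z,a)\sim\rho_h}\|\widehat{\mathbb{P}}_h(\cdot\mid z,a)-\mathbb{P}^*_h(\cdot\mid z,a)\|_1^2\le\zeta_k$ in expectation over the data distribution, and the full model error is $\widehat\phi^\top\widehat\mu-\phi^{*\top}\mu^*$, not $\widehat\phi^\top(\widehat\mu-\mu^*)$; neither quantity can be dominated pointwise by $\|\widehat\phi(z,a)\|_{\Sigma^{-1}}$, since $\Sigma$ is built from the learned $\widehat\phi$ rather than $\phi^*$. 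Consequently the per-step backward-induction optimism of Step~2 fails. The paper instead establishes only \emph{almost optimism at the initial distribution}, namely $V^{\pi^*,\widehat{\cP}_k,r+\widehat b^k}-V^{\pi^*,\cP,r}\ge -\tilde O\bigl(Ld\sqrt{|\cA|^L\zeta_k}\bigr)$, by applying the simulation lemma and then bounding the resulting \emph{expectation} via an $L$-step-back inequality; there is no pointwise comparison anywhere.

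The change-of-measure reasoning in Step~3 is also incorrect: $d^{\pi^k}_{h-L}$ and the mixture of data-collection distributions do \emph{not} coincide in law, and the discrepancy is not a ``residual drift'' that telescopes. The actual mechanism is the $L$-step-back inequality. Using the low-rank factorization at step $h-L$ together with the \emph{moment-matching policy} of Efroni et~al.\ (which replaces $\pi$'s dependence on the unobserved prefix $z_{h-L}$ by a policy measurable with respect to $s_{h-L+1}$ alone), one writes $\mathbb{E}^\cP_\pi[g(z_h,a_h)]$ as the bilinear form $\bigl\langle\mathbb{E}^\cP_\pi[\phi^*(z_{h-L},a_{h-L})],\,\int\omega^*(s)\,\tilde g(s)\,ds\bigr\rangle$, applies Cauchy--Schwarz in the $\Sigma$-norm, controls the first factor via the elliptical potential lemma across $k$, and only \emph{then} does importance sampling on the last $L$ actions (yielding the $|\cA|^L$) convert the second factor into an expectation under the data distribution where the MLE bound applies. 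The $2L$-uniform dataset $\cD'_h$ exists precisely so that this same inequality can be rerun with the learned features $\widehat\phi$ in the almost-optimism proof. Without the moment-matching construction the bilinear decomposition is unavailable for $L$-memory policies, because $\pi_{h-L+1:h}$ depends on observations contained in $z_{h-L}$ and is not a function of $s_{h-L+1}$ alone.
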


Theorem \ref{theorem:sample} indicates that the sample complexity of PORL$^2$ only depends polynomially on the rank $d$, the history step $L$, horizon $H$, the size of the effective action space for L-memory policy $|\cA|^L$, and the statistical complexity of the function class $\log (|\cF|)$. In particular, Theorem \ref{theorem:sample} avoids direct dependency on the size of the state or observation space. Specifically, we emphasized the term $|\Acal|^{2L}$, which comes from doing importance sampling for $2L$ times in line 3 of Algorithm \ref{PORL_decode}. Our sample complexity also matches the regret bound of \citet{liu2022partially}, Theorem 7.
\section{Low-rank POMDPs with $\gamma$ - Observability}
In this section, we move on to the observability POMDP setting
\begin{assum}[\citet{golowich2022learning,golowich2022planning,even2007value}]\label{observability}
Let $\gamma>0$. For $h \in[H]$, let $\mathbb{O}_h$ be the operator with  $\mathbb{O}_h(\cdot \mid s)$, indexed by states $s$. We say that the operator $\mathbb{O}_h$ satisfies $\gamma$-observability if for each $h$, for any distributions $b, b'$ over states, $\|\mathbb{O}_h b-\mathbb{O}_h b'\|_1 \geq \gamma\|b-b'\|_1$ . A POMDP satisfies $\gamma$-observability if all $h\in [H]$ of satisfy $\gamma$ - observability.
\end{assum}
Assumption \ref{observability} implies that the operator $\mathbb{O}_h:\Delta(S)\rightarrow \Delta(O)$ is an injection.
We use $\tau_i$ to denote $(o_{3-2L:i},~a_{3-2L:i-1})$.
In addition, we make the same realizability assumption (Assumption \ref{ass_realizability_deco}) as in the decodable setting.

 For $\gamma$-observable low-rank POMDPs, we present the assumption that are commonly adopted in the literature to avoid challenges associated with reinforcement learning with function approximation \citep{efroni2022provable}. We state the function approximation and computational oracles below.
\begin{assum}\label{realizability}
There exists a known model class $\cF=\{(\mathbb{O}_h,\omega_h,\psi_h):\mathbb{O}_h\in \Gcal,\omega\in \Omega, \psi_h\in \Psi_h\}_{h=1}^{H}$, where $\mathbb{O}_h^*\in \Gcal,~\omega_h^*\in \Omega$ and $\psi^*_h\in \Psi$ for all $h\in[H]$. 
Recall that $\mathbb{P}_h(o_{h+1} \mid s_h, a_h)=\mathbb{O}^*_{h+1}(o_{h+1}\mid s_{h+1}) \omega_{h+1}^*(s_{h+1})^{\top}\psi_h^*(s_h,a_h)$.
\end{assum}
Compared to Assumption \ref{ass_realizability_deco}, we assume the model class contains the transition information of the latent state in this assumption. 
\subsection{The approximated MDP $\cM$ with $L$-structure}\label{approximated_MDP}
It has been shown that when $\gamma$-observability holds, the POMDP can be approximated by an MDP whose state space is $\cZ=\cO^L\times \cA^{L-1}$ \cite{uehara2022provably}. In particular, for a probability transition kernel $\mathbb{P}_h(z_{h+1} \mid z_h, a_h)$ (for $z_h, z_{h+1} \in \cZ)$ and a reward function $r=(r_1, \ldots, r_H), r_h: \cZ \rightarrow \mathbb{R}$, we will consider MDPs of the form $\mathcal{M}=(\cZ, \mathcal{A}, H, r, \mathbb{P}$). For such an $\mathcal{M}$, we say that $\mathcal{M}$ has $L$-structure if:  the transitions $\mathbb{P}_h(\cdot \mid z_h, a_h)$ have the following property, for $z_h \in \cZ, a_h \in \mathcal{A}$ : writing $z_h=$ $(o_{h-L+1:h}, a_{h-L: h-1}), \mathbb{P}_h(z_{h+1} \mid z_h, a_h)$ is nonzero only for those $z_{h+1}$ of the form $z_{h+1}=( o_{h-L+2: h+1},a_{h-L+1: h})$, where $o_{h+1} \in \mathcal{O}$. 

For a low-rank POMDP $\cP$, $o_{h+1}$ can only be predicted by the whole memory $\{o_1,~a_1,\cdots,~o_{h},a_h\}$. The main observation in \cite{uehara2022provably} is that $o_{h+1}$ can instead be approximated predicted by the $L$-memory $\{o_{h+1-L},~a_{h+1-L},\cdots,~o_{h},a_h\}$ with an error bound $\epsilon_1$, given the memory length is at least $L=  O(\gamma^{-4} \log (d / \epsilon_1))$. In other words, there exists an approximated \textbf{low-rank MDP $\cM$ with $L$-structure} that is close to $\cP$.
For any $\cP=(\mathbb{O},\omega,\psi)$, we can construct an approximated MDP $\cM=\{(\mu_h,\phi_h)\}_{h=1}^H$, where $(\phi,\mu)=q(\mathbb{O},\omega,\psi)$ for an explicit function $q$. The analytical form of $q$ is not important for our discussion and is deferred to Appendix \ref{construct_q_function}. This approximated MDP $\cM$ satisfies that
\#\label{eq:M_construct}
\mathbb{P}_h^{\cP}(o_{h+1}\mid z_h,a_h)=\mu_{h+1}^{\top}(o_{h+1})\phi_h(z_h,a_h).
\#
At the same time, the POMDP $\cP$ satisfies that 
\#\label{eq:P_construct}
\mathbb{P}_h^{\cP}(o_{h+1}\mid \tau_h,a_h)=\mu_{h+1}^{\top}(o_{h+1})\xi_h(\tau_h,a_h),
\#
where the definition of $\xi$ is also defferred to Appendix \ref{construct_q_function}.

%With Assumption \ref{ass_realizability_deco}, we have the model class $\cM=\{(\mu,\phi):\mu\in \Theta,\phi\in \Phi\}$, where $(\mu,\phi)\in \cM$ is induced from $(\mathbb{O},\omega,\phi)\in \cF$ by \eqref{eq:POMDP_realize}.

The constructed $\cM$ retains the structure of low-rank POMDP, and we have the following proposition:
\begin{proposition}\label{prop:app-exist}
For any $\epsilon_1>0$, there exists an L-structured MDP $\cM$ with $L=O(\gamma^{-4} \log (d / \epsilon_1))$, such that for all $\pi \in \Pi^{gen}$ and $h\in [H]$,
\$&
\mathbb{E}_{a_{1: h}, o_{2: h} \sim \pi}\|\mathbb{P}_{h}^{\mathcal{M}}(o_{h+1} \mid z_{h}, a_{h})-\mathbb{P}_{h}^{\mathcal{P}}(o_{h+1} \mid o_{1: h}, a_{1: h})\|_1 \\&\qquad\quad\leq \epsilon_1.
\$
\end{proposition}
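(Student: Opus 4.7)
The plan is to construct $\mathcal{M}$ explicitly from the POMDP parameters via the map $q$ described in the appendix, so that $\mathbb{P}_h^{\mathcal{M}}(o_{h+1}\mid z_h,a_h)$ is the one-step filtering prediction obtained by starting from a fixed reference prior at step $h-L+1$ and updating it by the $L$ observations and $L-1$ actions contained in $z_h=(o_{h-L+1:h},a_{h-L+1:h-1})$. Write $b_h$ for the true belief over $s_h$ given $(o_{1:h},a_{1:h-1})$ and $b_h^L$ for the $L$-truncated belief just described. Then $\mathbb{P}_h^{\mathcal{P}}(o_{h+1}\mid o_{1:h},a_{1:h}) = [\mathbb{O}_{h+1}\mathbb{P}_h(a_h)b_h](o_{h+1})$ and, by construction of $q$ in Section \ref{approximated_MDP}, $\mathbb{P}_h^{\mathcal{M}}(o_{h+1}\mid z_h,a_h) = [\mathbb{O}_{h+1}\mathbb{P}_h(a_h)b_h^L](o_{h+1})$. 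Since $\mathbb{O}_{h+1}$ and $\mathbb{P}_h(a_h)$ are both $\ell_1$-nonexpansive, the proposition reduces to the belief-side bound $\mathbb{E}_{\pi}\|b_h-b_h^L\|_1 \le \epsilon_1$.

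\textbf{Belief contraction.} The main technical tool is a belief-contraction lemma under $\gamma$-observability (in the vein of Golowich-Moitra-Rohatgi): for any two priors at step $h'$ and any action $a$, when the next observation is sampled according to the true prediction law, the expected $\ell_1$ distance between the resulting Bayes-updated posteriors contracts by a factor $1-\Omega(\gamma^4)$ relative to the prior gap. Couple $b_h$ and $b_h^L$ through the same realized actions and observations along the sliding window $[h-L+1,h]$, so that both processes share the full sequence $(a_{h-L+1:h-1},o_{h-L+2:h})$ drawn under $\pi$. Induction over $L$ steps, using the deterministic initial bound $\|b_{h-L+1}-b_{h-L+1}^L\|_1 \le 2$, then yields $\mathbb{E}_\pi\|b_h-b_h^L\|_1 \lesssim (1-c\gamma^4)^L$. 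Choosing $L = \Theta(\gamma^{-4}\log(d/\epsilon_1))$ drives this below $\epsilon_1$; the $\log d$ factor enters when translating the pointwise belief error into a bound that is compatible with the $d$-dimensional low-rank factorization used to define $\phi$ and $\mu$.

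\textbf{Main obstacle and finishing.} The delicate point is applying the contraction in the correct measure. The cleanest statement of the lemma uses observations sampled from the symmetric mixture of the two prior predictive laws, whereas here the trajectory is sampled from $\pi$ acting on the true POMDP $\mathcal{P}$. Bridging the two requires a careful coupling/change-of-measure argument so that the single-step contraction composes across all $L$ filtering steps without losing a $\gamma$ factor at each step and without incurring a likelihood-ratio blow-up — this is where essentially all of the work sits, and where the specific $\gamma^{-4}$ (rather than $\gamma^{-2}$) dependence is determined. Once the belief bound is secured, the claim that $\mathcal{M}$ has $L$-structure is immediate from the definition of $z_h$: given $z_h$ and $(a_h,o_{h+1})$, the next memory $z_{h+1}$ is obtained by sliding the window and appending $(a_h,o_{h+1})$, so the transition support matches the form required in Section \ref{approximated_MDP}.
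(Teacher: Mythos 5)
Your proposal follows essentially the same route as the paper: the paper also constructs $\mathcal{M}$ so that $\mathbb{P}_h^{\mathcal{M}}(o_{h+1}\mid z_h,a_h)$ is the prediction from an $L$-truncated belief $\overline{b}_h$ initialized at a fixed reference prior (a G-optimal design mixture), reduces the claim to $\mathbb{E}_\pi\|b_h-\overline{b}_h\|_1\le\epsilon_1$ via $\ell_1$-nonexpansiveness of $\mathbb{O}_{h+1}$ and $\mathbb{T}_h(a_h)$, and obtains that belief bound from the $\gamma$-observability contraction result (the paper simply cites Theorem 14 of Uehara et al.\ rather than re-deriving the coupling/change-of-measure step you correctly flag as the technical core). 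The argument is sound and matches the paper's proof in structure and in where the $\gamma^{-4}$ and $\log d$ dependencies arise.
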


By Proposition \ref{prop:app-exist}, 
we have that the conditional probability $\mathbb{P}^{\cP}(o \mid z, a)$ is approximately low rank. Now, we can define the value function under $\cM$ as $V^{\pi,\cM,r}(o_1)=\mathbb{E}^{\pi,\cM}_h[\sum_{i=h}^Hr_i\mid o_1]$. With Proposition \ref{prop:app-exist}, we can prove that for a $L$-memory policy $\pi$, the value function of $\pi$ in $\cM$ can effectively approximate the value function under $\cP$.
\begin{lemma}\label{dif}
With $\cM$ defined in \eqref{eq:M_construct}, for any policy $\pi\in \Pi^{\operatorname{gen}}$, we have $$
    |V_{1}^{\pi, \cP, r}(o_1)-V_{1}^{\pi, \cM, r}(o_1)|\leq\frac{H^2\epsilon_1}{2}. 
%\cdot \mathbb{E}_{(o_{1-L: h}, a_{1-L: h-1}) \sim \pi}^{\mathcal{P}}[\sum_{h=1}^{H-1}\|\mathbb{P}_{h}^{\mathcal{P}}(\cdot \mid a_{1: h}, o_{2: h})-\mathbb{P}_{h}^{\mathcal{M}}(\cdot \mid z_{h}, a_{h})\|_{1}]
$$
\end{lemma}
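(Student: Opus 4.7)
My plan is to prove this by a standard simulation-lemma style telescoping argument, where the only nontrivial input is Proposition~\ref{prop:app-exist}, which controls the single-step transition mismatch between $\cP$ and $\cM$ in expected $L_1$ distance.

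First I would introduce the two value recursions in parallel. For $h\in[H]$, let
$$
    \tilde V_h^{\pi}(o_{1:h}, a_{1:h-1}) \;=\; \E^{\cP,\pi}\Bigl[\sum_{i=h}^{H} r_i(o_i)\,\Big|\, o_{1:h},a_{1:h-1}\Bigr],
    \qquad
    V_h^{\pi,\cM,r}(z_h) \;=\; \E^{\cM,\pi}\Bigl[\sum_{i=h}^{H} r_i(o_i)\,\Big|\, z_h\Bigr],
$$
and note that because each $r_h \in [0,1]$, both functions are bounded by $H-h+1$, in particular $\|V^{\pi,\cM,r}_{h+1}\|_\infty \le H-h$. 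Using the Bellman identities $\tilde V_h = r_h + \PP_h^{\cP}\tilde V_{h+1}$ and $V_h^{\cM} = r_h + \PP_h^{\cM} V^{\cM}_{h+1}$, I would apply the add-and-subtract trick
$$
    \tilde V_h^{\pi} - V_h^{\pi,\cM,r} \;=\; \PP_h^{\cP}\bigl(\tilde V_{h+1}^{\pi} - V_{h+1}^{\pi,\cM,r}\bigr) \;+\; \bigl(\PP_h^{\cP}-\PP_h^{\cM}\bigr) V_{h+1}^{\pi,\cM,r},
$$
which splits the difference into a ``propagation'' term and a ``transition mismatch'' term.

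Next I would unroll this recursion from $h=1$ to $h=H$, take absolute values, and take expectations under the true process $\cP$ with policy $\pi$. The propagation terms telescope, while the mismatch terms accumulate; using $|\langle p-q, f\rangle| \le \|f\|_\infty\,\|p-q\|_1$ together with $\|V_{h+1}^{\pi,\cM,r}\|_\infty \le H-h$ yields
$$
    \bigl|V_1^{\pi,\cP,r}(o_1) - V_1^{\pi,\cM,r}(o_1)\bigr|
    \;\le\; \sum_{h=1}^{H-1} (H-h)\,\E^{\cP,\pi}\bigl\|\PP_h^{\cP}(\cdot\mid o_{1:h},a_{1:h}) - \PP_h^{\cM}(\cdot\mid z_h,a_h)\bigr\|_1.
$$
Applying Proposition~\ref{prop:app-exist} to each summand bounds each expectation by $\epsilon_1$, and $\sum_{h=1}^{H-1}(H-h)=H(H-1)/2 \le H^2/2$, which gives exactly the claimed $H^2\epsilon_1/2$.

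The main obstacle I anticipate is purely bookkeeping: $\tilde V_h^{\pi}$ is a function of the full history $(o_{1:h},a_{1:h-1})$, whereas $V_h^{\pi,\cM,r}$ lives on the $L$-memory window $z_h$, so one must be careful that the Bellman decomposition above is meaningful after lifting $V^{\cM}_{h+1}$ to a function of the full history via its last $L$ coordinates, and that the ``transition mismatch'' term is indeed exactly the quantity bounded by Proposition~\ref{prop:app-exist} (where the expectation is taken along the true $\cP$-trajectory, which is what the unrolling produces). Once the definitions are aligned, the remainder is a direct chain-rule bound and a geometric sum.
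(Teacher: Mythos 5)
Your proposal is correct and follows essentially the same route as the paper: a simulation-lemma telescoping of the value difference along the true $\cP$-trajectory, an $\ell_1$--$\ell_\infty$ H\"older bound on each transition-mismatch term, and an application of Proposition~\ref{prop:app-exist} to each summand. The only cosmetic difference is how the constant $H^2/2$ arises --- the paper bounds each step by $\tfrac{H}{2}\,\E\|\PP_h^{\cP}-\PP_h^{\cM}\|_1$ (centering the bounded value-plus-reward term) and sums over $H-1$ steps, whereas you use the step-dependent bound $\|V_{h+1}^{\pi,\cM,r}\|_\infty\le H-h$ and sum the arithmetic series; both yield the same $H^2\epsilon_1/2$.
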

A direct implication of Lemma \ref{dif} is that a near-optimal policy of $\cM$ is also near-optimal in $\cP$.
% , since we have 
% \$
% &V_{1}^{\pi, \cP, r}(o_1)-V_{1}^{\pi^*, \cP, r}(o_1)\\&\leq H^2\epsilon_1+V_{1}^{\pi, \cM, r}(o_1)-V_{1}^{\pi^*, \cM, r}(o_1)\\&\leq H^2\epsilon_1,
% \$
% where $\pi^*$ is the optimal policy of $\Pi^L$. 
Hence we only need to focus on finding the optimal policy in $\Pi^L$, the set of Markovian policy in $\cM$.
%See Appendix \ref{lem:value} for a detailed proof.
%Now we present a proof sketch of Proposition \ref{prop:app-exist}.

%Lemma \ref{masa} shows that $\|\overline{b}_h(o_{h-L:h},a_{h-L:h-1})-b_h(o_{3-2L:h},a_{3-2L:h-1})\|_1$ is small in expectation under any policy $\pi$. Combined with   \eqref{eq:pomdp_low_decom} and \eqref{eq:mdp_low_decom}, we conclude the proof of Proposition \ref{prop:app-exist}.

%In addition, we assume that we have the following block assumption on the approximated MDP $\cM$, which is common in the representation learning literature \citep{DBLP:journals/corr/abs-2202-00063,DBLP:journals/corr/abs-2102-07035}.

%\begin{definition}[Block MDP]\label{block}
%Consider any $h \in[H]$. A Block MDP has an emission distribution o $(\cdot \mid z) \in \Delta(\mathcal{S})$ and a latent state space transition $T_h(z' \mid z, a)$, such that for any $s \in \mathcal{S}, o_h(s \mid z)>0$ for a unique $z \in \mathcal{Z}$ denoted as $\psi_h^{*}(s)$. Together with the ground truth decoder $\psi_h^{*}$, it defines the transitions $\mathbb{P}_h^{\cM}(s' \mid s, a)=\sum_{z' \in \mathcal{Z}} o_h(s' \mid z') T_h(z' \mid \psi_h^{*}(s), a)$.
%\end{definition}
\subsection{Algorithm and Analysis}
PORL$^2$ for $\gamma$-observable POMDPs is presented in Algorithm \ref{POBRIEE gamma}. It is almost identical to Algorithm \ref{PORL_decode} except for the representation learning step, where in the MLE we maximize $\log \mathbb{P}_h^{\cP}(o_{h+1}\mid \tau_h,a_h)$. Note that we have $\mathbb{P}_h^{\cP}(o_{h+1}\mid \tau_h,a_h)=\xi_h(\tau_h,a_h)^{\top}\mu_{h+1}(o_{h+1})$ as shown in \eqref{eq:P_construct}.

Regarding computational cost, similar to Algorithm \ref{PORL_decode}, Algorithm \ref{POBRIEE gamma} only requires a total of $HK$ calls to the MLE oracle and additional polynomial time complexity for executing the LSVI-LLR planner. The main difference lies in the additional transformation $q$ that needs to be performed when calculating the log-likelihood $\log \mathbb{P}_h^{\cP}(o_{h+1}\mid \tau_h,a_h)$. For instance, when using gradient-based optimization to solve the MLE problem, the total number of such calculations performed is usually small and does not scale directly with the size of the function class.

% We also remark that similar to Algorithm \ref{PORL_decode}, Algorithm \ref{POBRIEE gamma} only needs $O(Hd^2)$ flops in calculating matrix inverse if we use Sherman-Morrison formula. Again, the time complexity is dominated by the greedy step in LSVI-LLR which causes a total $O(AHd^2T^2)$ running time.

\begin{algorithm}[t]
    \begin{algorithmic}[1]
    \caption{Partially Observable Representation Learning for $\gamma$-observable POMDPs (PORL$^2$-$\gamma$-observable)}
    \label{POBRIEE gamma}
        \REQUIRE Representation classes $\{\cF_h\}_{h=0}^{H-1}$, parameters $K$, $\alpha_k$, $\lambda_k$
        \STATE Initialize policy $\pi^0=\{\pi_0, \ldots, \pi_{H-1}\}$ to be arbitrary policies and replay buffers $\cD_h=\varnothing,~\cD'_h=\varnothing$ for all $h$.
        \FOR{$ k \in[K]$} 
            \STATE{Data collection from $\pi^{k-1},~\forall h\in [H],$
            $
            \tau_h\sim d^{\pi_h^{k-1}\circ_{L} U(\cA)}_{\cP};~\cD_h=\cD_h\cup
            \{\tau_h\}$,~
            $\tilde{\tau}_h\sim d^{\pi_h^{k-1}\circ_{2L}U(\cA)}_{\cP};~\cD'_h=\cD'_h\cup\{\tilde{\tau}_h\}.
            $
            %&o_{2-L:h-L-1}:a_{2-L:h-L}\sim d_{\cP}^{\pi^{k-1}},~a_{h-L+1:h}\sim U(\cA),~o_{i+1}\sim \mathbb{P}_h^{\cP}(a_{i-L+1:i}:o_{i-L+1:i})\\&~\text{for}~i\in [h-L-1,h-1];~~~\cD_h=\cD_h\cup\{a_{h-L+1:h}:o_{h-L+1:h+1}\}\\& \tilde{o}_{2-L:h-L-2}:\tilde{a}_{2-L:h-2L}\sim d_{\cP}^{\pi^{k-1}},~\tilde{a}_{h-2L+1:h}\sim U(\cA),~\tilde{o}_{i+1}\sim \mathbb{P}_h^{\cP}(\tilde{a}_{i-L+1:i}:\tilde{o}_{i-L+1:i})\\&~\text{for}~i\in[h-L-2,h-1];~~~\cD'_h=\cD'_h\cup\{\tilde{a}_{h-L+1:h}:\tilde{o}_{h-L+1:h+1}\}\$
            }
           
            \STATE
            Learn representations for all $h\in [H]:(\mathbb{O}^k,\omega^k,\psi^k)=\argmax_{(\mathbb{O}_h,\omega_h,\psi_h)\in \cF}$ $\mathbb{E}_{\cD\cup\cD'}[\log\xi_h(\tau_h,a_h)^{\top} \mu_{h+1}(o_{h+1})]$ by \eqref{eq:P_construct}.
            \STATE{Learn the $L$-step feature: $(\widehat{\phi}^k,\widehat{\mu}^k)=q(\mathbb{O}^k,\omega^k,\psi^k)$}
            \STATE{ Define exploration bonus for all $h\in [H]$:
            $\widehat{b}_h^k(z,a)=\min\bigg\{\alpha_k\sqrt{\widehat{\phi}_h^k(z,a)^{\top}\Sigma_h^{-1}\widehat{\phi}^k_h(z,a)},2\bigg\},
            $
            with $\Sigma_h:=\sum_{z\sim\cD_h}\widehat{\phi}_h^k(z,a)\widehat{\phi}_h^k(z,a)^{\top}+\lambda_{k}I$.
            }
            %\STATE{(for $\gamma$-observable low-rank POMDP) Define exploration bonus for all $h\in [H]$:
            %\$\widehat{b}_h^k(z,a)=\min\bigg\{\alpha_k\sqrt{\widehat{\phi}_h^k(z,a)^{\top}\Sigma_h^{-1}\widehat{\phi}^k_h(z,a)},2\bigg\},\$with $\Sigma_h:=\sum_{z\sim\cD_h}\widehat{\phi}_h^k(z,a)\widehat{\phi}_h^k(z,a)^{\top}+\lambda_{k}I$.}
            \STATE{Set $\pi^k$ as the policy returned by:
            $
            \operatorname{LSVI-LLR}(\{r_{h}+\widehat{b}_{h}^{k}\}_{h=0}^{H-1},\{\widehat{\phi}_{h}^{k}\}_{h=0}^{H-1},\{\widehat{\mu}_{h}^{k}\}_{h=0}^{H-1},\{\mathcal{D}_{h} \cup \mathcal{D}_{h}'\}_{h=0}^{H-1}, \lambda_{k}).
            $
            }
        \ENDFOR
        \STATE{\textbf{Return} $\pi^0,\cdots,\pi^K$}
    \end{algorithmic}
\end{algorithm}

The next theorem shows that Algorithm \ref{POBRIEE gamma} achieves the same sample complexity in $\gamma$-observable POMDPs as in Theorem \ref{theorem:sample}.
\begin{theorem}[Pac Bound of PORL$^2$-$\gamma$- observability]\label{theorem:sample_POMDP}
Under Assumption \ref{observability} and Assumption \ref{ass_realizability_deco}, Let $\delta, \epsilon \in(0,1)$ be given, and let $\overline{\pi}$ be a uniform mixture of $\pi^0, \ldots, \pi^{K-1}$. By setting the parameters as
$$
\begin{aligned}
&\alpha_k=\tilde{\Theta}(\sqrt{k|\mathcal{A}|^{L} \zeta_{k}+\lambda_{k} d}),~\lambda_k=\Theta(d \log (|\cF| k/\delta)), \\
& \epsilon_1=\Theta(\epsilon/(H^2 d^{1/2}\gamma^{-4}\log(1/\epsilon)\log (dHA|\cF|/\delta)^{1/2})),\\&L= \Theta( \gamma^{-4} \log (d / \epsilon_1)),~\zeta_k=\Theta(\log(|\cF|k/\delta)/k),
\end{aligned}
$$

with probability at least $1-\delta$, we have $V^{\cP,\pi^{*},r}-V^{\cP,\overline{\pi},r} \leq \epsilon$, after 
$$
H\cdot K= \cO\bigg(\frac{H^{5}|\mathcal{A}|^{2L} d^{4}\log (d |\cF| / \delta)}{\epsilon^2}\bigg).
$$
% \xz{modify the  sample complexity to only depend on $\epsilon$, not $L$ and $\epsilon_1$.}
episodes of interaction with the environment, where $\pi^*$ is the optimal policy of $\Pi^{\operatorname{gen}}$.
\end{theorem}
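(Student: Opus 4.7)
\textbf{Proof proposal for Theorem \ref{theorem:sample_POMDP}.} The plan is to reduce the $\gamma$-observable case to the $L$-step low-rank setting analyzed in Theorem \ref{theorem:sample}, paying an additional approximation error controlled by Lemma \ref{dif}. Concretely, the approximating MDP $\cM$ constructed in Section \ref{approximated_MDP} has an $L$-step low-rank structure with $L=O(\gamma^{-4}\log(d/\epsilon_1))$, so one would expect the machinery of Algorithm \ref{PORL_decode} to apply to $\cM$ directly; the only subtlety is that we do not have sample access to $\cM$ but only to $\cP$. This is exactly why Algorithm \ref{POBRIEE gamma} performs MLE on $\log \mathbb{P}_h^{\cP}(o_{h+1}\mid \tau_h,a_h)=\xi_h(\tau_h,a_h)^{\top}\mu_{h+1}(o_{h+1})$ using the \emph{full} history (through $\xi$), and then extracts the $L$-memory features $(\widehat{\phi},\widehat{\mu})=q(\widehat{\mathbb{O}},\widehat{\omega},\widehat{\psi})$ for the LSVI-LLR planner.

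\textbf{Step 1 (Error decomposition).} For any policy $\pi\in\Pi^{\operatorname{gen}}$, Lemma \ref{dif} yields
\#
V_1^{\cP,\pi^*,r}-V_1^{\cP,\overline{\pi},r}
\;\le\;
H^2\epsilon_1 \;+\; \bigl(V_1^{\cM,\pi^*,r}-V_1^{\cM,\overline{\pi},r}\bigr).\nonumber
\#
Hence it suffices to bound the suboptimality of $\overline{\pi}$ inside $\cM$, which is a genuine $L$-step low-rank MDP on the augmented state $z_h$.

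\textbf{Step 2 (MLE guarantee and transfer through $q$).} Because the data-collection strategy $\pi^{k-1}\circ_L U(\mathcal{A})$ and $\pi^{k-1}\circ_{2L}U(\mathcal{A})$ is identical to that of Algorithm \ref{PORL_decode}, a standard MLE concentration bound (as used for the decodable case) gives, with probability $1-\delta$ and for $\zeta_k=\Theta(\log(|\cF|k/\delta)/k)$,
\$
\mathbb{E}_{\tau_h\sim \mathcal D_h\cup\mathcal D_h'}\bigl\|\widehat{\mathbb{P}}_h^{\,k}(\cdot\mid \tau_h,a_h)-\mathbb{P}_h^{\cP}(\cdot\mid \tau_h,a_h)\bigr\|_1^2 \;\lesssim\; \zeta_k.
\$
Applying Proposition \ref{prop:app-exist} to both the true model and the MLE iterate (which lie in the same realizable class by Assumption \ref{realizability}), we transfer this TV bound from the $\tau_h$-conditional to the $z_h$-conditional distributions, incurring an additional $\epsilon_1$ per step. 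This gives a per-step MLE guarantee for the approximating $L$-memory features $(\widehat{\phi}_h^k,\widehat{\mu}_h^k)=q(\widehat{\mathbb{O}}^k,\widehat{\omega}^k,\widehat{\psi}^k)$ used inside LSVI-LLR.

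\textbf{Step 3 (Optimism and elliptical potential).} With this transferred guarantee in hand, the remainder of the argument mirrors the proof of Theorem \ref{theorem:sample}: we show (i) the bonus $\widehat{b}_h^k$ together with the value-iteration estimate $\widehat{V}_h^k$ is optimistic in $\cM$, i.e.\ $\widehat{V}_1^k(o_1)\ge V_1^{\cM,\pi^*,r}(o_1)-O(H^2\epsilon_1)$; (ii) a simulation-lemma-style unrolling in $\cM$ converts per-step model error into a regret bound driven by $\sum_{k,h}\|\widehat{\phi}_h^k(z,a)\|_{\Sigma_h^{-1}}$; (iii) an elliptical-potential argument in the $d$-dimensional feature space of $\cM$ bounds this sum by $\tilde O(\sqrt{HdK\cdot|\mathcal{A}|^{2L}\zeta_k})$, where the $|\mathcal{A}|^{2L}$ arises from the two importance-sampling changes of measure (policy $\to$ $\pi\circ_L U$ and $\pi\circ_{2L} U$) needed to convert on-policy error under $\pi^k$ into error under the data distribution. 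Setting $\alpha_k,\lambda_k$ as in the theorem statement balances optimism against the statistical error.

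\textbf{Step 4 (Balancing $\epsilon_1$).} Summing over $K$ episodes, averaging to get $\overline{\pi}$, and combining with Step 1 produces a bound of the shape
\$
V_1^{\cP,\pi^*,r}-V_1^{\cP,\overline{\pi},r}
\;\lesssim\; H^2\epsilon_1 \;+\; \sqrt{\tfrac{H^5 |\mathcal{A}|^{2L} d^4 \log(d|\cF|/\delta)}{K}}.
\$
Choosing $\epsilon_1$ as in the theorem makes the first term $\le \epsilon/2$; demanding the second term $\le\epsilon/2$ recovers the claimed sample complexity (note that $L=O(\gamma^{-4}\log(d/\epsilon_1))$ absorbs cleanly into the constants hidden by $\tilde O$).

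\textbf{Main obstacle.} The step I expect to be the most delicate is Step 2, namely translating the MLE bound on $\xi_h(\tau_h,a_h)^{\top}\mu_{h+1}(o_{h+1})$, which concerns the full-history conditional, into a usable estimation error on the $L$-memory feature $\widehat{\phi}_h^k$ produced by $q$. One must show that $q$ is sufficiently Lipschitz in the relevant norm so that small TV error in the POMDP predictor yields a small operator-norm error on the approximate MDP transitions, and that the additional $\epsilon_1$ gap from Proposition \ref{prop:app-exist} does not blow up after being pushed through the covariance design $\Sigma_h$. Once this transfer is established, the optimism and elliptical-potential machinery from Theorem \ref{theorem:sample} carries over essentially verbatim, with $L$ and $|\mathcal{A}|^{2L}$ now depending on $\gamma$ and $\epsilon_1$ rather than being given as part of the model.
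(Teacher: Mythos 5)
Your proposal is correct and follows essentially the same route as the paper: reduce to the approximating $L$-structured MDP $\cM$ via Lemma \ref{dif}, transfer the full-history MLE guarantee to the $L$-memory features at an extra $\epsilon_1$ cost (the paper's Lemma \ref{guarantee_POMDP} together with Lemma \ref{transfer from pomdp to mdp}), rerun the optimism, $L$-step-back, and elliptical-potential arguments of Theorem \ref{theorem:sample} with $\epsilon_1$-corrections, and finally balance $\epsilon_1$ against the statistical term. The obstacle you flag in Step 2 is exactly where the paper does its extra work, handled by the modified $L$-step-back inequalities (Lemmas \ref{$L$-step no} and \ref{$L$-step learn no}) and the occupancy-transfer Lemma \ref{transfer}.
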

We remark on the $|\Acal|^{2L}$ term, which comes from the importance sampling in Algorithm \ref{POBRIEE gamma}, matches the regret bound in \citet{liu2022partially},  and the sample complexity in \cite{golowich2022learning} for $\gamma$-observable POMDP. In addition, it has been shown in \cite{golowich2022planning}, Theorem 6.4 that this sample complexity is necessary for any computational-efficient algorithm.

\section{Highlight of the Analysis}
In this section, we highlight the critical observations in our analysis of Theorem \ref{theorem:sample} and Theorem \ref{theorem:sample_POMDP}.
\paragraph{MLE guarantee.}
The following lemma upper-bounds the reconstruction error with the learned features at any iteration of PORL$^2$.
\begin{lemma}[MLE guarantee]\label{guarantee}
Set $\lambda_k=\Theta(d \log (|\cF|k/\delta))$, for any time step $h\in[H]$, denote $\rho_h$ as the joint distribution for $(w, a)$ in the dataset $\mathcal{D}$ of step $k$, with probability at least $1-\delta$ we have 
\begin{align}
&\mathbb{E}_{w, a \sim \rho}[\|\mathbb{P}_h^{\widehat{\cP}}(\cdot\mid w,a)^{\top}-\mathbb{P}_h^{\cP}(\cdot\mid w,a)^{\top}\|_1^2] \leq \zeta_k,
\end{align}
recall that $\zeta_k=O(\log(|\cF|k/\delta)/k)$.
Here $w$ is the trajectory $\tau$ in the decodable case (Algorithm \ref{PORL_decode}), and $w$ is the state $z$ of the approximated MDP $\Mcal$ in the $\gamma$-observable case (Algorithm \ref{POBRIEE gamma}).
\end{lemma}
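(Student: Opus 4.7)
The argument is a routine application of the standard realizable-MLE concentration inequality for conditional density estimation, combined with a Hellinger-to-total-variation conversion, and finally a union bound over the horizon.

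First, I would set up the MLE as a realizable conditional density estimation problem over a finite class of size $|\cF|$. In the $L$-decodable case (Algorithm \ref{PORL_decode}), Assumption \ref{ass_realizability_deco} guarantees that the true model $(\mathbb{O}_h^*, \omega_h^*, \phi_h^*)$ lies in $\cF$, so the conditional distribution $\mathbb{P}_h^{\cP}(o_{h+1} \mid z_h, a_h) = \phi_h^*(z_h,a_h)^\top \mu_h^*(o_{h+1})$ (via \eqref{eq:mdp_low_decom_L} and \eqref{eq:construct mu}) is realizable by the parametric family being optimized. In the $\gamma$-observable case (Algorithm \ref{POBRIEE gamma}), the factorization \eqref{eq:P_construct} together with the transformation $q$ described in Section \ref{approximated_MDP} ensures that realizability carries over to $\mathbb{P}_h^{\cP}(o_{h+1} \mid \tau_h, a_h) = \xi_h^*(\tau_h, a_h)^\top \mu_{h+1}^*(o_{h+1})$. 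Hence in either case, maximizing the log-likelihood on the dataset of size $k$ is a realizable MLE over a finite hypothesis class.

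Second, I would invoke the standard MLE concentration bound (e.g.\ Proposition 14 of Agarwal et al.\ 2020, adapted for model-based representation learning; see also the MLE chapter of van de Geer): with probability at least $1-\delta$,
\[
\mathbb{E}_{w, a \sim \rho_h}\!\left[H^2\!\bigl(\mathbb{P}_h^{\widehat{\cP}}(\cdot \mid w, a),\, \mathbb{P}_h^{\cP}(\cdot \mid w, a)\bigr)\right] \leq \frac{c \log(|\cF|/\delta)}{k},
\]
where $H^2$ denotes squared Hellinger distance. Converting Hellinger to $\ell_1$ via the pointwise inequality $\|p-q\|_1^2 \le 4H^2(p,q)$ and using Jensen's inequality to commute the square with the outer expectation yields
\[
\mathbb{E}_{w, a \sim \rho_h}\!\left[\|\mathbb{P}_h^{\widehat{\cP}}(\cdot \mid w, a) - \mathbb{P}_h^{\cP}(\cdot \mid w, a)\|_1^2\right] \leq \frac{c'\log(|\cF|/\delta)}{k} = \zeta_k,
\]
and a union bound over $h \in [H]$, absorbed into $\log(|\cF|k/\delta)$, finishes the claim uniformly in $h$.

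\textbf{Main obstacle.} The chief subtlety is that the samples populating $\cD$ are not i.i.d.: each round of data collection uses a policy $\pi^{k-1}$ that depends on all previously collected data. I would handle this with the sequential (martingale) version of the realizable-MLE guarantee, which controls the cumulative log-likelihood ratio as a martingale and delivers the same $\log(|\cF|/\delta)/k$ rate, but now on the mixture distribution $\rho_h$ induced by the sequence of behavior policies. This is by now a standard tool in the online representation-learning literature (used, e.g., in Uehara et al.\ 2021 and Liu et al.\ 2022). The choice $\lambda_k = \Theta(d\log(|\cF|k/\delta))$ enters only implicitly through the logarithmic factor after the union bound and plays no role in the Hellinger estimate itself; it is stated here so that the same $\lambda_k$ can be reused by the bonus construction in the main algorithms. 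Beyond these standard ingredients, the remainder of the proof is bookkeeping.
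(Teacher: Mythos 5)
Your proposal takes essentially the same route as the paper: the paper gives no self-contained proof of this lemma and simply defers to the martingale (sequential) realizable-MLE guarantee of Agarwal et al.\ (2020), which is exactly the tool you invoke, and your handling of realizability, the Hellinger/TV conversion, the non-i.i.d.\ data via the martingale version, and the union bound over $h$ is the intended argument. For the decodable case your proof is complete.

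One point deserves attention in the $\gamma$-observable case. Your argument establishes the guarantee for the conditional law given the \emph{full history}, $\mathbb{P}_h^{\cP}(\cdot \mid \tau_h, a_h)$, since that is the object the MLE in Algorithm \ref{POBRIEE gamma} actually fits. But the lemma is stated with $w = z$, the $L$-memory state of the approximated MDP $\cM$. Passing from the $\tau$-conditional to the $z$-conditional requires the belief-approximation bound (Proposition \ref{prop:app-exist} / Lemma \ref{masa}) and picks up an additive $\epsilon_1$: the paper's appendix version (Lemma \ref{guarantee_POMDP}) accordingly states the second bound as $O(\log(k|\cF|/\delta)/k + \epsilon_1)$ rather than $\zeta_k$ alone. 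Your proposal glosses over this transfer step, so as written it does not quite deliver the $z$-conditional statement; you should either add the $\epsilon_1$ correction explicitly or note that the main-text statement of the lemma is silently absorbing it.
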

% Our MLE procedure for $\gamma$-observable POMDPs differs to the MLE procedure of REP-UCB \citep{uehara2021representation}, since the 
% \zl{not complete}
\paragraph{$L$-step Back Inequality.}
It can be observed that the MLE guarantee ensures the expectation of model estimation error scales as $\tilde{O}(1/k)$ scale under the occupancy distribution of the average policy of $\pi^0,...,\pi^{k-1}$. However, to estimate the performance of the policy $\pi^k$, we must perform a distribution transfer from the distribution induced by $\pi^0,...,\pi^{k-1}$ to the distribution induced by $\pi^k$. To deal with this issue, we generalize the one-step-back technique in \citet{uehara2021representation,agarwal2020flambe} and propose a novel $L$-step-back inequality to handle L-memory policies. This $L$-step back inequality,i.e., moving from $h$ to $h-L$, leverages the bilinear structure in $\mathbb{P}^{\cP}_{h-L}$. 

\begin{lemma}[$L$-step back inequality]
For any $h\in[H]$ and $k\in[K]$, consider function $g_{h}$ that satisfies $g_{h} \in \mathcal{Z}_h \times \mathcal{A}_h \rightarrow \mathbb{R}$, s.t. $\|g_{h}\|_{\infty} \leq B$. Then, for any policy $\pi$, we have 
\$
& \mathbb{E}^{\cP}_{\pi}[g(z_h,a_h)]\\& \leq \mathbb{E}^{\cP}_{(z,a)_{h-L-1}\sim \pi}\bigg[\|\phi(z_{h-L-1}, a_{h-L-1})\|_{\Sigma_{\beta^k_{h-L-1},\phi_{h-L-1}}}^{-1}\bigg]\\& \qquad\quad \cdot \sqrt{|A|^L k\cdot\mathbb{E}_{(\tilde{z}_{h},\tilde{a}_{h}) \sim \gamma_{h}}\{[g(\tilde{z}_{h},\tilde{a}_{h})]^{2}\}+B^{2} \lambda_{k} d},
\$
where we denote $\gamma^k_h=1/k\sum_{i=0}^{k-1}d^{\pi^i}_h(z,a)$ and $\beta^k_h=1/k\sum_{i=0}^{k-1}d^{\pi^i\circ_{2L}U(A)}_h(z,a)$ as the mixture state-action distribution, and $\Sigma_{\beta^k_{h-L-1},\phi}=k\mathbb{E}_{z,a\sim \beta_{h-L-1}}\phi_{h-L-1}(z,a)\phi_{h-L-1}(z,a)^{\top}+\lambda_k I$ as the regularized covariance matrix under the representation $\phi$. 
\end{lemma}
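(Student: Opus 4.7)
The plan is to extend the one-step-back elliptic-potential argument of \citet{agarwal2020flambe,uehara2021representation} from standard low-rank MDPs to the $L$-memory setting, by backing up $L+1$ steps from step $h$ to step $h-L-1$ so that the single bilinear decomposition $\mathbb{P}^{\cP}_{h-L-1}(o_{h-L}\mid z_{h-L-1},a_{h-L-1}) = \mu_{h-L}(o_{h-L})^{\top}\phi_{h-L-1}(z_{h-L-1},a_{h-L-1})$, guaranteed by Assumption \ref{ass_decodability_linear} (and by the approximated MDP $\cM$ in the $\gamma$-observable case), can be pulled outside the expectation. This is precisely the reason Algorithm \ref{PORL_decode} collects a second rollout under $\pi^{k-1}\circ_{2L}U(\cA)$: it guarantees uniform exploration over a long enough trailing window so that $\beta^{k}_{h-L-1}$ provides adequate coverage at step $h-L-1$.

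First I would condition on $(z_{h-L-1},a_{h-L-1})$ and, by integrating the one-step transition $\mu_{h-L}^{\top}\phi_{h-L-1}$ against the policy-induced continuation (observations and $L$-memory states at steps $h-L,\ldots,h$), produce a vector $v_{\pi}\in\mathbb{R}^{d}$ for which
\$
\mathbb{E}^{\cP}_{\pi}[g(z_{h},a_{h})\mid z_{h-L-1},a_{h-L-1}] = \phi_{h-L-1}(z_{h-L-1},a_{h-L-1})^{\top}v_{\pi}.
\$
Cauchy--Schwarz in the $\Sigma$-norm then gives $\phi^{\top}v_{\pi}\leq \|\phi\|_{\Sigma^{-1}}\cdot\|v_{\pi}\|_{\Sigma}$, and taking expectation over $(z,a)_{h-L-1}\sim\pi$ reduces the proof to controlling $\|v_{\pi}\|_{\Sigma}^{2} = k\,\mathbb{E}_{\beta^{k}_{h-L-1}}[(\phi^{\top}v_{\pi})^{2}] + \lambda_{k}\|v_{\pi}\|^{2}$. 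The regularization piece is absorbed by the uniform norm bound $\|v_{\pi}\|^{2}\leq dB^{2}$ (from $\|g\|_{\infty}\leq B$ together with a standard $d$-dimensional bound on the $\mu_{h-L}$ factor), contributing the $B^{2}\lambda_{k}d$ term in the final square root. For the leading piece, Jensen's inequality yields $(\phi^{\top}v_{\pi})^{2}\leq \mathbb{E}^{\pi}[g(z_{h},a_{h})^{2}\mid z_{h-L-1},a_{h-L-1}]$, and the induced distribution over $(z_{h},a_{h})$---namely $\beta^{k}_{h-L-1}$ followed by $\pi$ for the remaining actions---is compared to $\gamma^{k}_{h}$ via importance sampling on the intermediate actions, which costs at most a factor $|\mathcal{A}|^{L}$ since each per-step ratio $\pi(a\mid z)/U(a\mid z)$ is bounded by $|\mathcal{A}|$.

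The main technical obstacle is this distribution-shift step: one must carefully line up the trajectory law implicit in $\beta^{k}_{h-L-1}$ composed with the $\pi$-continuation against the full $\pi^{i}$-rollout law underlying $\gamma^{k}_{h}$, and verify that Algorithm \ref{PORL_decode}'s two-buffer design---with $\cD_{h}$ and $\cD'_{h}$ collected under $\pi^{k-1}\circ_{L}U$ and $\pi^{k-1}\circ_{2L}U$ respectively---supplies exactly the coverage needed to execute the swap while paying only $|\mathcal{A}|^{L}$ (rather than $|\mathcal{A}|^{L+1}$ or worse). A secondary subtlety is to justify that $v_{\pi}$ in the first display can indeed be taken \emph{independent} of $(z_{h-L-1},a_{h-L-1})$ despite the concatenation constraint linking $z_{h-L}$ to its predecessor; this relies on the $L$-structure of $\cM$, under which the future value from $z_{h-L}$ onward depends on $(z_{h-L-1},a_{h-L-1})$ only through the new observation $o_{h-L}$ carried by $\mu_{h-L}$. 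Combining the two bounds and taking a square root then yields the stated inequality.
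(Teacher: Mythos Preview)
There is a genuine gap in what you call a ``secondary subtlety.'' Your proposed bilinear factorization
\[
\mathbb{E}^{\cP}_{\pi}\bigl[g(z_{h},a_{h})\mid z_{h-L-1},a_{h-L-1}\bigr] \;=\; \phi_{h-L-1}(z_{h-L-1},a_{h-L-1})^{\top}v_{\pi}
\]
with $v_{\pi}$ independent of $(z_{h-L-1},a_{h-L-1})$ does \emph{not} hold for an $L$-memory policy $\pi$. Expanding one transition gives
\[
\phi_{h-L-1}(z,a)^{\top}\sum_{o'}\mu_{h-L}(o')\,\mathbb{E}_{\pi}\bigl[g(z_{h},a_{h})\,\big|\,z_{h-L}=c(z,a,o')\bigr],
\]
and the inner continuation depends on the full $z_{h-L}=c(z,a,o')$, not on $o'$ alone: by the concatenation rule $z_{h-L}$ retains the observations and actions $o_{h-2L+1:h-L-1},\,a_{h-2L+1:h-L-1}$ from $(z_{h-L-1},a_{h-L-1})$, and these enter $\pi_{h-L+1},\ldots,\pi_{h}$ at every intermediate step (each of whose $L$-windows overlaps with $z_{h-L-1}$). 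The ``$L$-structure'' only says that the next Markov state is $c(z,a,o')$; it does \emph{not} say the future depends on $o'$ alone. Hence $v_{\pi}$ is a function of $(z_{h-L-1},a_{h-L-1})$ and the Cauchy--Schwarz step cannot be executed as written.

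The paper's proof resolves exactly this obstruction by (i) factoring through the \emph{latent} state rather than the observation, using $\mathbb{P}(s_{h-L+1}\mid z_{h-L},a_{h-L})=\phi_{h-L}(z_{h-L},a_{h-L})^{\top}\omega(s_{h-L+1})$ from Assumption~\ref{ass_decodability_linear}, and (ii) replacing $\pi$ on steps $h-L+1,\ldots,h$ by the \emph{moment matching policy} $\mu^{\pi,h}$ of \citet{DBLP:journals/corr/abs-2202-03983}, which conditions on $x_{h'}=(s_{h-L+1:h'},o_{h-L+1:h'},a_{h-L+1:h'-1})$ instead of $z_{h'}$. Their Lemma~B.2 guarantees the marginal on $(z_{h},a_{h})$ is preserved, yet now the continuation $\tilde g(s_{h-L+1}):=\mathbb{E}^{\cP}_{\mu^{\pi,h}}[g(z_{h},a_{h})\mid s_{h-L+1}]$ depends on $s_{h-L+1}$ only, so the vector $\int \omega(s)\tilde g(s)\,\mathrm{d}s$ is genuinely independent of $(z_{h-L},a_{h-L})$ and Cauchy--Schwarz applies. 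Without this moment-matching swap (or an equivalent device for severing the overlap between successive $L$-windows), the argument stalls at the factorization step; your importance-sampling step is also affected, since it too requires the intermediate actions to be decoupled from the stale portion of the window.
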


\paragraph{Almost Optimism.}
For $\pi^k= \text{LSVI-LLR}(r+\widehat{b},\widehat{\phi},\{\cD^k\cup \cD^{k \prime},\lambda_k\})$, we can prove that the value function of $\pi^k$ is almost optimism at the initial state distribution.
\begin{lemma}[Almost Optimism at the Initial State Distribution]\label{lem:almost opt}  Using the parameters of Theorem \ref{theorem:sample_POMDP}, with probability $1-\delta$, we have for all iterations $k\in[K]$,
\$V^{\pi^{*},\widehat{\cP}_k,r+\widehat{b}^k}-V^{\pi^{*},\cP,r} \geq-\sigma_k,
\$
where $\sigma_k=\cO(Ld (A^L\log (d k|\cF| / \delta)/k)^{1/2})$.
\end{lemma}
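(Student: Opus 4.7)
The plan is to prove almost-optimism by a simulation-lemma decomposition, followed by a per-step comparison that uses the bonus to cover the transition mismatch, with the $L$-step back inequality bridging the gap between the $\pi^*$-induced distribution and the data-collection mixture.

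First, by linearity of the value functional in the reward,
$$V^{\pi^*,\widehat{\cP}_k,r+\widehat{b}^k}-V^{\pi^*,\cP,r}= V^{\pi^*,\widehat{\cP}_k,\widehat{b}^k}-\bigl(V^{\pi^*,\cP,r}-V^{\pi^*,\widehat{\cP}_k,r}\bigr),$$
and applying the simulation lemma to the bracketed difference yields
$$V^{\pi^*,\cP,r}-V^{\pi^*,\widehat{\cP}_k,r}=\sum_{h=1}^{H}\mathbb{E}^{\pi^*,\widehat{\cP}_k}\Bigl[\bigl\langle\mathbb{P}^{\cP}_h(\cdot\mid z_h,a_h)-\mathbb{P}^{\widehat{\cP}_k}_h(\cdot\mid z_h,a_h),\,V^{\pi^*,\cP}_{h+1}\bigr\rangle\Bigr].$$
It then suffices to show that $V^{\pi^*,\widehat{\cP}_k,\widehat{b}^k}$ dominates the right-hand side up to $\sigma_k$. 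In the $\gamma$-observable case one additionally invokes Lemma~\ref{dif} to replace $\cP$ by the $L$-structured MDP $\cM$, incurring an $H^2\epsilon_1/2$ slack that the choice of $\epsilon_1$ absorbs into $\sigma_k$.

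Second, I would dominate each per-step mismatch by the bonus. Using the low-rank factorizations $\mathbb{P}^{\widehat{\cP}_k}_h(o'\mid z,a)=\widehat{\mu}^k_h(o')^\top\widehat{\phi}^k_h(z,a)$ and the analogous factorization for $\cP$ (or $\cM$), the integrand rewrites as $\widehat{\phi}^k_h(z,a)^\top u_h$, where $u_h:=\int V^{\pi^*,\cP}_{h+1}(c(z,a,o'))\bigl(\mu^*_h(o')-\widehat{\mu}^k_h(o')\bigr)\,do'$. Cauchy--Schwarz with $\Sigma_h$ gives
$$\bigl|\widehat{\phi}^k_h(z,a)^\top u_h\bigr|\le \sqrt{\widehat{\phi}^k_h(z,a)^\top\Sigma_h^{-1}\widehat{\phi}^k_h(z,a)}\;\cdot\;\|u_h\|_{\Sigma_h}.$$
The MLE guarantee of Lemma~\ref{guarantee}, together with $\|V^{\pi^*,\cP}_{h+1}\|_\infty\le H$ and the $\lambda_k d$ regularization in $\Sigma_h$, bounds $\|u_h\|_{\Sigma_h}\le \tilde{\cO}(\sqrt{k|\cA|^L\zeta_k+\lambda_k d})=\alpha_k$, so the integrand is no larger than the unclipped part of $\widehat{b}^k_h$---which is exactly how the bonus was calibrated.

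Third---and this is the main obstacle---the above comparison is clean only when averaged against the data-collection mixture, whereas $V^{\pi^*,\widehat{\cP}_k,\widehat{b}^k}$ integrates the bonus along trajectories generated by a general $\pi^*\in\Pi^{\text{gen}}$. To close the loop I would invoke the $L$-step back inequality with $g_h=\widehat{b}^k_h$, transferring the $\pi^*$-expectation to the mixture distribution $\beta^k_{h-L-1}$ under which $\Sigma_{h-L-1}$ is well-conditioned. This step introduces the $|\cA|^L$ importance-sampling factor, a $B^2\lambda_k d$ additive slack, and an elliptic-potential-style residual that together with the MLE rate $\zeta_k=\tilde\Theta(1/k)$ produce the stated $\sigma_k=\cO\bigl(Ld\sqrt{|\cA|^L\log(dk|\cF|/\delta)/k}\bigr)$. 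Summing over $h\in[H]$ and absorbing the lower-order terms into $\sigma_k$ closes the argument. The hardest piece is this third step: coercing the $\pi^*$-rollout into the data-mixture distribution while (i) keeping the $|\cA|^L$ blowup tight, (ii) aligning the peel-back index $h-L-1$ consistently with the definition of $\beta^k_{h-L-1}$, and (iii) in the $\gamma$-observable setting, ensuring that the $L$-step back manipulation is performed on $\cM$ rather than $\cP$ so that the bilinear structure of $\widehat{\phi}^k$ can actually be exploited and the MDP-approximation error of Proposition~\ref{prop:app-exist} cleanly absorbs into $\sigma_k$.
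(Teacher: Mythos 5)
Your toolbox is the right one (simulation lemma under the occupancy $d^{\pi^*}_{\widehat{\cP}_k,h}$, the MLE guarantee, the $L$-step back inequality with the $|\cA|^L$ importance-sampling factor and the shifted index $h-L-1$, and routing through $\cM$ with an $O(H^2\epsilon_1)$ slack in the observable case), but the two central steps are misdirected. First, the pointwise rewrite in your second step is false: the true kernel factorizes as $\mathbb{P}^{\cP}_h(o'\mid z,a)=\mu^*_h(o')^\top\phi^*_h(z,a)$ through the \emph{true} feature, so $\mathbb{P}^{\cP}_h-\mathbb{P}^{\widehat{\cP}_k}_h=\mu^{*\top}\phi^*-\widehat{\mu}^{k\top}\widehat{\phi}^k$ is not of the form $(\mu^*-\widehat{\mu}^k)^\top\widehat{\phi}^k$; moreover your $u_h$ depends on $(z,a)$ through $V^{\pi^*,\cP}_{h+1}(c(z,a,o'))$, so the Cauchy--Schwarz step $|\widehat{\phi}^\top u_h|\le\|\widehat{\phi}\|_{\Sigma_h^{-1}}\|u_h\|_{\Sigma_h}$ with a single fixed vector $u_h$ is not available. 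The paper never attempts such a pointwise bilinear decomposition of the mismatch: it only records that $g_h:=\mathbb{E}_{o'\sim\mathbb{P}^{\widehat{\cP}}_h}[V^{\pi^*,\cM,r}_{h+1}]-\mathbb{E}_{o'\sim\mathbb{P}^{\cP}_h}[V^{\pi^*,\cM,r}_{h+1}]$ satisfies $\|g_h\|_\infty\le1$ and $\mathbb{E}_{\rho_h}[g_h^2]\le\zeta_k$, $\mathbb{E}_{\beta_h}[g_h^2]\le\zeta_k$ via Lemma \ref{guarantee}, and it is this $g_h$ that is fed into the $L$-step back inequality for the learned model; the bilinear structure exploited there is that of the transition \emph{into} step $h-L$, not of the mismatch itself.

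Second, and decisively, you apply the $L$-step back inequality to $g_h=\widehat{b}^k_h$. That yields an \emph{upper} bound on $\sum_h\mathbb{E}_{d^{\pi^*}_{\widehat{\cP},h}}[\widehat{b}^k_h]$, which is exactly what the regret analysis (Theorem \ref{theorem:M-regret}) needs but is the wrong direction for optimism: here the bonus sum is the positive term you must keep, and what must be transferred to the data distribution is the mismatch. The paper's argument applies the $L$-step back inequality to $g_h$ and obtains $\sum_h\mathbb{E}_{d^{\pi^*}_{\widehat{\cP},h}}[g_h]\le c\,\alpha_k\sum_h\mathbb{E}_{d^{\pi^*}_{\widehat{\cP},h-L-1}}\|\widehat{\phi}(z_{h-L-1},a_{h-L-1})\|_{\rho^{-1}_{h-L-1},\widehat{\phi}_{h-L-1}}$ with $\alpha_k=\sqrt{k|\cA|^L\zeta_k+4\lambda_kd}/c$; the summands with $h-L-1\ge1$ are then dominated (after the bonus-concentration Lemma \ref{bonus_concentration} and the clip at $2$) by the bonus terms $\mathbb{E}[\widehat{b}^k_{h-L-1}]$ already present in the simulation-lemma expansion, while the $L+1$ summands with $h-L-1\le0$, where the extended-POMDP convention forces $\widehat{\phi}=e_1$ and hence $\|\widehat{\phi}\|_{\rho^{-1},\widehat{\phi}}\le1/\sqrt{k}$, are what produce $\sigma_k=O(L\alpha_k/\sqrt{k})$. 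Your writeup never identifies this source of the factor $L$ --- you attribute the residual to an elliptic-potential term, which belongs to the regret bound and plays no role in the optimism lemma --- so even granting your transfer step, the stated $\sigma_k$ would not follow from your argument.
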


Then, using Lemma \ref{lem:almost opt} and the simulation lemma, we can establish an upper bound on the policy regret for $\pi^k$, we have:

\#\nonumber
&V_1^{\pi^{*},\cP,r}-V_1^{\pi^{k},\cP,r}
\\\nonumber&\quad\leq V^{\pi^k,\widehat{\cP},r+\widehat{b}^k}-V^{\pi^{k},\cP,r}+\sigma_k\\\nonumber&\quad\leq \sum_{h=0}^{H-1}\mathbb{E}_{z,a}\biggl[\widehat{b}_h^k+\mathbb{E}_{o'\sim \mathbb{P}^{\cP}_h(z,a)}V_{h+1}^{\pi^k,\cP,\widehat{b}_h^k}(z_{h+1})\\\label{eq:optimism}&\qquad\quad-\mathbb{E}_{o'\sim \mathbb{P}^{\widehat{\cP}}_h(z,a)}V_{h+1}^{\pi^k,\cP,\widehat{b}_h^k}(z_{h+1})\biggl]+\sigma_k,
\#
where $z_{h+1}=c(z_h,a_h,o')$. 

Finally, we give an upper bound for  \eqref{eq:optimism}. We adopt the idea of the moment-matching policy in \citep{DBLP:journals/corr/abs-2202-03983,https://doi.org/10.48550/arxiv.2206.12020}, which analyze the latent state of the past $L$-steps. For any $g$, $\mathbb{E}^{\cP}_{\pi}[g(z_h,a_h)]$ can be written in a bilinear form
\$
&\biggl\langle \mathbb{E}^{\cP}_{z_{h-L},a_{h-L}\sim \pi}\phi^{\top}(z_{h-L}, a_{h-L}),\int_{s_{h-L+1} \in \mathcal{S}} \omega(s_{h-L+1})\\&\qquad\cdot \mathbb{E}^{\cP}_{a_{h-L+1: h} \sim \mu^{\pi, h}}[g(z_h,a_h) \mid s_{h-L+1}]\operatorname{ds}_{h-L+1}  \biggl\rangle,
\$
where $\mu$ is the moment matching policy defined in \eqref{eq:moment}.

Now we denote $\gamma_h^k=1/k\sum_{i=0}^{k-1}d^{\pi^i}_h(z,a)$ to represent the mixture state-action distribution. Additionally, we define $\Sigma_{\gamma_h^k,\phi^*}=k\mathbb{E}_{z,a\sim \gamma_h^k}\phi^*(z,a)\phi^*(z,a)^{\top}+\lambda_k I$ as the regularized covariance matrix under the ground truth representation $\phi^*$. To derive an upper bound for the aforementioned bilinear form, we can apply the Cauchy-Schwartz inequality within the norm induced by $\Sigma_{\gamma_h^k,\phi^*}$ as follows:
\#\nonumber
&\biggl\langle \mathbb{E}^{\cP}_{z_{h-L},a_{h-L}\sim \pi}\phi_h^{\top}(z_{h-L}, a_{h-L}),\int_{s_{h-L+1} \in \mathcal{S}} \omega_h(s_{h-L+1})\\\nonumber&\qquad\cdot \mathbb{E}^{\cP}_{a_{h-L+1: h} \sim \mu^{\pi, h}}[g(z_h,a_h) \mid s_{h-L+1}]\operatorname{ds}_{h-L+1}  \biggl\rangle\\\nonumber&
\leq \mathbb{E}^{\widehat{\cP}}_{z_{h-L},a_{h-L}\sim \pi}\|\widehat{\phi}_h^{\top}(z_{h-L}, a_{h-L})\|_{\sum_{\gamma_{h-L},\widehat{\phi}_{h-L}}^{-1}}\\\nonumber&\qquad\cdot \bigg\|\int_{s_{h-L+1} \in \mathcal{S}} \widehat{\omega}_h(s_{h-L+1})\cdot\mathbb{E}^{\widehat{\cP}}_{a_{h-L+1: h} \sim \mu^{\pi, h}}[g(z_h,a_h)  \\\label{eq:optimism2}& \qquad\qquad \mid s_{h-L+1}]\operatorname{ds}_{h-L+1}\bigg\|_{\sum_{\gamma_{h-L},\widehat{\phi}_{h-L}}},
\#
The first term in equation \eqref{eq:optimism2} is associated with the elliptical potential function. By employing the $L$-step back inequality, we can transform the second term in equation \eqref{eq:optimism2} into an expectation over the dataset distribution. This expectation can be controlled by leveraging the MLE guarantee. 
\paragraph{Approximated Transition Error}
For the low-rank POMDPs with $\gamma$-observability, the main idea is to analyze the value function under the approximated MDP $\cM$ instead of the real POMDP transition $\cP$. We remark that is a novel technique in low-rank POMDPs with $\gamma$-observable assumption, which analyzes in a Markovian model with a small approximation error. The detailed proof of Theorem \ref{theorem:sample_POMDP} can be found in Appendix \ref{lem:value}.
%We have the following lemma to illustrate the approximated transition error.
\section{Experiments}
\begin{figure}[t]
  \centering
  
    \includegraphics[width=0.45\textwidth]{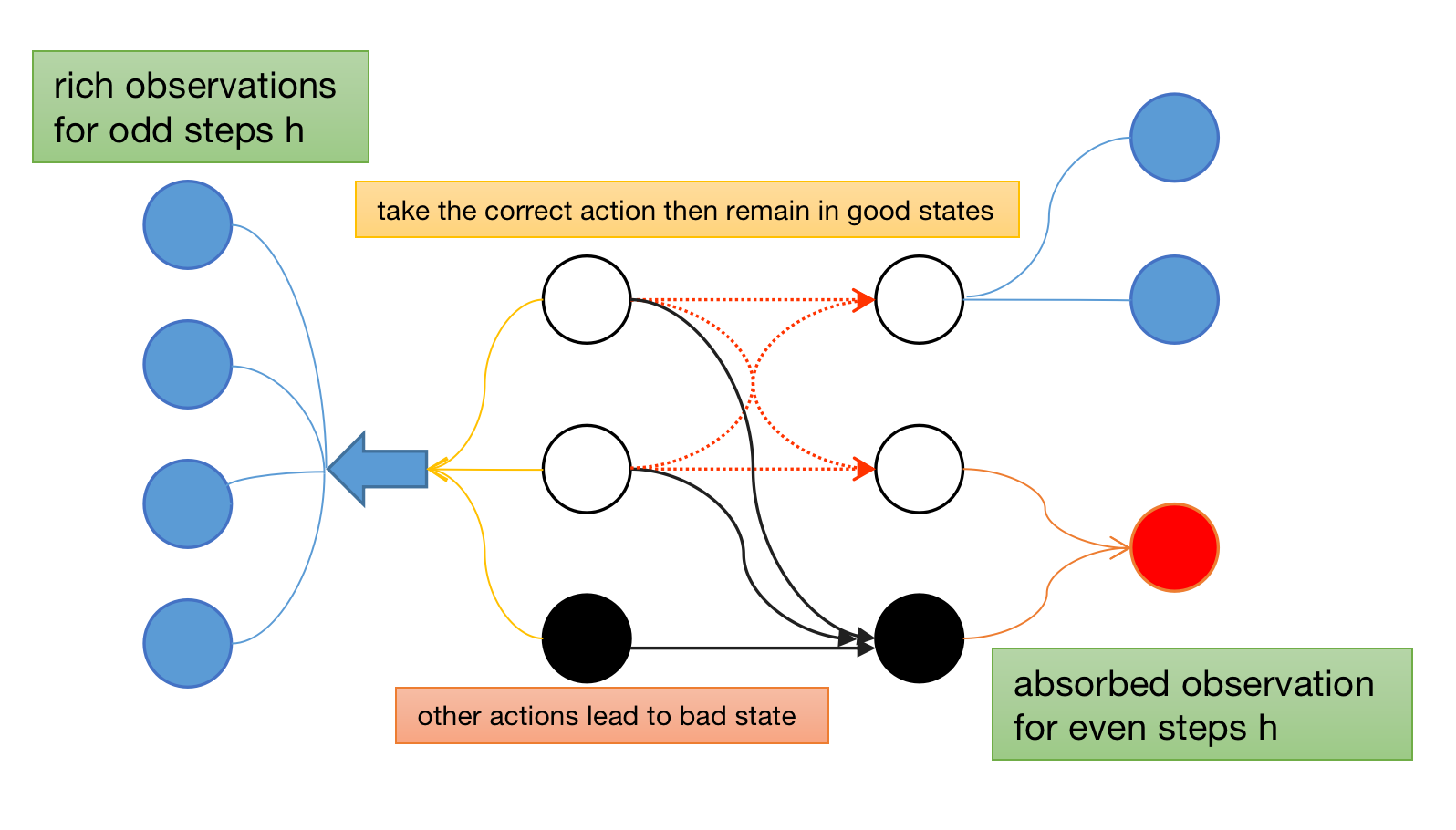}
    \caption{ Visualization of pocomblock, where the blue area represents the rich observations obtained from the latent states. The black arrows illustrate the transition from the good states (depicted in white) to the bad state (depicted in black). Conversely, the red arrows indicate remaining in the good states by taking the correct action. Once the agent transitions to the bad state, it remains in that state for the entire episode, resulting in a failure to achieve the goal. Moreover, when the value of step $h$ is even, there exists an absorbed observation that encompasses all the observations for both the bad state and one of the good states. Consequently, the agent is unable to distinguish between these two states solely based on a one-step observation during such time steps. Thus the name partially observed combination lock, (see Appendix \ref{sed:details} for details).}
    \label{fig:subfig1}
\end{figure}
We evaluate the performance of PORL$^2$ using the partially observed combination lock (pocomblock) as our benchmark, which is inspired by the combination lock benchmark introduced by \citet{misra2019kinematic}. Pocomblock consists of latent states accompanied by rich observations. Further details and specifics regarding the experiments can be found in Appendix \ref{sed:details}.

Next, we provide an overview of pocomblock. Pocomblock has three states: two good states and one bad state. When the agent is in a good state, it will remain in either of the two good states only if the correct action is taken; otherwise, it will transition to the bad state. Once the agent enters the bad state, it becomes impossible to exit. In the bad state, the agent receives a reward of zero.

Then we explain the emission kernel, when the time step $h$ is odd, all the latent states generate rich observations and different states generate different observations. When $h$ is even, one good state still has a rich observation space, while the other states' observations were absorbed by an absorbed observation. Hence at this time, it is unable to distinguish such two states by the current observation. 
\paragraph{Comparision with BRIEE.}We test PORL$^2$ and BRIEE \citep{DBLP:journals/corr/abs-2202-00063} ---the SOTA block MDP algorithm in pocomblock. We note that pocomblock is not a block MDP since the existence of the absorbed observation when $h$ is even. The details and results can be found in Appendix \ref{sed:details}

\section{Conclusion}
We presented a representation learning-based reinforcement learning algorithm for decodable and observable POMDPs, that achieves a polynomial sample complexity guarantee and is amendable to scalable implementation. Future works include empirically designing an efficient and scalable implementation of our algorithm and performing extensive empirical evaluations on public benchmarks, and theoretically extending our algorithm and framework to handle more general decision-making problems such as Predictive State Representations and beyond.
\paragraph{Acknowledgements.} Mengdi Wang acknowledges the support by NSF grants DMS-1953686, IIS-2107304, CMMI-1653435, ONR grant 1006977, and C3.AI.
% In the unusual situation where you want a paper to appear in the
% references without citing it in the main text, use \nocite
%\nocite{langley00}

\newpage

\bibliography{example_paper}
\newpage

%%%%%%%%%%%%%%%%%%%%%%%%%%%%%%%%%%%%%%%%%%%%%%%%%%%%%%%%%%%%%%%%%%%%%%%%%%%%%%%
%%%%%%%%%%%%%%%%%%%%%%%%%%%%%%%%%%%%%%%%%%%%%%%%%%%%%%%%%%%%%%%%%%%%%%%%%%%%%%%
% APPENDIX
%%%%%%%%%%%%%%%%%%%%%%%%%%%%%%%%%%%%%%%%%%%%%%%%%%%%%%%%%%%%%%%%%%%%%%%%%%%%%%%
%%%%%%%%%%%%%%%%%%%%%%%%%%%%%%%%%%%%%%%%%%%%%%%%%%%%%%%%%%%%%%%%%%%%%%%%%%%%%%%
\newpage
\appendix
\onecolumn

\section{Technical Lemma}\label{technical}
In this section we introduce several lemmas which are useful in our proof.
\begin{lemma}[\citep{uehara2021representation}]\label{lemma14}
Consider the following process. For $n=1, \cdots, N, M_n=M_{n-1}+G_n$ with $M_0=\lambda_0 I$ and $G_n$ being a positive semidefinite matrix with eigenvalues upper-bounded by $1$ . We have that:
$$
2 \log \operatorname{det}(M_N)-2 \log \operatorname{det}(\lambda_0 I) \geq \sum_{n=1}^N \operatorname{Tr}(G_n M_{n-1}^{-1}) .
$$
\end{lemma}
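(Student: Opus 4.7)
The plan is a telescoping argument on $\log \det M_n$, reducing the global inequality to a single per-step bound that follows from an elementary scalar inequality on eigenvalues. The pivotal identity is the multiplicative factorization
\[
M_n = M_{n-1}^{1/2}\bigl(I + M_{n-1}^{-1/2} G_n M_{n-1}^{-1/2}\bigr) M_{n-1}^{1/2},
\]
which yields $\log \det(M_n) - \log \det(M_{n-1}) = \log \det(I + A_n)$, where $A_n := M_{n-1}^{-1/2} G_n M_{n-1}^{-1/2}$ is PSD. By the cyclic property of the trace, $\operatorname{Tr}(A_n) = \operatorname{Tr}(G_n M_{n-1}^{-1})$. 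It therefore suffices to prove the per-step inequality $2 \log \det(I + A_n) \geq \operatorname{Tr}(A_n)$ for every $n$.

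To establish the per-step bound I would diagonalize $A_n$ and reduce to the scalar inequality $2 \log(1+x) \geq x$ on $x \in [0,1]$, which is easily verified: both sides vanish at $x=0$, and the derivative of the difference is $2/(1+x) - 1 \geq 0$ on $[0,1]$. If the eigenvalues of $A_n$ are $\mu_{n,1}, \dots, \mu_{n,d} \in [0,1]$, this gives $2\log\det(I + A_n) = \sum_i 2\log(1+\mu_{n,i}) \geq \sum_i \mu_{n,i} = \operatorname{Tr}(A_n)$. The remaining piece is the eigenvalue bound $\mu_{n,i} \leq 1$: from $G_n \preceq I$ (the eigenvalue hypothesis) and $M_{n-1} \succeq M_0 = \lambda_0 I$, we get $A_n \preceq M_{n-1}^{-1/2} I M_{n-1}^{-1/2} = M_{n-1}^{-1} \preceq \lambda_0^{-1} I$, which is $\preceq I$ under the standard regime $\lambda_0 \geq 1$ used throughout the paper (recall $\lambda_k = \Theta(d \log(|\cF|k/\delta))$).

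Finally I would telescope: summing the per-step bound over $n = 1, \dots, N$ yields $2\log\det(M_N) - 2\log\det(M_0) \geq \sum_{n=1}^N \operatorname{Tr}(G_n M_{n-1}^{-1})$, and substituting $M_0 = \lambda_0 I$ gives the claim. The main (and essentially only) subtlety is verifying $A_n \preceq I$; if one wished to drop the assumption $\lambda_0 \geq 1$, a cleaner replacement is the inequality $\log(1+x) \geq x/(1+c)$ on $[0,c]$ with $c = 1/\lambda_0$, which yields the same bound up to constants. Everything else is a direct three-line calculation.
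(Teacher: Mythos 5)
Your proof is correct and is essentially the standard argument behind this lemma (the paper itself gives no proof, citing \citet{uehara2021representation}, whose proof proceeds exactly via the factorization $M_n=M_{n-1}^{1/2}(I+A_n)M_{n-1}^{1/2}$, the scalar bound $2\log(1+x)\ge x$ on $[0,1]$, and telescoping). Your one substantive observation is also right and worth keeping: the statement as written silently requires $\lambda_0\ge 1$ (or at least $\lambda_0$ bounded below by an absolute constant), since otherwise $A_n\preceq\lambda_0^{-1}I$ does not give eigenvalues in $[0,1]$ and the claim can fail outright --- e.g.\ in dimension one with $\lambda_0=0.01$ and $G_1=1$ one gets $2\log(101)\approx 9.2$ on the left versus $100$ on the right --- but this hypothesis is satisfied in the regime $\lambda_k=\Theta(d\log(|\cF|k/\delta))$ where the lemma is invoked.
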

\begin{lemma}[Elliptical Potential Lemma, \citep{uehara2021representation}]\label{potential}
Suppose $\operatorname{Tr}(G_n) \leq B^2$, where $G_n$ being a positive semidefinite matrix with eigenvalues upper-bounded by $1$.
$$
2 \log \operatorname{det}(M_N)-2 \log \operatorname{det}(\lambda_0 I) \leq d \log (1+\frac{N B^2}{d \lambda_0}).
$$
\end{lemma}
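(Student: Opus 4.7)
The plan is to prove the Elliptical Potential Lemma via the classical AM--GM argument applied to the eigenvalues of the matrix $M_N$, which is a standard technique in bandit and RL analyses. The key observation is that determinants are products of eigenvalues, so bounding the arithmetic mean of the eigenvalues of $M_N$ immediately yields a bound on the geometric mean and hence on $\det(M_N)$.

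First, I would unroll the recursion $M_n = M_{n-1} + G_n$ starting from $M_0 = \lambda_0 I$ to obtain the closed form $M_N = \lambda_0 I + \sum_{n=1}^N G_n$. Because each $G_n$ is positive semidefinite with eigenvalues in $[0,1]$ and $\tr(G_n) \le B^2$, I can bound $\tr(M_N) = d \lambda_0 + \sum_{n=1}^N \tr(G_n) \le d\lambda_0 + N B^2$.

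Next, I would invoke the AM--GM inequality on the eigenvalues $\lambda_1(M_N), \ldots, \lambda_d(M_N)$ of the positive definite matrix $M_N$, giving
\[
\det(M_N) \;=\; \prod_{i=1}^d \lambda_i(M_N) \;\le\; \left(\frac{1}{d}\sum_{i=1}^d \lambda_i(M_N)\right)^d \;=\; \left(\frac{\tr(M_N)}{d}\right)^d \;\le\; \left(\lambda_0 + \frac{N B^2}{d}\right)^d.
\]
Taking logarithms and subtracting $\log \det(\lambda_0 I) = d \log \lambda_0$ yields $\log \det(M_N) - \log \det(\lambda_0 I) \le d \log\!\bigl(1 + N B^2/(d\lambda_0)\bigr)$, and then multiplying by the appropriate factor matches the claimed inequality.

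There is no real obstacle here; the argument is short and self-contained once one notices that the trace assumption on $G_n$ is exactly what AM--GM needs. The only subtle point is making sure that $M_N$ is invertible (guaranteed by $\lambda_0 > 0$) so that the determinant and its logarithm are well-defined, and verifying that the factor on the right-hand side agrees with the statement as written.
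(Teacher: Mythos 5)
Your AM--GM argument is the standard (and correct) proof of this determinant potential bound, and since the paper simply cites the lemma from \citet{uehara2021representation} without reproducing a proof, there is nothing in-paper to diverge from: unrolling $M_N=\lambda_0 I+\sum_{n=1}^N G_n$, bounding $\tr(M_N)\le d\lambda_0+NB^2$, and applying AM--GM to the eigenvalues is exactly how this is normally done (the eigenvalue bound on $G_n$ is not even needed for this direction). One point you should not gloss over, though: your chain gives $\log\det(M_N)-\log\det(\lambda_0 I)\le d\log\bigl(1+\tfrac{NB^2}{d\lambda_0}\bigr)$, so after multiplying by $2$ you obtain $2d\log(\cdot)$ on the right, not the $d\log(\cdot)$ printed in the statement. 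The printed version is in fact false as written (take $d=1$, $\lambda_0=1$, $N=1$, $G_1=B^2=1$: the left side is $2\log 2$ while the right side is $\log 2$), so this is a typo in the lemma rather than a gap in your proof; the correct constant is $2d$, and nothing downstream is affected since only the $O(d\log N)$ growth rate is used. You should state the inequality you actually prove rather than asserting that ``the appropriate factor matches the claimed inequality.''
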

Next, we provide an important lemma to ensure the concentration of the bonus term in our algorithm. This lemma is proved in Lemma 39 of \cite{zanette2021cautiously}.
\begin{lemma}[Concentration of the Bonus]\label{bonus_concentration}
Set $\lambda_n=\Theta(d \log (n|\cF| / \delta))$ for any $n$. Let $\mathcal{D}=\{s_i, a_i\}_{i=0}^{n-1}$ be a stochastic sequence of data where $(s_i, a_i) \sim \rho_i$ where $\rho_i$ can depend on the history of time steps $1, \ldots, i-1$. Let $\rho=\frac{1}{n} \sum_{i=0}^{n-1} \rho$ and define
$$
\Sigma_{\rho, \phi}=k \mathbb{E}_\rho[\phi(s, a) \phi^{\top}(s, a)]+\lambda_n I, \quad \widehat{\Sigma}_{n, \phi}=\sum_{i=0}^{n-1} \phi(s_i, a_i) \phi^{\top}(s_i, a_i)+\lambda_n I .
$$
Then, with probability $1-\delta$, we have
$$
\forall n \in \mathbb{N}^{+}, \forall \phi \in \Phi, c_1\|\phi(s, a)\|_{\Sigma_{\rho, \phi}^{-1}} \leq\|\phi(s, a)\|_{\widehat{\Sigma}_{n, \phi}^{-1}} \leq c_2\|\phi(s, a)\|_{\Sigma_{\rho, \phi}^{-1}}.
$$
\end{lemma}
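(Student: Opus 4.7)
The proof plan reduces the weighted-norm claim to a two-sided spectral sandwich $\tfrac{1}{c_2^2}\Sigma_{\rho,\phi} \preceq \widehat{\Sigma}_{n,\phi} \preceq \tfrac{1}{c_1^2}\Sigma_{\rho,\phi}$ that holds uniformly in $\phi \in \Phi$ and $n \in \mathbb{N}^+$. This reduction is immediate from squaring both sides of the target inequality, applying monotonicity of matrix inversion ($A \preceq B \Rightarrow B^{-1} \preceq A^{-1}$ when both are positive definite, which is guaranteed by the regularizer $\lambda_n I$), and reading off the resulting quadratic-form relationships at $x = \phi(s,a)$.

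For the sandwich itself, I would first fix $\phi \in \Phi$ and $n \in \mathbb{N}^+$, and observe that $\widehat{\Sigma}_{n,\phi} - \lambda_n I = \sum_{i=0}^{n-1}\phi(s_i,a_i)\phi(s_i,a_i)^{\top}$ has conditional mean $\Sigma_{\rho,\phi}-\lambda_n I$ under the data-generating filtration $\mathcal{F}_{i-1}$. Defining the matrix martingale difference $Y_i := \phi(s_i,a_i)\phi(s_i,a_i)^{\top} - \mathbb{E}_{\rho_i}[\phi\phi^{\top}]$ and applying a matrix Freedman/Bernstein inequality to the preconditioned sequence $\widetilde{Y}_i := \Sigma_{\rho,\phi}^{-1/2}Y_i\Sigma_{\rho,\phi}^{-1/2}$, whose operator norm is bounded by $2/\lambda_n$ and whose conditional-variance process satisfies $\sum_i\mathbb{E}[\widetilde{Y}_i^2\mid\mathcal{F}_{i-1}] \preceq (1/\lambda_n)(I - \lambda_n\Sigma_{\rho,\phi}^{-1}) \preceq (1/\lambda_n)I$, yields
$$\bigl\|\Sigma_{\rho,\phi}^{-1/2}(\widehat{\Sigma}_{n,\phi}-\Sigma_{\rho,\phi})\Sigma_{\rho,\phi}^{-1/2}\bigr\|_{\mathrm{op}} \;\leq\; C\sqrt{\frac{\log(d/\delta')}{\lambda_n}} + \frac{C\log(d/\delta')}{\lambda_n}$$
with probability at least $1-\delta'$. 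The choice $\lambda_n = \Theta(d\log(n|\cF|/\delta))$ then renders this right-hand side at most $\tfrac{1}{2}$, and the operator-norm identity $\|\Sigma^{-1/2}A\Sigma^{-1/2}\|_{\mathrm{op}} \leq \epsilon \iff -\epsilon\Sigma \preceq A \preceq \epsilon\Sigma$ immediately gives $(1/2)\Sigma_{\rho,\phi} \preceq \widehat{\Sigma}_{n,\phi} \preceq (3/2)\Sigma_{\rho,\phi}$.

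A union bound over $\phi \in \Phi$ (at cost $\log|\cF|$, using that $\Phi$ has cardinality at most $|\cF|$) and a dyadic peeling argument over $n \in \mathbb{N}^+$ (with $\delta'_n = \delta/(2n^2|\cF|)$, whose contributions sum to at most $\delta$) upgrade the single-step statement to the uniform claim. Undoing the preconditioning delivers the spectral sandwich with $(c_1,c_2) = (1/\sqrt{2},\sqrt{2})$ (or any constants compatible with the leading constant absorbed into $\lambda_n$), which when combined with the norm-monotonicity reduction of the first paragraph produces the stated inequality for every $(s,a)$.

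The main obstacle is the adaptive nature of the data-generating distributions $\rho_i$, which depend on the past through the replay buffer $\cD$ and the learned representation, ruling out any IID matrix Chernoff argument and forcing the use of a genuine matrix martingale concentration inequality along with a careful conditional-variance calculation that exploits the rank-one PSD structure of $\phi_i\phi_i^{\top}$; a secondary technical difficulty is extracting a multiplicative rather than merely additive deviation bound, which is exactly what the preconditioning by $\Sigma_{\rho,\phi}^{-1/2}$ achieves, and ensuring anytime validity in $n$, which the dyadic peeling with the $\log(n)$-tuned regularizer handles without paying a polynomial price in $n$.
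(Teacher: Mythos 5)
The paper does not actually prove this lemma: it is imported verbatim with the remark that ``this lemma is proved in Lemma 39 of \citet{zanette2021cautiously},'' so there is no in-paper argument to compare against. Your reconstruction is, in outline, the standard route to such a statement and the skeleton is sound: the reduction from the weighted-norm claim to the two-sided sandwich $\tfrac{1}{c_2^2}\Sigma_{\rho,\phi}\preceq\widehat{\Sigma}_{n,\phi}\preceq\tfrac{1}{c_1^2}\Sigma_{\rho,\phi}$ via inverse monotonicity is correct, the self-bounding conditional-variance computation $\sum_i\mathbb{E}[\widetilde{Y}_i^2\mid\mathcal{F}_{i-1}]\preceq\lambda_n^{-1}(I-\lambda_n\Sigma_{\rho,\phi}^{-1})$ is exactly the right structural fact (it is what lets the additive Freedman deviation be absorbed multiplicatively once $\lambda_n=\Theta(d\log(n|\cF|/\delta))$), and the union bound over $\Phi$ plus the $\delta/(2n^2|\cF|)$ peeling correctly delivers the anytime, uniform-in-$\phi$ statement. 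You are implicitly reading the stray $k$ in the lemma's definition of $\Sigma_{\rho,\phi}$ as $n$ and assuming the standard normalization $\|\phi(s,a)\|_2\le 1$; both are the intended reading, though neither is stated in the paper.

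The one step I would push back on is the application of matrix Freedman to the \emph{preconditioned} sequence $\widetilde{Y}_i=\Sigma_{\rho,\phi}^{-1/2}Y_i\Sigma_{\rho,\phi}^{-1/2}$. The normalizer $\Sigma_{\rho,\phi}$ aggregates all $n$ conditional distributions $\rho_0,\dots,\rho_{n-1}$, each of which depends on the history, so $\Sigma_{\rho,\phi}^{-1/2}$ is not $\mathcal{F}_{i-1}$-measurable for early $i$; consequently $\{\widetilde{Y}_i\}$ is not an adapted martingale difference sequence and Freedman's inequality does not apply to it as written. This is fixable without changing the conclusion: either (i) apply Freedman in its predictable-quadratic-variation form to the \emph{unnormalized} martingale $\sum_i Y_i$ (or, direction-by-direction, to the scalar processes $x^{\top}(\sum_i Y_i)x$ over a $1/2$-net of the sphere at cost $d\log(1/\epsilon)$ in the union bound), obtaining $|x^{\top}(\widehat{\Sigma}_{n,\phi}-\Sigma_{\rho,\phi})x|\le c\sqrt{x^{\top}\Sigma_{\rho,\phi}x\cdot\log(\cdot)}+c\log(\cdot)$ from the same self-bounding variance estimate, and then convert to the multiplicative sandwich by AM--GM together with $x^{\top}\Sigma_{\rho,\phi}x\ge\lambda_n$; or (ii) invoke a self-normalized matrix concentration inequality that is designed for predictable normalizers. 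Either patch preserves your constants and the rest of the argument, so I would call this a repairable technical gap rather than a wrong approach.
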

We also introduce the Simulation Lemma which is frequently used in RL literature. Its proof can be found in \cite{uehara2021representation}.
\begin{lemma}[Simulation Lemma]\label{simulation}
Given two MDPs $(P', r+b)$ and $(P, r)$, for any policy $\pi$, we have:
$$
V_{P', r+b}^\pi-V_{P, r}^\pi=\sum_{h=1}^H \mathbb{E}_{(s_h, a_h) \sim d_{P', h}^\pi}[b_h(s_h, a_h)+\mathbb{E}_{P_h'(s_h' \mid s_h, a_h)}[V_{P, r, h+1}^\pi(s_h')]-\mathbb{E}_{P_h(s_h' \mid s_h, a_h)}[V_{P, r, h+1}^\pi(s_h')]],
$$
and
$$
V_{P', r+b}^\pi-V_{P, r}^\pi=\sum_{h=1}^H \mathbb{E}_{(s_h, a_h) \sim d_{P, h}^\pi}[b_h(s_h, a_h)+\mathbb{E}_{P_h'(s_h' \mid s_h, a_h)}[V_{P, r+b, h+1}^\pi(s_h')]-\mathbb{E}_{P_h(s_h' \mid s_h, a_h)}[V_{P, r+b, h+1}^\pi(s_h')]].
$$
\end{lemma}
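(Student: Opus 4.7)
The plan is to prove both identities by a one-step Bellman decomposition plus a telescoping unroll, a standard performance-difference argument. Let me denote by $V_{h}^{P',r+b,\pi}$ and $V_h^{P,r,\pi}$ the step-$h$ value functions under the two MDPs. The Bellman equation gives, for any state $s$,
\$
V_h^{P',r+b,\pi}(s)-V_h^{P,r,\pi}(s) = \mathbb{E}_{a\sim \pi_h(\cdot\mid s)}\Big[b_h(s,a)+\mathbb{E}_{s'\sim P_h'(\cdot\mid s,a)}V_{h+1}^{P',r+b,\pi}(s')-\mathbb{E}_{s'\sim P_h(\cdot\mid s,a)}V_{h+1}^{P,r,\pi}(s')\Big].
\$

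For the first identity, I would add and subtract $\mathbb{E}_{s'\sim P_h'(\cdot\mid s,a)}V_{h+1}^{P,r,\pi}(s')$ inside the bracket. This splits the right-hand side into a ``local error'' term
\$
b_h(s,a)+\mathbb{E}_{s'\sim P_h'(\cdot\mid s,a)}V_{h+1}^{P,r,\pi}(s')-\mathbb{E}_{s'\sim P_h(\cdot\mid s,a)}V_{h+1}^{P,r,\pi}(s'),
\$
which already has the form appearing in the claim, plus a residual $\mathbb{E}_{s'\sim P_h'(\cdot\mid s,a)}[V_{h+1}^{P',r+b,\pi}(s')-V_{h+1}^{P,r,\pi}(s')]$. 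Taking expectation over $s_h\sim d^{\pi}_{P',h}$ and using that $d^{\pi}_{P',h+1}(s')=\sum_{s,a}d^{\pi}_{P',h}(s)\pi_h(a\mid s)P_h'(s'\mid s,a)$, the residual transports the $V^{P',r+b,\pi}-V^{P,r,\pi}$ gap from step $h$ to step $h+1$ under the \emph{same} occupancy family $\{d^{\pi}_{P',h}\}_h$. Iterating this identity from $h=1$ to $H$ (with terminal value $V^{\pi}_{H+1}\equiv 0$) telescopes to yield the first equation.

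The second identity follows by the symmetric trick: add and subtract $\mathbb{E}_{s'\sim P_h(\cdot\mid s,a)}V_{h+1}^{P',r+b,\pi}(s')$ instead. Now the local term becomes
\$
b_h(s,a)+\mathbb{E}_{s'\sim P_h'(\cdot\mid s,a)}V_{h+1}^{P',r+b,\pi}(s')-\mathbb{E}_{s'\sim P_h(\cdot\mid s,a)}V_{h+1}^{P',r+b,\pi}(s'),
\$
and the residual is $\mathbb{E}_{s'\sim P_h(\cdot\mid s,a)}[V_{h+1}^{P',r+b,\pi}(s')-V_{h+1}^{P,r,\pi}(s')]$, which naturally pushes the unroll under the $P$-induced occupancy $d^{\pi}_{P,h}$. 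Telescoping as before produces the second equation.

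There is no serious obstacle here; the only care needed is bookkeeping which value function and which occupancy appear after the ``add and subtract'' step, because the two identities are the two natural ways to distribute the error depending on whether one wants the residual $V_{h+1}$ gap to be evaluated under $P'$ (giving the $d_{P',h}^\pi$ version) or under $P$ (giving the $d_{P,h}^\pi$ version). A clean induction on $h$ going backwards from $H$, or an explicit unroll of the recursion, produces the sum over $h\in[H]$ with no remainder term because of the boundary condition $V_{H+1}^{\pi}\equiv 0$.
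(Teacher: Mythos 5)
Your argument is correct and is the standard performance-difference/telescoping proof of the simulation lemma; the paper itself does not prove this statement but defers to \citet{uehara2021representation}, and the cited proof proceeds in essentially the same way (one-step Bellman decomposition, add-and-subtract, unroll under the appropriate occupancy with the boundary condition $V_{H+1}^\pi\equiv 0$). One point worth flagging: your derivation of the second identity produces $V^{\pi}_{P',r+b,h+1}$ inside the bracket (value under the transition $P'$ with reward $r+b$), whereas the lemma as printed writes $V^{\pi}_{P,r+b,h+1}$. Your version is the correct one — the variant with occupancy $d^{\pi}_{P,h}$ and inner value $V^{\pi}_{P,r+b,h+1}$ is not an identity in general — and it is also the version actually invoked later in the paper (e.g.\ in the proof of Theorem \ref{theorem:M-regret}, where the inner value is $V^{\pi^k,\widehat{\cP},r+\widehat{b}^k}_{h+1}$ with $\widehat{\cP}$ playing the role of $P'$), so the printed subscript is a typo rather than an error in your proof. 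A final remark: because your argument is purely algebraic, it applies verbatim when $P'$ is a learned, possibly unnormalized kernel $\widehat{P}$, which is the observation the paper makes immediately after stating the lemma.
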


We note that since both the occupancy measure and Bellman updates under $\widehat{P}$ are defined in the exact same way as if $\widehat{P}$ is a proper probability matrix, the classic simulation lemma also applies to $\widehat{P}$.
\section{Proof of Theorem \ref{theorem:sample}}
First we define a few mixture notations that will be used extensively in the analysis. For any $k$, we define $\overline{\pi}^k$ to be $\sum_{i=0}^{k-1}\pi^k/k$ . 
For a fixed $(k,h)$, $\rho_h^k$ is the distribution on $\Zcal\times\Acal$ induced by applying $\{\bar{\pi}_i^k\}_{i=1}^h$ and then do uniformly random action for $L$ times.
We define the distribution of $(z,a)$
For any $h$, define $\rho_{h}^{k}\in \Delta(\mathcal{Z} \times \mathcal{A})$ as follows:
$$
\rho_{h}^{k}(z, a)=d_{\cP,h}^{\overline{\pi}_h^{k}\circ_{L} U(\cA)} (z,a).
$$
Similarly, we define the distribution on $\Zcal\times\Acal$ after doing random action for $2L$ times. For any $k, h \geq 1$, we define $\beta_{h}^{k}$ as follows:
\$
\beta_{h}^{k}(z, a)= d_{\cP,h}^{\overline{\pi}_h^{k}\circ_{2L} U(\cA)} (z,a).
\$
We also define the distribution induced by $\bar{\pi}^k$. For any $k, h$, we also define $\gamma_{h}^{k} \in \Delta(\mathcal{S} \times \mathcal{A})$ as follows:
$$
\gamma_{h}^{k}(z, a)=d_{\cP,h}^{\overline{\pi}^{k}}(z, a).
$$
For notational simplification, we denote $\|x\|_{\rho,\phi}=\|x\|_{\sum_{\rho,\phi}}$ and $\|x\|_{\rho^{-1},\phi}=\|x\|_{\sum^{-1}_{\rho},\phi}$ for $x\in \mathbb{R}^d$ and $\phi\in \Phi$, where $\sum_{\rho,\phi}=\mathbb{E}_{z,a\sim \rho}[\phi(z,a)\phi^{\top}(z,a)]+\lambda I$. We define $\|x\|_{\beta,\phi}$, $\|x\|_{\gamma,\phi}$ in the same way.

By Lemma \ref{guarantee}, with probability at least $1-\delta$, we assume that we have learned a $\widehat{\phi}$ and $\widehat{\mu}$ such that 
\$
& \mathbb{E}_{(z, a) \sim \rho_h}[\|\mathbb{P}_h^{\widehat{\cP}}(\cdot \mid z, a)^{\top} -\mathbb{P}_h^{\cP}(\cdot \mid z, a)^{\top}\|_1^2] \leq \zeta_k, \forall h \in[H] . \\
&\mathbb{E}_{(z, a) \sim \beta_h}[\|\mathbb{P}_h^{\widehat{\cP}}(\cdot \mid z, a)^{\top}-\mathbb{P}_h^{\cP}(\cdot \mid z, a)^{\top}\|_1^2] \leq \zeta_k, \forall h \in[H].
\$
Here $\hat{\^P}_h(o'\mid z,a) = \hat{\phi}(z,a)\hat{\mu}(o')$ and $\zeta_k = O(\log(|\Mcal|)/K)$. In later arguments we will condition on these events.
Next, we prove the almost optimism lemma restated below.
\begin{lemma}[Almost Optimism]\label{lem:almost}
Consider episode $k\in[K]$ and set
$$
\alpha_{k}=\sqrt{k|\mathcal{A}|^{L} \zeta_{k}+4 \lambda_{k} d} / c, \quad \lambda_{k}=O(d \log (|\cF| k / \delta)) .
$$
where $c$ is an absolute constant. Then with probability $1-\delta$,
$$
V^{\pi^{*},\widehat{\cP}_k,r+\widehat{b}^k}-V^{\pi^{*},\cP,r} \geq-\frac{c\alpha_kL}{\sqrt{k}} 
$$
holds for all $k \in[K]$.
\end{lemma}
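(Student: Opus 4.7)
The plan is to prove almost optimism by the standard ``bonus-dominates-error'' argument, adapted to the $L$-memory POMDP setting through the $L$-step back inequality. First, I would apply the Simulation Lemma (the second form stated in Appendix~\ref{technical}) to expand
\begin{align*}
V^{\pi^{*},\widehat{\cP}_k,r+\widehat{b}^k}-V^{\pi^{*},\cP,r}
= \sum_{h=0}^{H-1}\mathbb{E}^{\pi^{*},\cP}\bigl[\widehat{b}_h^k(z_h,a_h)+\Delta_h(z_h,a_h)\bigr],
\end{align*}
where $\Delta_h(z,a) = \mathbb{E}_{o'\sim \widehat{\mathbb{P}}_h}[V^{\pi^{*},\cP,r+\widehat{b}^k}_{h+1}] - \mathbb{E}_{o'\sim \mathbb{P}_h}[V^{\pi^{*},\cP,r+\widehat{b}^k}_{h+1}]$ is the one-step Bellman error of the estimated transition against the bonus-enhanced value function. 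Since that value function lies in $[0,O(H)]$ and the bonus is nonnegative, it suffices to show that $\sum_h \mathbb{E}^{\pi^{*},\cP}[\widehat{b}_h^k]$ dominates $\sum_h \mathbb{E}^{\pi^{*},\cP}[|\Delta_h|]$ up to the advertised slack $c\alpha_k L/\sqrt{k}$. I will use the pointwise bound $|\Delta_h(z,a)|\leq O(H)\cdot\|\mathbb{P}_h^{\widehat{\cP}}(\cdot\mid z,a) - \mathbb{P}_h^{\cP}(\cdot\mid z,a)\|_1$.

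To control each expected Bellman error under $\pi^{*}$, I would invoke the $L$-step back inequality with $g=\Delta_h$ and $B=O(H)$:
\begin{align*}
\mathbb{E}^{\cP}_{\pi^{*}}[|\Delta_h|]
\leq
\mathbb{E}^{\cP}_{(z,a)_{h-L-1}\sim \pi^{*}}\bigl[\|\widehat{\phi}_{h-L-1}\|_{\Sigma_{\beta^k_{h-L-1},\widehat{\phi}_{h-L-1}}^{-1}}\bigr]
\cdot \sqrt{|\mathcal{A}|^L k\,\mathbb{E}_{\gamma^k_h}[\Delta_h^2]+H^2\lambda_k d}.
\end{align*}
This shifts the state--action distribution from $\pi^{*}$ at step $h$ back to $\pi^{*}$ at step $h-L-1$, paying the importance-sampling factor $|\mathcal{A}|^L$ across the intermediate $L$ actions and aligning the inner expectation with the mixtures $\gamma^k,\beta^k$ against which we have statistical control. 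Combining with the MLE guarantee (Lemma~\ref{guarantee}), which after a further importance-sampling step (absorbed into the $|\mathcal{A}|^L$ factor) yields $\mathbb{E}_{\gamma^k_h}[\Delta_h^2]\leq O(H^2\zeta_k)$, and then plugging in $\alpha_k=\sqrt{k|\mathcal{A}|^L\zeta_k+4\lambda_k d}/c$, the right-hand side becomes $O(\alpha_k)\cdot \mathbb{E}^{\cP}_{\pi^{*}}[\|\widehat{\phi}_{h-L-1}\|_{\Sigma^{-1}}]$, which exactly matches the form of a bonus evaluated at step $h-L-1$.

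I would then transfer this bound to the actual bonus: the bonus-concentration Lemma~\ref{bonus_concentration} swaps the population covariance $\Sigma_{\beta^k,\widehat\phi}$ for the empirical covariance $\widehat\Sigma_{h-L-1}$ used in $\widehat{b}^k_{h-L-1}$ up to absolute constants, and the Simulation Lemma applied once more transfers the outer expectation from $\cP$ to $\widehat{\cP}$ with only a second-order remainder. After summing over $h$, each Bellman-error term at step $h$ is absorbed by a bonus term at step $h-L-1$, so the two sums differ only by the $L+1$ boundary terms where $h-L-1$ falls outside $[0,H-1]$. Each such boundary bonus is itself controlled by the elliptical-potential Lemma~\ref{potential}, giving an amortized bound of $O(\alpha_k/\sqrt{k})$ per term, which sums to the final slack $c\alpha_k L/\sqrt{k}$.

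The main obstacle is the careful juggling of three distinct distributions: the MLE guarantee lives on the exploratory mixtures $\rho^k_h,\beta^k_h$ (with $L$ or $2L$ trailing uniform actions), the $L$-step back inequality naturally outputs the on-policy mixture $\gamma^k_h$, and the bonus is built from the empirical covariance of $\mathcal{D}_h$. The constant $c$ inside $\alpha_k$ must be large enough to absorb both the $|\mathcal{A}|^L$ importance-sampling cost and the constants from covariance concentration, yet small enough that the final slack scales only as $\alpha_k L/\sqrt{k}$. A secondary subtlety is the treatment of early time steps $h<L+1$, where the dummy-action/observation extension of $\cP$ introduced earlier in the paper ensures the $L$-step back inequality applies uniformly across the entire horizon.
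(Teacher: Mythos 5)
Your overall architecture (simulation lemma, then an $L$-step back inequality to shift the Bellman error to step $h-L-1$ where it is absorbed by the bonus, with $L+1$ boundary terms handled by the dummy extension) is the right one, but the specific decomposition you chose breaks the key matching step. You expand the value difference against the occupancy measure of the \emph{true} model, $\mathbb{E}^{\pi^*,\cP}$. The $L$-step back inequality is proved by inserting the low-rank factorization of the transition at step $h-L$, so under $\cP$ it necessarily produces the \emph{true} feature, i.e.\ a term of the form $\mathbb{E}^{\cP}_{\pi^*}\big[\|\phi^*_{h-L-1}\|_{\Sigma^{-1}_{\beta,\phi^*}}\big]$ (this is Lemma~\ref{$L$-step}); the quantity you wrote, with $\widehat{\phi}$ inside an expectation under $\cP$, is not a statement the paper proves and does not follow from the true model's bilinear structure. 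Since the bonus is $\min(\alpha_k\|\widehat{\phi}_{h-L-1}\|_{\widehat{\Sigma}^{-1}},2)$, built from the \emph{learned} feature and the empirical covariance of $\cD_{h-L-1}$ (which concentrates around $\Sigma_{\rho,\widehat{\phi}}$), there is no pointwise or in-expectation relation between $\|\phi^*\|_{\Sigma^{-1}_{\beta,\phi^*}}$ and $\|\widehat{\phi}\|_{\Sigma^{-1}_{\rho,\widehat{\phi}}}$ that would let the bonus absorb the error: Lemma~\ref{bonus_concentration} only relates empirical and population covariances \emph{for the same feature}. Your fallback --- apply the simulation lemma once more to transfer the outer expectation from $\cP$ to $\widehat{\cP}$ with a ``second-order remainder'' --- is circular, since bounding the gap between $d^{\pi^*}_{\cP,h}$ and $d^{\pi^*}_{\widehat{\cP},h}$ requires controlling the model error along $\pi^*$'s own trajectory, which is exactly the distribution-shift problem the lemma is trying to solve. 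The paper avoids all of this by using the \emph{other} form of the simulation lemma, putting both the bonus and the Bellman error under $d^{\pi^*}_{\widehat{\cP},h}$, and then invoking the $L$-step back inequality \emph{for the learned model} (Lemma~\ref{$L$-step learn}), whose decomposition uses $\widehat{\phi}^\top\widehat{\omega}$ and hence legitimately outputs $\mathbb{E}^{\widehat{\cP}}_{\pi^*}\big[\|\widehat{\phi}_{h-L-1}\|_{\rho^{-1}_{h-L-1},\widehat{\phi}_{h-L-1}}\big]$ --- exactly the concentrated bonus at step $h-L-1$ under the same measure.

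Two smaller points. First, your Bellman error is taken against the bonus-enhanced value $V^{\pi^*,\cdot,r+\widehat{b}}$, which is only $O(H)$-bounded, so your slack comes out as $O(H\alpha_kL/\sqrt{k})$; the paper's decomposition keeps the plain $V^{\pi^*,\cP,r}_{h+1}$ inside the error term, which is what allows $\|g_h\|_\infty\le 1$ and the claimed $\alpha_kL/\sqrt{k}$ rate. Second, the $L+1$ boundary terms are not handled by the elliptical potential lemma (which only gives a bound amortized over $k$, not a per-$k$ bound valid for every $k$ simultaneously); they are handled exactly, because for $h\le 0$ the dummy extension forces $\phi_h\equiv e_1$ and hence $\|\widehat{\phi}_h\|_{\rho_h^{-1},\widehat{\phi}_h}=1/\sqrt{k+\lambda}<1/\sqrt{k}$.
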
 
\begin{proof}
By Lemma \ref{simulation}, we have
\#\nonumber
&V^{\pi^{*},\widehat{\cP},r+\widehat{b}}-V^{\pi^{*},\cP,r} \\\nonumber
&=\sum_{h=0}^{H-1} \mathbb{E}_{(z_{h}, a_{h}) \sim d_{\widehat{\cP},h}^{\pi^{*}}}[\widehat{b}_{h}(z_{h}, a_{h})+\mathbb{E}_{o'\sim\mathbb{P}_{h}^{\widehat{\cP}}(\cdot\mid z_{h}, a_{h})}[V_{ h+1}^{\pi^{*},\cP,r}(z_{h+1}')]-\mathbb{E}_{o'\sim\mathbb{P}_{h}^{\cP}(\cdot\mid z_{h}, a_{h})}[V_{ h+1}^{\pi^{*},\cP,r}(z_{h+1}')]] \\\nonumber
&\geq \sum_{h=0}^{H-1} \mathbb{E}_{(z_{h}, a_{h}) \sim d_{\widehat{P}, h}^{\pi^{*}}}\biggl[\min (c \alpha_{k}\|\widehat{\phi}_{h}(z, a)\|_{\Sigma_{\rho_{h}, \widehat{\phi}_{h}}^{-1}}, 2)+\mathbb{E}_{o'\sim\mathbb{P}_{h}^{\widehat{\cP}}(\cdot\mid z_{h}, a_{h})}[V_{ h+1}^{\pi^{*},\cP,r}(z_{h+1}')]\\ \label{eq:almost1}&\qquad\quad-\mathbb{E}_{o'\sim\mathbb{P}_{h}^{\cP}(\cdot\mid z_{h}, a_{h})}[V_{ h+1}^{\pi^{*},\cP,r}(z_{h+1}')]\biggl],
\#

where in the last step, we replace empirical covariance by population covariance by Lemma \ref{bonus_concentration}, here $c$ is an absolute constant. Here $(z,a)\sim d_{\Pcal,h}^\pi$ means that $(z,a)$ is sampled from transition $\Pcal$ and policy $\pi$. We define
$$
g_{h}(z, a)=\mathbb{E}_{o_h'\sim\mathbb{P}_{h}^{\widehat{\cP}}(\cdot\mid z, a)}[V_{ h+1}^{\pi^{*},\cP,r}(c(z,a,o'_h)
)]-\mathbb{E}_{o_h'\sim \mathbb{P}_{h}^{\cP}(\cdot\mid z, a)}[V_{ h+1}^{\pi^{*},\cP,r}(c(z,a,o'_h))].
$$
Notice that we have $\|g_{h}\|_{\infty} \leq 1$.
By Lemma \ref{guarantee}, for any $(z,a)$ we have 
\$
\mathbb{E}_{(z, a) \sim \rho_{h}}[g_{h}^{2}(z,a)] \leq \zeta_{k},~
\mathbb{E}_{(z, a) \sim \beta_{h}}[g_{h}^{2}(z,a)] \leq \zeta_{k}.
\$
By Lemma \ref{$L$-step learn}, we have:
\#\nonumber&\sum_{h=0}^{H-1} \mathbb{E}_{(z, a) \sim d_{\widehat{\cP}, h}^{\pi^{*}}}[g_{h}(z, a)] \\\leq \nonumber&\sum_{h=1}^H \mathbb{E}_{(z_{h-L-1},a_{h-L-1})\sim d^\pi_{\hat{\Pcal},h-L-1}}\|\widehat{\phi}^{\top}(z_{h-L-1}, a_{h-L-1})\|_{\rho_{h-L-1}^{-1},\widehat{\phi}_{h-L-1}}\\\nonumber& \qquad\quad \cdot \sqrt{|A|^{L} k\cdot\mathbb{E}_{(\tilde{z}_{h},\tilde{a}_{h}) \sim \beta_{h}}\{[g(\tilde{z}_{h},\tilde{a}_{h})]^{2}\}+B^{2} \lambda_{k} d}\\&
\leq \nonumber\sum_{h=1}^H \mathbb{E}_{(z_{h-L-1},a_{h-L-1})\sim d^\pi_{\hat{\Pcal},h-L-1}}\|\widehat{\phi}^{\top}(z_{h-L-1}, a_{h-L-1})\|_{\rho_{h-L-1}^{-1},\widehat{\phi}_{h-L-1}}\\\nonumber& \qquad\quad \cdot \sqrt{|A|^{L} k\zeta_k+B^{2} \lambda_{k} d}
\\&\label{eq:almost2}= c\alpha_k\sum_{h=1}^H \mathbb{E}_{(z_{h-L-1},a_{h-L-1})\sim d^\pi_{\hat{\Pcal},h-L-1}}\|\widehat{\phi}^{\top}(z_{h-L-1}, a_{h-L-1})\|_{\rho_{h-L-1}^{-1},\widehat{\phi}_{h-L-1}},\#
where in the last step we define
$$
\alpha_{k}=\sqrt{k|\mathcal{A}|^{L} \zeta_{k}+4 \lambda_{k} d} / c.
$$
For $h\leq 0$, we have 
\#\label{eq:h<0}
\|\widehat{\phi}_h^{\top}(z_h,a_h)\|_{\rho_h^{-1},\widehat{\phi}_h}=\sqrt{\frac{1}{k+\lambda}}<\frac{1}{\sqrt{k}}.
\#
Combine \eqref{eq:almost1}, \eqref{eq:almost2} and \eqref{eq:h<0}, we conclude the proof.
\end{proof} 
With Lemma \ref{lem:almost}, we prove that under MDP $\cP$ we can effectively learn the optimal policy with Algorithm \ref{PORL_decode}.
\begin{theorem}\label{theorem:M-regret}
With probability $1-\delta$, we have
$$
 \sum_{k=1}^KV^{\pi^{*},\cP,r}-V^{\pi^k,\cP,r} \leq O(H^{2}|\mathcal{A}|^{L} d^{2} K^{1/ 2} \log (d K|\cF| / \delta)^{1 / 2}).
$$
\end{theorem}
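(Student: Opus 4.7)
The plan is to reduce the per-episode regret $V^{\pi^*,\cP,r}-V^{\pi^k,\cP,r}$ to a sum of bonus terms along the executed trajectory, plus a model-estimation remainder, and then sum the bonus terms across $k$ using the elliptical potential lemma. First I will invoke almost optimism (Lemma \ref{lem:almost}) together with the fact that $\pi^k$ is the greedy policy of LSVI-LLR under $(\widehat{\cP}_k, r+\widehat{b}^k)$ so $V^{\pi^k,\widehat{\cP}_k,r+\widehat{b}^k}\geq V^{\pi^*,\widehat{\cP}_k,r+\widehat{b}^k}$. Combining these gives
\$
V^{\pi^*,\cP,r}-V^{\pi^k,\cP,r}\leq \bigl(V^{\pi^k,\widehat{\cP}_k,r+\widehat{b}^k}-V^{\pi^k,\cP,r}\bigr)+\sigma_k.
\$
I then apply the Simulation Lemma (Lemma \ref{simulation}) to the right-hand side, yielding a sum over $h$ of expectations under $d_{\cP,h}^{\pi^k}$ of (i) the bonus $\widehat{b}_h^k$ and (ii) a one-step Bellman gap $g_h^k(z,a):=\mathbb{E}_{o'\sim\mathbb{P}_h^{\widehat{\cP}_k}}[V^{\pi^k,\cP,r+\widehat{b}^k}_{h+1}]-\mathbb{E}_{o'\sim\mathbb{P}_h^{\cP}}[V^{\pi^k,\cP,r+\widehat{b}^k}_{h+1}]$ with $\|g_h^k\|_\infty\leq H$.

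Next I must convert both expectations, which are under $d_{\cP,h}^{\pi^k}$, into quantities that can be controlled by the MLE guarantee (which holds under $\rho_h^k$ and $\beta_h^k$). For the bonus, I bound $\mathbb{E}_{d_{\cP,h}^{\pi^k}}\widehat{b}_h^k(z,a)\leq \alpha_k\,\mathbb{E}_{d_{\cP,h}^{\pi^k}}\|\widehat{\phi}_h^k(z,a)\|_{\Sigma_{\rho_h^k,\widehat{\phi}_h^k}^{-1}}$ (using Lemma \ref{bonus_concentration} to replace the empirical covariance by its population version). For the Bellman gap, I invoke the $L$-step-back inequality (same form as used in \eqref{eq:almost2}) which, together with the MLE bound $\mathbb{E}_{\beta_h^k}[g_h^k(\cdot)^2]\leq H^2\zeta_k$, produces an expression of the same shape as the bonus term but with an extra $\sqrt{|\mathcal{A}|^L k\zeta_k+\lambda_k d}$ factor. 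With the choice of $\alpha_k$ in Lemma \ref{lem:almost}, this is dominated by $\alpha_k\,\mathbb{E}_{d_{\cP,h}^{\pi^k}}\|\widehat{\phi}_h^k(z,a)\|_{\rho_{h-L-1}^{-1},\widehat{\phi}_{h-L-1}}$ (and the edge cases $h\leq 0$ are handled exactly as in \eqref{eq:h<0}). Thus, up to additive $\sigma_k$ and constants,
\$
V^{\pi^*,\cP,r}-V^{\pi^k,\cP,r}\leq C\,\alpha_k\sum_{h=0}^{H-1}\mathbb{E}_{d_{\cP,h-L-1}^{\pi^k}}\|\widehat{\phi}_{h-L-1}^k(z,a)\|_{\Sigma_{\rho_{h-L-1}^k,\widehat{\phi}_{h-L-1}^k}^{-1}}.
\$

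Finally, summing over $k=1,\ldots,K$, I use the elliptical potential lemma (Lemma \ref{potential}) applied at each layer $h$ to the (Lemma \ref{bonus_concentration}-equivalent) empirical covariance generated by the sequence of collected data, which gives $\sum_k \|\widehat{\phi}_{h-L-1}^k\|_{\Sigma^{-1}}\leq O(\sqrt{dK\log(1+K/(d\lambda_k))})$. Plugging in $\alpha_k=\tilde{\Theta}(\sqrt{K|\mathcal{A}|^L\zeta_k+\lambda_k d})$ and $\lambda_k=\Theta(d\log(K|\cF|/\delta))$, and noting $K\zeta_k=O(\log(K|\cF|/\delta))$, yields the advertised $O(H^2|\mathcal{A}|^L d^2 K^{1/2}\log(dK|\cF|/\delta)^{1/2})$ bound; the $\sigma_k$ term sums to lower order. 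The main obstacle is the second step above: namely verifying that the $L$-step-back inequality can be applied to the Bellman-gap term with the correct dependence on $|\mathcal{A}|^L$ (coming from the $L$ importance-sampling steps baked into the definition of $\beta_h^k$) and the correct covariance norm matching the bonus, so that both contributions can be combined into a single elliptical-potential sum.
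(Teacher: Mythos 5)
Your proposal follows essentially the same route as the paper: almost optimism plus the greedy property of LSVI-LLR, the simulation lemma to decompose into bonus and Bellman-gap terms, the $L$-step-back inequality combined with the MLE guarantee to control both terms, and the elliptical potential lemma to sum over episodes. The one imprecision is in your final display, where the potential sum is written over the iteration-dependent feature $\widehat{\phi}^k_{h-L-1}$ measured against $\Sigma_{\rho^k_{h-L-1},\widehat{\phi}^k_{h-L-1}}$; the paper instead applies the $L$-step-back inequality for the \emph{true} model to both the bonus and gap terms (bounding $\mathbb{E}_{\rho_h}[\widehat{b}_h^2]\leq \alpha_k^2 d/k$ via the trace identity), so that the potential argument runs over the fixed true feature $\phi^*_{h-L}$ against the mixture covariance $\Sigma_{\gamma^k_{h-L},\phi^*_{h-L}}$, which is what makes the telescoping across $k$ legitimate — this is exactly the "main obstacle" you flag, and resolving it as the paper does closes the argument.
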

\begin{proof} 
Similar to Lemma \ref{lem:almost}, we condition on the event that the MLE guarantee \ref{guarantee} holds, which happens with probability $1-\delta$.
For fixed  $k$ we have
\$ 
V^{\pi^{*},\cP,r}-V^{\pi^{k},\cP,r}
&\leq V^{\pi^*,\widehat{\cP},r+\widehat{b}^k}-V^{\pi^{k},\cP,r}+\frac{c\alpha_kL}{\sqrt{k}} \\&\leq V^{\pi^k,\widehat{\cP},r+\widehat{b}^k}-V^{\pi^{k},\cP,r}+\frac{c\alpha_kL}{\sqrt{k}},\$
where the first inequality comes from Lemma \ref{lem:almost}, the second inequality comes from $\pi^k=\operatorname{argmax}_{\pi} V^{\pi,\widehat{\cP},r+\widehat{b}^k}$.
By Lemma \ref{simulation}, we have
\$&V^{\pi^k,\widehat{\cP},r+\widehat{b}^k}-V^{\pi^{k},\cP,r}+\frac{c\alpha_kL}{\sqrt{k}}\\
& =\sum_{h=0}^{H-1}\bigg[ \mathbb{E}_{(z_{h}, a_{h}) \sim d_{h}^{\pi^{k},\cP}}\big[\widehat{b}_{h}(z_{h}, a_{h})+\mathbb{E}_{o_h'\sim\mathbb{P}_{h}^{\widehat{\cP}}(o_{h}' \mid z_{h}, a_{h})}[V_{h+1}^{\pi^{k},\widehat{\cP},r+\widehat{b}^k}(z_{h+1}')]\\ &\qquad\quad-\mathbb{E}_{o_h'\sim \mathbb{P}_{h}^{\cP}(o_{h}' \mid z_{h}, a_{h})}[V_{h+1}^{\pi^{k},\widehat{\cP},r+\widehat{b}^k}(z_{h+1}')]\big]\bigg]+\frac{c\alpha_kL}{\sqrt{k}},
\$
 Further we have
\begin{align*}
V^{\pi^k,\widehat{\cP},r+\widehat{b}^k}-V^{\pi^{k},\cP,r}
& =\sum_{h=0}^{H-1}\bigg[ \mathbb{E}_{(z_{h}, a_{h}) \sim d_{h}^{\pi^{k},\cP}}\big[\widehat{b}_{h}(z_{h}, a_{h})+\mathbb{E}_{o_h'\sim\mathbb{P}_{h}^{\widehat{\cP}}(o_{h}' \mid z_{h}, a_{h})}[V_{h+1}^{\pi^{k},\widehat{\cP},r+\widehat{b}^k}(z_{h+1}')]\\ &\qquad\quad-\mathbb{E}_{o_h'\sim \mathbb{P}_{h}^{\cP}(o_{h}' \mid z_{h}, a_{h})}[V_{h+1}^{\pi^{k},\widehat{\cP},r+\widehat{b}^k}(z_{h+1}')]\big]\bigg]
\end{align*}
and
the last equation comes from Lemma \ref{simulation}.
Therefore we have \begin{align*}
V^{\pi^{*},\cP,r}-V^{\pi^{k},\cP,r}
& \leq\sum_{h=0}^{H-1}\bigg[ \mathbb{E}_{(z_{h}, a_{h}) \sim d_{h}^{\pi^{k},\cP}}\big[\widehat{b}_{h}(z_{h}, a_{h})+\mathbb{E}_{o_h'\sim\mathbb{P}_{h}^{\widehat{\cP}}(o_{h}' \mid z_{h}, a_{h})}[V_{h+1}^{\pi^{k},\widehat{\cP},r+\widehat{b}^k}(z_{h+1}')]\\ &\qquad\quad-\mathbb{E}_{o_h'\sim \mathbb{P}_{h}^{\cP}(o_{h}' \mid z_{h}, a_{h})}[V_{h+1}^{\pi^{k},\widehat{\cP},r+\widehat{b}^k}(z_{h+1}')]\big]\bigg]+\frac{c\alpha_kL}{\sqrt{k}}
\end{align*}
Denote
\$
f_h(z_h,a_h)=\frac{1}{2H+1}\biggl(\mathbb{E}_{o_h'\sim\mathbb{P}_{h}^{\widehat{\cP}}(o_{h}' \mid z_{h}, a_{h})}[V_{h+1}^{\pi^{k},\widehat{\cP},r+\widehat{b}^k}(z_{h+1}')]-\mathbb{E}_{o_h'\sim \mathbb{P}_{h}^{\cP}(o_{h}' \mid z_{h}, a_{h})}[V_{h+1}^{\pi^{k},\widehat{\cP},r+\widehat{b}^k}(z_{h+1}')]\biggl).
\$

Note that $\|\widehat{b}\|_{\infty} \leq 2$, hence we have $\|V_{ h+1}^{\pi^k,\widehat{\cP},r+b}\|_{\infty} \leq(2 H+1)$. Combining this fact with the above expansion, we have
\#\nonumber
V^{\pi^{*},\cP,r}-V^{\pi^k,\widehat{\cP},r}&=\sum_{h=0}^{H-1} \mathbb{E}_{(z_{h}, a_{h}) \sim d_{h}^{\pi^k,\cP}}[\widehat{b}_{h}(z_{h}, a_{h})]+(2 H+1) \sum_{h=0}^{H-1} \mathbb{E}_{(z_{h}, a_{h}) \sim d_{ h}^{\pi^k,\cP}}[f_{h}(z_{h}, a_{h})]\\\label{eq:8}&\quad\qquad+\frac{c\alpha_kL}{\sqrt{k}}.
\#
First, we calculate the bonus term in \eqref{eq:8}. , we have
\$&\sum_{h=0}^{H-1} \mathbb{E}_{(z_{h}, a_{h}) \sim d_{h}^{\pi^{k},\cP}}[\widehat{b}_{h}(z_{h}, a_{h})]\\&\leq \sum_{h=0}^{H-1} \mathbb{E}_{(\tilde{z}, \tilde{a}) \sim d_{h-L}^{\pi^k,\cP}}\|\phi_{h-L}^{*}(\tilde{z}, \tilde{a})\|_{\Sigma_{\gamma_{h-L}, \phi_{h-L}^{*}}^{-1}} \sqrt{k|\mathcal{A}|^L \mathbb{E}_{(z, a) \sim \rho_{h}}[(\widehat{b}_{h}(z, a))^{2}]+4 \lambda_{k}} d
%+\sqrt{|\mathcal{A}|^L \mathbb{E}_{(z, a) \sim \rho_{0}}[(\widehat{b}_{0}(z, a))^{2}]} 
.\$
where the inequality is following Lemma \ref{$L$-step} associate with $\|\widehat{b}_{h}\|_{\infty} \leq 2$. 

Note that we use the fact that $B=2$ when applying Lemma \ref{$L$-step}. In addition, we have that for any $h \in[H]$,
$$
k \mathbb{E}_{(z, a) \sim \rho_{h}}\bigg[\|\widehat{\phi}_{h}(z, a)\|_{\Sigma_{\rho_{h}, \widehat{\phi}_{h}}^{-1}}^{2}\bigg]=k \operatorname{Tr}\bigg(\mathbb{E}_{\rho_{h}}[\widehat{\phi}_{h} \widehat{\phi}_{h}^{\top}]\{k \mathbb{E}_{\rho_{h}}[\widehat{\phi}_{h} \widehat{\phi}_{h}^{\top}]+\lambda_{k} I\}^{-1}\bigg) \leq d .
$$
Then we have
$$
\sum_{h=1}^{H} \mathbb{E}_{(z, a) \sim d_{h}^{\pi^k,\cP}}[b_{h}(z, a)] \leq \sum_{h=1}^{H} \mathbb{E}_{(\tilde{z}, \tilde{a}) \sim d_{h-L}^{\pi^k,\cP}}\|\phi_{h-L}^{*}(\tilde{z}, \tilde{a})\|_{\Sigma_{\rho_{h-L}, \phi_{h-L}^{*}}^{-1}} \sqrt{|\mathcal{A}|^L \alpha_{k}^{2} d+4 \lambda_{k} d}.
%+\sqrt{|\mathcal{A}|^L \alpha_{1}^{2} d / k}
$$

Now we bound the second term in  \eqref{eq:8}. Further, with $\|f_{h}(z, a)\|_{\infty} \leq 1$, we have
\$&\sum_{h=0}^{H-1} \mathbb{E}_{(z_{h}, a_{h}) \sim d_{h}^{\pi^k,\cP}}[f_{h}(z_{h}, a_{h})]\\&\leq \sum_{h=0}^{H-1} \mathbb{E}_{(\tilde{z}, \tilde{a}) \sim d_{h-L}^{\pi^k,\cP}}\|\phi_{h-L}^{*}(\tilde{z}, \tilde{a})\|_{\Sigma_{\gamma_{h-L}, \phi_{h-L}^{*}}^{-1}} \sqrt{k|\mathcal{A}|^L \mathbb{E}_{(z, a) \sim \rho_{h}}[f_{h}^{2}(z, a)]+4 \lambda_{k} d}
%+\sqrt{|\mathcal{A}| \mathbb{E}_{(z, a) \sim \rho_{0}}[f_{0}^{2}(z, a)]} 
\\ &\leq \sum_{h=0}^{H-1} \mathbb{E}_{(\tilde{z}, \tilde{a}) \sim d_{h-L}^{\pi^k,\cP}}\|\phi_{h-L}^{*}(\tilde{z}, \tilde{a})\|_{\Sigma_{\gamma_{h-L}, \phi_{h-L}^{*}}^{-1}} \sqrt{k|\mathcal{A}|^L \zeta_{k}+4 \lambda_{k} d},
%+\sqrt{|\mathcal{A}| \zeta_{k}}
\$
where the first inequality is by By Lemma \ref{$L$-step} and in the second inequality, we use $\mathbb{E}_{z, a \sim \rho_{h}}[f_{h}^{2}(z, a)] \leq \zeta_{k}$. Then we have
$$
\begin{aligned}
& V^{\pi^{*},\cP,r}-V^{\pi^{k},\cP,r} \\
=& \sum_{h=0}^{H-1} \mathbb{E}_{(z_{h}, a_{h}) \sim d_{h}^{\pi^k,\cP}}[b_{h}(z_{h}, a_{h})]+(2 H+1) \sum_{h=0}^{H-1} \mathbb{E}_{(z_{h}, a_{h}) \sim d_{h}^{\pi^k,\cP}}[f_{h}(z_{h}, a_{h})]+\frac{c\alpha_kL}{\sqrt{k}} \\
\leq & \sum_{h=0}^{H-1} \mathbb{E}_{(\tilde{z}, \tilde{a}) \sim d_{h-L}^{\pi^{k},\cP}}\|\phi_{h}^{*}(\tilde{z}, \tilde{a})\|_{\Sigma_{\gamma_{h-L}, \phi_{h-L}^{*}}^{-1}} \sqrt{|\mathcal{A}|^L \alpha_{k}^{2} d+4 \lambda_{k} d}
%+\sqrt{|\mathcal{A}|^L \alpha_{1}^{2} d / k}+
\\&+(2 H+1) \sum_{h=0}^{H-1} \mathbb{E}_{(\tilde{z}, \tilde{a}) \sim d_{h-L}^{\pi^{k},\cP}}\|\phi_{h-L}^{*}(\tilde{z}, \tilde{a})\|_{\Sigma_{\gamma_{h-L}^{-1}, \phi_{h-L}^{*}}} \sqrt{k|\mathcal{A}|^L \zeta_{k}+4 \lambda_{k} d}
% +(2 H+1) \sqrt{|\mathcal{A}|^L \zeta_{k}}
+\frac{c\alpha_kL}{\sqrt{k}}.
\end{aligned}
$$
Hereafter, we take the dominating term out. First, recall
$$
\alpha_{k}=O(\sqrt{k|\mathcal{A}|^L \zeta_{k}+4 \lambda_{k} d}).
$$
Second, recall that $\gamma_{h}^{k}(z, a)=\frac{1}{k} \sum_{i=0}^{k-1} d_{h}^{\pi^{i}}(z, a)$. Thus
\$
& \sum_{k=1}^{K}\mathbb{E}_{(\tilde{z}, \tilde{a}) \sim d_{ h}^{\pi^k,\cP}}\|\phi^{*}(\tilde{z}, \tilde{a})\|_{\Sigma_{\gamma_{h}^{n}, \phi_{h}^{*}}^{-1}} \leq \sqrt{K \sum_{k=1}^{K} \mathbb{E}_{(\tilde{z}, \tilde{a}) \sim d_{ h}^{\pi^k,\cP}}[\phi_{h}^{*}(\tilde{z}, \tilde{a})^{\top} \Sigma_{\gamma_{h}^{k}, \phi_{h}^{*}}^{-1} \phi_{h}^{*}(\tilde{z}, \tilde{a})]}\\&\leq \sqrt{K(\log \operatorname{det}(\sum_{k=1}^{K} \mathbb{E}_{(\tilde{z}, \tilde{a}) \sim d_{h}^{\pi^k,\cP}}[\phi_{h}^{*}(\tilde{z}, \tilde{a}) \phi_{h}^{*}(\tilde{z}, \tilde{a})^{\top}])-\log \operatorname{det}(\lambda_{1} I))}\\& \leq \sqrt{d K \log (1+\frac{K}{d \lambda_{1}})},
\$
where the first inequality is by Cauchy-Schwarz inequality, the second inequality is by Lemma \ref{lemma14} and the third inequality is by Lemma \ref{potential}.

Finally, Lemma \ref{guarantee} gives
$$
\zeta_{k}=O\bigg(\frac{\log(|\cF|k/\delta)}{k}\bigg).
%+\epsilon_1^2)
$$
Combining all of the above, we have
$$
 \sum_{k=1}^{K}V^{\pi^{*},\cP,r}-V^{\pi^{k},\cP,r} \leq O(H^{2}|\mathcal{A}|^{L} d^{2} K^{1/ 2} \log (d K|\cF| / \delta)^{1 / 2}),
$$
which concludes the proof.
\end{proof}
\begin{lemma}[$L$-step back inequality for the true model]\label{$L$-step}
Consider a set of functions $\{g_{h}\}_{h=0}^{H}$ that satisfies $g_{h} \in \mathcal{Z} \times \mathcal{A} \rightarrow \mathbb{R}$, s.t. $\|g_{h}\|_{\infty} \leq B$ for all $h \in[H]$. Then, for any policy $\pi$, we have
\$
\sum_{h=1}^H \mathbb{E}^{\cP}_{\pi}[g(z_h,a_h)]& \leq \sum_{h=1}^H \mathbb{E}^{\cP}_{z_{h-L-1},a_{h-L-1}\sim \pi}\bigg[\|\phi(z_{h-L-1}, a_{h-L-1})\|_{\beta^{-1}_{h-L-1},\phi_{h-L-1}}\bigg]\\& \qquad\quad \cdot \sqrt{|A|^L k\cdot\mathbb{E}_{(\tilde{z}_{h},\tilde{a}_{h}) \sim \gamma_{h}}\{[g(\tilde{z}_{h},\tilde{a}_{h})]^{2}\}+B^{2} \lambda_{k} d}.
\$
\end{lemma}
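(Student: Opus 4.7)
The plan is to generalize the one-step-back argument of Uehara, Zhang, and Sun (and the FLAMBE argument of Agarwal et al.) by exploiting the bilinear structure of $\mathbb{P}^{\cP}_{h-L-1}$. Fix $h$. By Assumption~\ref{ass_decodability_linear}, $\mathbb{P}^{\cP}_{h-L-1}(s_{h-L}\mid z_{h-L-1},a_{h-L-1}) = \phi_{h-L-1}^{*}(z_{h-L-1},a_{h-L-1})^{\top}\omega_{h-L-1}^{*}(s_{h-L})$. Conditioning on $s_{h-L}$ and using the Markov property, I would rewrite
\$
\mathbb{E}^{\cP}_{\pi}[g_h(z_h,a_h)] = \Bigl\langle \mathbb{E}^{\cP}_{z_{h-L-1},a_{h-L-1}\sim\pi}\phi^{*}_{h-L-1}(z_{h-L-1},a_{h-L-1}),\ v_{h}\Bigr\rangle,
\$
where $v_h \defeq \int \omega_{h-L-1}^{*}(s_{h-L})\,\mathbb{E}^{\cP}_{a_{h-L:h-1}\sim\mu^{\pi,h}}[g_h(z_h,a_h)\mid s_{h-L}]\,ds_{h-L}$ and $\mu^{\pi,h}$ is the moment-matching policy from Section~6. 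This is the key rewriting that replaces a history-dependent conditional with a latent-state conditional, which is exactly what the low-rank decomposition at step $h-L-1$ enables.

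The next step is to apply Cauchy-Schwarz in the norm induced by $\Sigma_{\beta^k_{h-L-1},\phi_{h-L-1}}$ to split the inner product:
\$
\mathbb{E}^{\cP}_{\pi}[g_h(z_h,a_h)] \leq \mathbb{E}^{\cP}_{z_{h-L-1},a_{h-L-1}\sim\pi}\bigl\|\phi^{*}_{h-L-1}(z_{h-L-1},a_{h-L-1})\bigr\|_{\beta^{-1}_{h-L-1},\phi_{h-L-1}} \cdot \bigl\|v_h\bigr\|_{\beta_{h-L-1},\phi_{h-L-1}}.
\$
The first factor is exactly the elliptical-potential term appearing in the statement. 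For the second factor, I would expand $\|v_h\|^2_{\beta_{h-L-1},\phi_{h-L-1}} = k\,\mathbb{E}_{z,a\sim\beta_{h-L-1}}\bigl[\phi^{*}_{h-L-1}(z,a)^{\top}v_h\bigr]^2 + \lambda_k\|v_h\|_2^2$. The regularization piece contributes $B^2\lambda_k d$ after noting $\|v_h\|_2^2 = O(B^2 d)$ since $\|g\|_\infty\le B$ and $\omega^*$ is bounded appropriately.

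The principal part is the data-dependent term $k\mathbb{E}_{z,a\sim\beta_{h-L-1}}[\phi^{*}(z,a)^{\top} v_h]^2$. Unfolding the bilinear form, $\phi^{*}(z_{h-L-1},a_{h-L-1})^{\top}v_h$ equals a conditional expectation of $g_h(z_h,a_h)$ starting from $(z_{h-L-1},a_{h-L-1})$ and rolling forward $L$ steps under the moment-matching policy. To relate this back to the distribution $\gamma_h$ appearing in the statement, I would perform an importance-sampling change of measure: since $\beta_{h-L-1}$ rolls out uniform actions for $2L$ steps after $\bar\pi^k$ while $\gamma_h$ is the $\bar\pi^k$ occupancy at step $h$, bounding the ratio of the trajectory law between the $L$-step uniform continuation and any target action sequence introduces a worst-case density-ratio factor of $|\mathcal{A}|^L$. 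Jensen then lets me move the square inside the $L$-step conditional expectation, yielding $|\mathcal{A}|^L\cdot k\cdot\mathbb{E}_{(\tilde z_h,\tilde a_h)\sim\gamma_h}[g_h(\tilde z_h,\tilde a_h)^2]$.

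The main obstacle is the importance-sampling step producing the $|\mathcal{A}|^L$ factor: one must carefully track how the distribution $\beta_{h-L-1}$ (which already has uniform actions baked in for its final $2L$ steps) interacts with the $L$-step rollout inside $v_h$ to convert expectations into $\gamma_h$-expectations without accruing extra powers of $|\mathcal{A}|$. The moment-matching policy construction is what makes the inner conditional expectation a function of $s_{h-L}$ only, so that the bilinear decomposition is compatible with Cauchy-Schwarz; getting the book-keeping consistent across the $L$-memory policy and the moment-matching surrogate is where most of the technical effort will go. Summing over $h$ and using monotonicity of the square root completes the proof.
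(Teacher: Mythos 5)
Your proposal follows essentially the same route as the paper's proof: replace $\pi$ by the moment-matching surrogate, use the low-rank factorization $L$ steps back to write the expectation as a bilinear form, apply Cauchy--Schwarz in the $\Sigma_{\beta,\phi}$-weighted norm, bound the regularization piece by $B^2\lambda_k d$, and convert to the $\gamma_h$-expectation via Jensen and an $|\mathcal{A}|^L$ importance-sampling factor. The only step you leave implicit is the justification that swapping in the moment-matching policy preserves the marginal of $(z_h,a_h)$ (the paper invokes Lemma B.2 of Efroni et al.\ for this), but you correctly identify it as the crux of the book-keeping.
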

\begin{proof}
For $h\in [H]$ and $h'\in [h-L+1,h]$, we define $\cX_{l}=\cS^{l}\times\cO^{l}\times\cA^{l-1}$ and
\$
x_{h'}=(s_{h-L+1: h'}, o_{h-L+1: h'}, a_{h-L+1: h'-1}),
\$
where $l=h'-h+L-1$.

Now we define the moment matching policy $\mu^{\pi, h}=\{\mu_{h'}^{\pi, h}: \mathcal{X}_l \rightarrow \Delta(\mathcal{A})\}_{h'=h-L+1}^h$. We set $\mu^{\pi,h}$ as following:
\#\label{eq:moment}
\mu_{h'}^{\pi, h}(a_{h'} \mid x_{h'}):=\mathbb{E}^{\cP}_{\pi}[\pi_{h'}(a_{h'} \mid z_{h'}) \mid x_{h'}]~\text{for~}h'\leq h-1,~\text{and}~ \mu_h^{\pi,h}=\pi_h.
\#
Then we define policy $\tilde{\pi}_h$ which takes first $h-L$ actions from $\pi$ and remaining actions from $\mu^{\pi,h}$.

\paragraph{Lemma B.2 in \citet{DBLP:journals/corr/abs-2202-03983}:}  For a fixed $h\in [H]$, any $z_h\in \cZ_h$ and fixed $L$-step policies $\pi$, $d^{\cP,\pi}_h(z_h)=d^{\cP,\tilde{\pi}^h}_h(z_h)$. 

By Lemma B.2 in \citet{DBLP:journals/corr/abs-2202-03983}, we have $d^{\cP,\pi}_h(z_h)=d^{\cP,\tilde{\pi}^h}_h(z_h)$. Then we have $d^{\cP,\pi}_h(z_h,a_h)=d^{\cP,\tilde{\pi}^h}_h(z_h,a_h)$ since $\mu_h^{\pi,h}=\pi_h$. Hence we have $\mathbb{E}^{\cP}_{\pi}[g(z_h,a_h)]=\mathbb{E}^{\cP}_{\tilde{\pi}^h}[g(z_h,a_h)]$.

Since $\mu^{\pi,h}$ is independent of $s_{h-L+1}$, we have the $L$-step-back decomposition:
\$
&\mathbb{E}^{\cP}_{\tilde{\pi}^h}[g(z_h,a_h)]\\ &=\mathbb{E}^{\cP}_{z_{h-L},a_{h-L}\sim \pi}\biggl[\int_{s_{h-L+1}}\phi^{\top}(z_{h-L}, a_{h-L}) \omega(s_{h-L+1}) \cdot \mathbb{E}^{\cP}_{a_{h-L+1: h} \sim \mu^{\pi, h}}[g(z_h,a_h) \mid s_{h-L+1}]\operatorname{ds}_{h-L+1}\biggl]\\
&=\mathbb{E}^{\cP}_{z_{h-L},a_{h-L}\sim \pi}\phi^{\top}(z_{h-L}, a_{h-L})\cdot \int_{s_{h-L+1} \in \mathcal{S}} \omega(s_{h-L+1}) \cdot \mathbb{E}^{\cP}_{a_{h-L+1: h} \sim \mu^{\pi, h}}[g(z_h,a_h) \mid s_{h-L+1}]\operatorname{ds}_{h-L+1}\\&
\leq \mathbb{E}^{\cP}_{z_{h-L},a_{h-L}\sim \pi}\|\phi^{\top}(z_{h-L}, a_{h-L})\|_{\beta^{-1}_{h-L},\phi_{h-L}}\\&\qquad\quad \cdot \bigg\|\int_{s_{h-L+1} \in \mathcal{S}} \omega(s_{h-L+1})\cdot\mathbb{E}^{\cP}_{a_{h-L+1: h} \sim \mu^{\pi, h}}[g(z_h,a_h) \mid s_{h-L+1}]\operatorname{ds}_{h-L+1}\bigg\|_{\beta_{h-L},\phi_{h-L}}.
\$
The first equality comes from the definition of conditional expectation, and the inequality comes from Cauchy-Schwarz inequality. Here we use $\tilde{g}(s_{h-L})$ to denote $\mathbb{E}^{\cP}_{a_{h-L+1: h} \sim \mu^{\pi, h}}[g(z_h,a_h) \mid s_{h-L}]$ for notational simplification. 

We have
\$&\|\int_{s_{h-L+1} \in \mathcal{S}} \omega(s_{h-L+1})\tilde{g}(s_{h-L+1})\operatorname{ds}_{h-L+1}\|^2_{\beta_{h-L},\phi_{h-L}}\\
&=\biggl\{\int_{s_{h-L+1} \in \mathcal{S}} \omega(s_{h-L+1})\tilde{g}(s_{h-L+1})\operatorname{ds}_{h-L+1}\biggl\}^{\top}\\&\qquad\quad\cdot\biggl\{k \mathbb{E}_{(\tilde{z}_{h-L},\tilde{a}_{h-L}) \sim \beta_{h-L+1}}[\phi_{h-L}(\tilde{z}_{h-L},\tilde{a}_{h-L}) \phi^{\top}_{h-L}(\tilde{z}_{h-L},\tilde{a}_{h-L})]+\lambda_{k} I\biggl\}\\&\qquad\quad\cdot\biggl\{\int_{s_{h-L+1} \in \mathcal{S}} \omega(s_{h-L+1})\tilde{g}(s_{h-L+1})\operatorname{ds}_{h-L+1}\biggl\}\\
&\leq k \mathbb{E}_{(\tilde{z}_{h-L},\tilde{a}_{h-L}) \sim \beta_{h-L}}\biggl\{[\int_{s_{h-L+1} \in \mathcal{S}} \omega^{\top}(s_{h-L+1})\phi_{h}(\tilde{z}_{h-L},\tilde{a}_{h-L})\tilde{g}(s_{h-L+1})\operatorname{ds}_{h-L+1}]^{2}\biggl\}+B^{2} \lambda_{k} d\\
&=k \mathbb{E}_{(\tilde{z}_{h-L},\tilde{a}_{h-L}) \sim \beta_{h-L}}\biggl\{\bigg[\mathbb{E}_{s_{h-L+1}\sim \mathbb{P}^{\cP}_{h-L+1}(\cdot\mid\tilde{z}_{h-L},\tilde{a}_{h-L})}
\mathbb{E}_{a_{h-L+1: h} \sim \mu^{\pi, h}}[g(z_{h},a_{h}) \mid s_{h-L+1}]\bigg]^{2}\biggl\}+B^{2} \lambda_{k} d,
\$

where the inequality comes from Cauchy-Schwarz inequality and $\|g_h\|_{\infty}\leq B$. Moreover, we have 
\$
&k \mathbb{E}_{(\tilde{z}_{h-L},\tilde{a}_{h-L}) \sim \beta_{h-L}}\biggl\{[\mathbb{E}_{s_{h-L+1}\sim \mathbb{P}^{\cP}_{h-L+1}(\cdot\mid\tilde{z}_{h-L},\tilde{a}_{h-L})}
\mathbb{E}_{a_{h-L+1: h} \sim \mu^{\pi, h}}[g(z_{h},a_{h}) \mid s_{h-L+1}]]^{2}\biggl\}
\\&\leq  k \mathbb{E}_{(\tilde{z}_{h-L},\tilde{a}_{h-L}) \sim \beta_{h-L},s_{h-L+1}\sim \mathbb{P}^{\cP}_{h-L+1}(\cdot\mid\tilde{z}_{h-L},\tilde{a}_{h-L}),a_{h-L+1:h}\sim \mu^{\pi,h}}[g(z_{h},a_{h})]^{2}
\\
&\leq |A|^L k \mathbb{E}_{(\tilde{z}_{h-L},\tilde{a}_{h-L}) \sim \beta_{h-L},s_{h-L+1}\sim \mathbb{P}^{\cP}_{h-L+1}(\cdot\mid\tilde{z}_{h-L},\tilde{a}_{h-L}),a_{h-L+1:h}\sim U(\cA)}[g(z_{h},a_{h}) ]^{2}
\\
&=|A|^L k\cdot\mathbb{E}_{(\tilde{z}_{h},\tilde{a}_{h}) \sim \gamma_{h}}\{[g(\tilde{z}_{h},\tilde{a}_{h})]^{2}\},
\$

where the first inequality is by Jensen's inequality and the second inequality is by importance sampling, the last equation is by the definition of $\gamma_h$.

Then, the final statement is immediately concluded.
\end{proof}
\begin{lemma}[$L$-step back inequality for the learned model]\label{$L$-step learn}
Consider a set of functions $\{g_{h}\}_{h=0}^{H}$ that satisfies $g_{h} \in \mathcal{Z} \times \mathcal{A} \rightarrow \mathbb{R}$, s.t. $\|g_{h}\|_{\infty} \leq B$ for all $h \in[H]$. We assume that MLE guarantee is hold. Then, for any policy $\pi$, we have
\$
\sum_{h=1}^H \mathbb{E}^{\widehat{\cP}}_{\pi}[g(z_h,a_h)]& \leq \sum_{h=1}^H \mathbb{E}^{\widehat{\cP}}_{z_{h-L-1},a_{h-L-1}\sim \pi}\biggl[\|\widehat{\phi}^{\top}(z_{h-L-1}, a_{h-L-1})\|_{\rho_{h-L-1}^{-1},\widehat{\phi}_{h-L-1}}\biggl]\\& \qquad\quad \cdot \sqrt{|A|^L k\cdot\mathbb{E}_{(\tilde{z}_{h},\tilde{a}_{h}) \sim \beta_{h}}\{[g(\tilde{z}_{h},\tilde{a}_{h})]^{2}\}+B^{2} \lambda_{k} d}+kB^2\zeta_k.
\$
\end{lemma}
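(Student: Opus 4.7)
The plan is to mirror the proof of the preceding $L$-step-back inequality (for the true model), but carry out every step inside the learned model $\widehat{\cP}$, and then pay for the unavoidable change-of-measure from $\widehat{\cP}$ to $\cP$ using the MLE guarantee, which will produce the extra additive term $kB^{2}\zeta_{k}$.

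First, I would reintroduce the moment-matching construction of Lemma B.2 in \citet{DBLP:journals/corr/abs-2202-03983}, except that all conditional expectations are taken under $\widehat{\cP}$. Concretely, for each target step $h$, define $\mu_{h'}^{\pi,h}(a_{h'}\mid x_{h'}) := \mathbb{E}^{\widehat{\cP}}_{\pi}[\pi_{h'}(a_{h'}\mid z_{h'})\mid x_{h'}]$ for $h-L+1 \le h' \le h-1$ and $\mu_{h}^{\pi,h}=\pi_h$, and let $\tilde{\pi}^{h}$ execute $\pi$ for the first $h-L-1$ steps and $\mu^{\pi,h}$ thereafter. The same argument as in Efroni et al.\ gives $d^{\widehat{\cP},\pi}_{h}(z_h,a_h)=d^{\widehat{\cP},\tilde{\pi}^{h}}_{h}(z_h,a_h)$, so $\mathbb{E}^{\widehat{\cP}}_{\pi}[g(z_h,a_h)]=\mathbb{E}^{\widehat{\cP}}_{\tilde{\pi}^{h}}[g(z_h,a_h)]$. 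Using the low-rank factorization of $\widehat{\cP}$ at step $h-L-1$, namely $\mathbb{P}^{\widehat{\cP}}_{h-L}(s'\mid z,a)=\widehat{\omega}(s')^{\top}\widehat{\phi}(z,a)$, this expectation rewrites as the bilinear form
\begin{equation*}
\Bigl\langle\, \mathbb{E}^{\widehat{\cP}}_{z_{h-L-1},a_{h-L-1}\sim \pi}\widehat{\phi}(z_{h-L-1},a_{h-L-1}),\ \int_{\mathcal{S}} \widehat{\omega}(s)\, \mathbb{E}^{\widehat{\cP}}_{a_{h-L:h}\sim \mu^{\pi,h}}[g(z_h,a_h)\mid s]\,\mathrm{d}s \Bigr\rangle.
\end{equation*}

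Next I would apply Cauchy–Schwarz in the norm induced by $\Sigma_{\rho_{h-L-1},\widehat{\phi}_{h-L-1}}$. The left factor directly supplies the $\|\widehat{\phi}^{\top}(z_{h-L-1},a_{h-L-1})\|_{\rho_{h-L-1}^{-1},\widehat{\phi}_{h-L-1}}$ term in the statement. For the right factor, I would expand the squared norm exactly as in the true-model proof: by Cauchy–Schwarz in $s$ and $\|g\|_\infty \le B$, plus the $\lambda_k I$ regularizer, the $B^{2}\lambda_{k}d$ term falls out, and the remaining piece becomes
\begin{equation*}
k\,\mathbb{E}_{(\tilde{z},\tilde{a})\sim \rho_{h-L-1}}\!\Bigl\{\bigl[\mathbb{E}_{s'\sim \mathbb{P}^{\widehat{\cP}}_{h-L}(\cdot\mid \tilde{z},\tilde{a})}\mathbb{E}^{\widehat{\cP}}_{a_{h-L:h}\sim \mu^{\pi,h}}[g(z_h,a_h)\mid s']\bigr]^{2}\Bigr\}.
\end{equation*}
By Jensen's inequality I can pull the outer square inside, then apply importance sampling from $\mu^{\pi,h}$ to $U(\mathcal{A})$, inflating by $|A|^{L}$, so that this quantity is at most $|A|^{L} k \,\mathbb{E}^{\widehat{\cP}}[g^{2}(z_h,a_h)]$ where the expectation is along a $\widehat{\cP}$-rollout from $\rho_{h-L-1}$ followed by $L$ uniform actions.

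The main obstacle is the last step: the desired bound contains $\mathbb{E}_{\beta_{h}}[g^{2}]$, which is an expectation under the \emph{true} $\cP$, whereas the expression above is under $\widehat{\cP}$. To close the gap I would do a hybrid swap, replacing one transition of $\widehat{\cP}$ by $\cP$ at a time from step $h-L$ up through $h-1$. Each swap changes the expected value of $g^{2}$ by at most $B^{2}\cdot\mathbb{E}\|\mathbb{P}^{\widehat{\cP}}_{h'}(\cdot\mid \cdot)-\mathbb{P}^{\cP}_{h'}(\cdot\mid \cdot)\|_{1}$, which after Cauchy–Schwarz and the MLE guarantee (Lemma \ref{guarantee}) is bounded by $B^{2}\sqrt{\zeta_{k}}$ per step under $\rho_{h'}$ or $\beta_{h'}$. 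Summing the $L$ swaps, multiplying by the factor of $k$, and absorbing the resulting $k\cdot L B^{2}\sqrt{\zeta_{k}}$ slack into the cleaner $kB^{2}\zeta_{k}$ additive term outside the square root (using $\sqrt{a+b}\le \sqrt{a}+\sqrt{b}$ and adjusting constants, together with the fact that $\rho_{h-L-1}$-rollouts under $\widehat{\cP}$ composed with uniform $L$-action extension are exactly $\beta_{h}$ after the swap to $\cP$) produces the stated inequality. Summing over $h$ from $1$ to $H$ concludes the proof; for $h\le L+1$, as in the true-model version, the dummy-step extension gives $\|\widehat{\phi}_{h-L-1}^{\top}\|_{\rho_{h-L-1}^{-1},\widehat{\phi}_{h-L-1}}\le 1/\sqrt{k}$, so these terms contribute negligibly.
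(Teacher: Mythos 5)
Your skeleton---the moment-matching policy defined under $\widehat{\cP}$, the reduction $\mathbb{E}^{\widehat{\cP}}_{\pi}[g]=\mathbb{E}^{\widehat{\cP}}_{\tilde\pi^h}[g]$, the bilinear $L$-step-back decomposition, Cauchy--Schwarz in the $\Sigma_{\rho,\widehat\phi}$ norm, and then Jensen plus importance sampling for the $|A|^L$ factor---is exactly the paper's. The genuine gap is in \emph{where} and at \emph{what order} you pay for the $\widehat{\cP}\to\cP$ conversion. The paper performs the model swap exactly once, at the single factorization step $s_{h-L+1}\mid(\tilde z_{h-L},\tilde a_{h-L})$, and crucially \emph{before} applying Jensen: it compares $k\,\mathbb{E}_{\rho_{h-L}}\bigl[(\mathbb{E}_{s\sim\widehat{\cP}}[f(s)])^2\bigr]$ with $k\,\mathbb{E}_{\rho_{h-L}}\bigl[(\mathbb{E}_{s\sim\cP}[f(s)])^2\bigr]$ for $\|f\|_\infty\le B$, so the discrepancy is controlled by the \emph{squared} $\ell_1$ model error, $kB^2\,\mathbb{E}_{\rho_{h-L}}\|\cdot\|_1^2\le kB^2\zeta_k$, and the conditioning law is exactly $\rho_{h-L}$, one of the two distributions for which Lemma \ref{guarantee} is stated. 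You instead run the whole rollout under $\widehat{\cP}$, apply Jensen, and only then do an $L$-step hybrid swap on $\mathbb{E}[g^2]$. Comparing expectations of the bounded function $g^2$ under two trajectory laws costs the \emph{first-order} TV error, $B^2\sqrt{\zeta_k}$ per swapped step, i.e.\ $kLB^2\sqrt{\zeta_k}$ in total. This cannot be absorbed into $kB^2\zeta_k$: with $\zeta_k=\Theta(\log(|\cF|k/\delta)/k)$ one has $k\zeta_k=\Theta(\log(\cdot))$ while $k\sqrt{\zeta_k}=\Theta(\sqrt{k\log(\cdot)})$, which grows with $k$ and would degrade the downstream $\sqrt{K}$-type regret in Theorem \ref{theorem:M-regret}.

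A second, related problem: Lemma \ref{guarantee} only controls the model error under the specific data-collection marginals $\rho_{h'}$ and $\beta_{h'}$. The intermediate laws in your hybrid swap (roll-in from $\rho_{h-L-1}$, partly under $\widehat{\cP}$ and partly under $\cP$, with a number of trailing uniform actions that matches neither $\rho_{h'}$ nor $\beta_{h'}$) are not among these, so you cannot invoke the MLE guarantee at each swap without a further change of measure. The fix is to follow the paper's ordering: perform the single swap at step $h-L+1$ before Jensen, using $(\mathbb{E}_{\widehat{\cP}}f)^2\le 2(\mathbb{E}_{\cP}f)^2+2B^2\|\mathbb{P}^{\widehat{\cP}}-\mathbb{P}^{\cP}\|_1^2$ together with the guarantee at $\rho_{h-L}$, and only afterwards apply Jensen and importance sampling; the remaining rollout is then entirely under $\cP$ and identifies with $\beta_h$, yielding the stated $kB^2\zeta_k$ term up to constants.
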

\begin{proof}
For $h\in [H]$ and $h'\in [h-L+1,h]$, we define $\cX_{l}=\cS^{l}\times\cO^{l}\times\cA^{l-1}$ and
\$
x_{h'}=(s_{h-L+1: h'}, o_{h-L+1: h'}, a_{h-L+1: h'-1}),
\$
where $l=h'-h+L-1$.

Now we define the moment matching policy $\mu^{\pi, h}=\{\mu_{h'}^{\pi, h}: \mathcal{X}_l \rightarrow \Delta(\mathcal{A})\}_{h'=h-L+1}^h$. We set $\mu^{\pi,h}$ as following:
\$
\mu_{h'}^{\pi, h}(a_{h'} \mid x_{h'}):=\mathbb{E}^{\widehat{\cP}}_\pi[\pi_{h'}(a_{h'} \mid z_{h'}) \mid x_{h'}]~\text{for~}h'\leq h-1,~\text{and}~ \mu_h^{\pi,h}=\pi_h.
\$
Then we define policy $\tilde{\pi}_h$ which takes first $h-L$ actions from $\pi$ and remaining actions from $\mu^{\pi,h}$.

By Lemma B.2 in \citet{DBLP:journals/corr/abs-2202-03983}, we have $d^{\widehat{\cP},\pi}_h(z_h)=d^{\widehat{\cP},\tilde{\pi}^h}_h(z_h)$. Then we have $d^{\widehat{\cP},\pi}_h(z_h,a_h)=d^{\widehat{\cP},\tilde{\pi}^h}_h(z_h,a_h)$ since $\mu_h^{\pi,h}=\pi_h$. Hence we have $\mathbb{E}^{\widehat{\cP}}_{\pi}[g(z_h,a_h)]=\mathbb{E}^{\widehat{\cP}}_{\tilde{\pi}^h}[g(z_h,a_h)]$.

Since $\mu^{\pi,h}$ is independent of $s_{h-L+1}$, we have the $L$-step-back decomposition:
\$
&\mathbb{E}^{\widehat{\cP}}_{\tilde{\pi}^h}[g(z_h,a_h)]\\ &=\mathbb{E}^{\widehat{\cP}}_{z_{h-L},a_{h-L}\sim \pi}\biggl[\int_{s_{h-L+1}}\widehat{\phi}^{\top}(z_{h-L}, a_{h-L}) \widehat{\omega}(s_{h-L+1})(s_{h-L+1}) \cdot \mathbb{E}^{\widehat{\cP}}_{a_{h-L+1: h} \sim \mu^{\pi, h}}[g(z_h,a_h) \mid s_{h-L+1}]\operatorname{ds}_{h-L+1}\biggl]\\
&=\mathbb{E}^{\widehat{\cP}}_{z_{h-L},a_{h-L}\sim \pi}\widehat{\phi}^{\top}(z_{h-L}, a_{h-L})\cdot \int_{s_{h-L+1} \in \mathcal{S}} \widehat{\omega}(s_{h-L+1}) \cdot \mathbb{E}^{\widehat{\cP}}_{a_{h-L+1: h} \sim \mu^{\pi, h}}[g(z_h,a_h) \mid s_{h-L+1}]\operatorname{ds}_{h-L+1}\\&
\leq \mathbb{E}^{\widehat{\cP}}_{z_{h-L},a_{h-L}\sim \pi}\|\widehat{\phi}^{\top}(z_{h-L}, a_{h-L})\|_{\rho^{-1}_{h-L},\widehat{\phi}_{h-L}}\\&\qquad\quad \cdot \bigg\|\int_{s_{h-L+1} \in \mathcal{S}} \widehat{\omega}(s_{h-L+1})\cdot\mathbb{E}^{\widehat{\cP}}_{a_{h-L+1: h} \sim \mu^{\pi, h}}[g(z_h,a_h) \mid s_{h-L+1}]\operatorname{ds}_{h-L+1}\bigg\|_{\rho_{h-L},\widehat{\phi}_{h-L}},
\$
where the inequality is by Cauchy-Schwarz inequality.

Now we use $\tilde{g}(s_{h-L})$ to denote $\mathbb{E}^{\widehat{\cP}}_{a_{h-L+1: h} \sim \mu^{\pi, h}}[g(z_h,a_h) \mid s_{h-L}]$ for notational simplification. We have 
\$&\|\int_{s_{h-L+1} \in \mathcal{S}} \widehat{\omega}(s_{h-L+1})\tilde{g}(s_{h-L+1})\operatorname{ds}_{h-L+1}\|^2_{\rho_{h-L},\widehat{\phi}_{h-L}}\\
&=\biggl\{\int_{s_{h-L+1} \in \mathcal{S}} \widehat{\omega}(s_{h-L+1})\tilde{g}(s_{h-L+1})\operatorname{ds}_{h-L+1}\biggl\}^{\top}\\&\qquad\quad\biggl\{k \mathbb{E}_{(\tilde{z}_{h-L},\tilde{a}_{h-L}) \sim \rho_{h-L+1}}[\widehat{\phi}_{h-L+1}(\tilde{z}_{h-L},\tilde{a}_{h-L}) \widehat{\phi}^{\top}_{h-L+1}(\tilde{z}_{h-L},\tilde{a}_{h-L})]+\lambda_{k} I\biggl\}\\&\qquad\quad\cdot\biggl\{\int_{s_{h-L+1} \in \mathcal{S}} \widehat{\omega}(s_{h-L+1})\tilde{g}(s_{h-L+1})\operatorname{ds}_{h-L+1}\biggl\}\\
&\leq k \mathbb{E}_{(\tilde{z}_{h-L},\tilde{a}_{h-L}) \sim \rho_{h-L}}\biggl\{\bigg[\int_{s_{h-L+1} \in \mathcal{S}} \widehat{\omega}^{\top}(s_{h-L+1})\widehat{\phi}_{h-L+1}(\tilde{z}_{h-L},\tilde{a}_{h-L})\tilde{g}(s_{h-L+1})\operatorname{ds}_{h-L+1}\bigg]^{2}\biggl\}+B^{2} \lambda_{k} d\\
&=k \mathbb{E}_{(\tilde{z}_{h-L},\tilde{a}_{h-L}) \sim \rho_{h-L}}\biggl\{\bigg[\mathbb{E}_{s_{h-L+1}\sim \widehat{\cP}_{h-L+1}(\tilde{z}_{h-L},\tilde{a}_{h-L})}
\mathbb{E}_{a_{h-L+1: h} \sim \mu^{\pi, h}}[g(z_{h},a_{h}) \mid s_{h-L+1}]\bigg]^{2}\biggl\}+B^{2} \lambda_{k} d,
\$

where inequality is because $\|g_h\|\leq B$. Moreover, we have 
\$
&k \mathbb{E}_{(\tilde{z}_{h-L},\tilde{a}_{h-L}) \sim \rho_{h-L}}\biggl\{[\mathbb{E}_{s_{h-L+1}\sim \widehat{\cP}_{h-L+1}(\tilde{z}_{h-L},\tilde{a}_{h-L})}
\mathbb{E}_{a_{h-L+1: h} \sim \mu^{\pi, h}}[g(z_{h},a_{h}) \mid s_{h-L+1}]]^{2}\biggl\}\\
&\leq k \mathbb{E}_{(\tilde{z}_{h-L},\tilde{a}_{h-L}) \sim \rho_{h-L}}\biggl\{[\mathbb{E}_{s_{h-L+1}\sim \mathbb{P}^{\cP}_{h-L+1}(\cdot\mid\tilde{z}_{h-L},\tilde{a}_{h-L})}
\mathbb{E}_{a_{h-L+1: h} \sim \mu^{\pi, h}}[g(z_{h},a_{h}) \mid s_{h-L+1}]]^{2}\biggl\}+k B^2\zeta_k\\
&\leq  k \mathbb{E}_{(\tilde{z}_{h-L},\tilde{a}_{h-L}) \sim \rho_{h-L},s_{h-L+1}\sim \mathbb{P}^{\cP}_{h-L+1}(\cdot\mid\tilde{z}_{h-L},\tilde{a}_{h-L}),a_{h-L+1:h}\sim \mu^{\pi,h}}[g(z_{h},a_{h}) \mid s_{h-L+1}]^{2}+k B^2\zeta_k
\\&\leq |A|^L k \mathbb{E}_{(\tilde{z}_{h-L},\tilde{a}_{h-L}) \sim \rho_{h-L},s_{h-L+1}\sim \mathbb{P}^{\cP}_{h-L+1}(\cdot\mid\tilde{z}_{h-L},\tilde{a}_{h-L}),a_{h-L+1:h}\sim U(\cA)}[g(z_{h},a_{h}) ]^{2}+kB^2\zeta_k
\\
&=|A|^L k\cdot\mathbb{E}_{(\tilde{z}_{h},\tilde{a}_{h}) \sim \beta_{h}}\{[g(\tilde{z}_{h},\tilde{a}_{h})]^{2}\}+kB^2\zeta_k,
\$

where the first inequality is by MLE guarantee, the second inequality is by Jensen's inequality and the last  inequality is by importance sampling.

Then, the final statement is immediately concluded.
\end{proof}
\section{MLE guarantee}\label{mle}
In this section, we present the MLE guarantee used for $L$-step decodable POMDPs and $\gamma$-observable POMDPs. Regarding the proof, refer to \citet{agarwal2020flambe}. 
\begin{lemma}[MLE guarantee]
Set $\lambda_k=\Theta(d \log (|\cF|k/\delta))$, for any time step $h\in[H]$, denote $\rho_h$ as the joint distribution for $(z, a)$ in the dataset $\mathcal{D}$ of step $k$, with probability at least $1-\delta$ we have 
\begin{align}
&\mathbb{E}_{z, a \sim \rho}[\|\mathbb{P}_h^{\widehat{\cP}}(\cdot\mid z,a)^{\top}-\mathbb{P}_h^{\cP}(\cdot\mid z,a)^{\top}\|_1^2] \leq \zeta_k,
\end{align}
recall that $\zeta_k=O(\log(|\cF|k/\delta)/k)$.
\end{lemma}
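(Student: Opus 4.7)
The plan is to invoke a standard maximum-likelihood concentration result for conditional density estimation over a finite (or metrically finite) class, then convert the resulting Hellinger bound to a squared-$L_1$ bound on the transition predictions. Concretely, let $\widehat{\mathcal{P}}$ be the MLE returned by the optimization on line 3 (resp.\ line 4) of Algorithm~\ref{PORL_decode} (resp.\ Algorithm~\ref{POBRIEE gamma}). For each $h$, the likelihood being maximized has the form $\log f_{h}(o_{h+1}\mid w,a)$ where $f_h$ ranges over the induced conditional density class $\mathcal{F}_h$ (of size $\le |\mathcal{F}|$) and $w=\tau_h$ or $w=z_h$ depending on the setting. Since the ground truth $\mathbb{P}_h^{\cP}(\cdot\mid w,a)$ is realizable in $\mathcal{F}_h$ under Assumption~\ref{ass_realizability_deco} (resp.\ Assumption~\ref{realizability}), this places us in the setting of the standard MLE guarantee.

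The key ingredient is a conditional-MLE Hellinger bound (e.g., the version stated in \citet{agarwal2020flambe}): for a sequence $\{(w_i,a_i,o'_i)\}_{i=1}^{k}$ where $(w_i,a_i)$ is drawn adaptively from a distribution $\rho_i$ that may depend on the past, and $o'_i\sim \mathbb{P}_h^{\cP}(\cdot\mid w_i,a_i)$, with probability at least $1-\delta$,
\[
\mathbb{E}_{(w,a)\sim\bar{\rho}}\bigl[H^{2}\bigl(\mathbb{P}_h^{\widehat{\cP}}(\cdot\mid w,a),\,\mathbb{P}_h^{\cP}(\cdot\mid w,a)\bigr)\bigr]\ \le\ \frac{c\log(|\mathcal{F}|/\delta)}{k},
\]
where $\bar{\rho}=\frac{1}{k}\sum_{i=1}^{k}\rho_i$ is exactly the mixture distribution $\rho_h$ used in the algorithm. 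I would apply this to the concatenated dataset $\mathcal{D}_h\cup\mathcal{D}_h'$ at iteration $k$; the only subtlety is that samples are collected by running $\pi^0,\ldots,\pi^{k-1}$, so the sequence is not i.i.d., but the martingale version of the MLE bound (which only needs conditional realizability and boundedness of the log-likelihood ratio, controlled via the regularizer $\lambda_k=\Theta(d\log(|\mathcal{F}|k/\delta))$) handles this adaptive sampling directly.

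Next, I would convert the Hellinger bound to an $L_1$ bound via the standard inequality $\|p-q\|_1^{2}\le 8\,H^{2}(p,q)$ (equivalently, $TV\le H$ and $\|p-q\|_1 = 2\,TV$). Applying Jensen's inequality to swap the square with the expectation gives
\[
\mathbb{E}_{(w,a)\sim\rho_h}\bigl[\|\mathbb{P}_h^{\widehat{\cP}}(\cdot\mid w,a)-\mathbb{P}_h^{\cP}(\cdot\mid w,a)\|_1^{2}\bigr]\ \le\ \frac{c'\log(|\mathcal{F}|/\delta)}{k}\ =\ \zeta_k,
\]
which is exactly the claim. Finally, a union bound over all $h\in[H]$ and all iterations $k\in[K]$ (absorbing the extra $\log(Hk)$ factors into the constant hidden by $\tilde\Theta$ in $\lambda_k$ and $\zeta_k$) yields a bound that holds uniformly, as required for the downstream analysis in Lemma~\ref{lem:almost} and Theorem~\ref{theorem:M-regret}.

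The main obstacle I anticipate is ensuring that the adaptive data-collection used to populate $\mathcal{D}_h$ and $\mathcal{D}_h'$ fits into the martingale MLE framework. The two replay buffers use different roll-in distributions ($\pi^{k-1}\circ_L U(\mathcal{A})$ and $\pi^{k-1}\circ_{2L}U(\mathcal{A})$), so the mixture $\rho_h$ is a mixture of both. One has to check that the conditional density $\mathbb{P}_h^{\cP}(o_{h+1}\mid w,a)$ does not depend on the roll-in used to reach $(w,a)$, which holds by the Markov structure of the POMDP conditioned on the memory (in the decodable case) or by the defining property of the approximating MDP $\mathcal{M}$ in \eqref{eq:M_construct}--\eqref{eq:P_construct} (in the $\gamma$-observable case). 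Once this Markov-of-the-conditional property is verified, the martingale MLE bound applies with the pooled dataset of size $2k$, and the stated rate $\zeta_k = O(\log(|\mathcal{F}|k/\delta)/k)$ follows immediately.
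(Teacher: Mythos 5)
Your proposal takes essentially the same route as the paper: the paper gives no detailed proof of this lemma and simply defers to the MLE generalization bound of Agarwal et al.\ (2020, FLAMBE), which is precisely the martingale Hellinger concentration for a finite realizable conditional-density class, followed by the Hellinger-to-$\ell_1$ conversion, that you describe. Two cosmetic remarks that do not affect correctness: the last step needs no Jensen, since the pointwise inequality $\|p-q\|_1^2\le 8H^2(p,q)$ passes directly through the expectation; and $\lambda_k$ plays no role in the MLE concentration itself (it only enters the covariance regularization used downstream), so your attempt to tie the boundedness of the log-likelihood ratio to $\lambda_k$ is a misattribution rather than a needed ingredient.
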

\begin{lemma}[MLE guarantee for POMDP]\label{guarantee_POMDP}
Consider parameters defined in Theorem \ref{theorem:sample_POMDP} and time step $h$. Denote $\rho_h$ as the joint distribution for $(o_{3-2L:h},a_{3-2L:h},o'_{h+1})$ in the dataset $\mathcal{D}$ of size $k$. 
%Let $\widehat{\phi}=\operatorname{REPLEARN}(\mathcal{D}, \Phi_h, \mathcal{F}_h, \lambda_k, T_k, \ell_k)$. 
Then, with probability at least $1-\delta$ we have 
\$
&\mathbb{E}_{o_{3-2L:h},a_{3-2L:h} \sim \rho}[\|\mathbb{P}_h^{\widehat{\cP}}(\cdot\mid o_{3-2L:h},a_{3-2L:h})^{\top}-\mathbb{P}_h^{\cP}(\cdot\mid o_{3-2L:h},a_{3-2L:h})^{\top}\|_1^2] \leq \zeta_k=O\bigg(\frac{\log (k|\cF| / \delta)}{k}\bigg),
\$
in addition, we have
\$
&\mathbb{E}_{z_h,a_h \sim \rho}[\|\mathbb{P}_h^{\widehat{\cM}}(\cdot\mid z_h,a_h)^{\top}-\mathbb{P}_h^{\cM}(\cdot\mid z_h,a_h)^{\top}\|_1^2] \leq O\bigg(\frac{\log (k|\cF| / \delta)}{k}+\epsilon_1\bigg).
\$
\end{lemma}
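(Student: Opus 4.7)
I would prove the two bounds in sequence, stage one being a routine MLE concentration on the POMDP side and stage two a triangle-inequality transfer to the approximated MDP that absorbs the approximation error $\epsilon_1$ from Proposition \ref{prop:app-exist}.

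\textbf{Stage one (POMDP MLE).} Line 4 of Algorithm \ref{POBRIEE gamma} performs maximum likelihood of the conditional density $\mathbb{P}_h^{\cP}(o_{h+1}\mid\tau_h,a_h)=\xi_h(\tau_h,a_h)^{\top}\mu_{h+1}(o_{h+1})$ over $\cF$, which is realizable by Assumption \ref{realizability}. The dataset $\cD_h\cup\cD'_h$ accumulates $2k$ trajectory samples that, although drawn under different policies across episodes, are independent conditional on the sequence of policies, and each policy is measurable with respect to data from previous episodes. Applying a standard martingale MLE concentration bound for conditional density estimation over a finite class (e.g.\ Theorem 21 of \citet{agarwal2020flambe} or Proposition 14 of \citet{uehara2021representation}) and union-bounding over $h\in[H]$ yields, with probability at least $1-\delta$,
\$
\mathbb{E}_{\rho_h}\big[\|\widehat{\mathbb{P}}_h^{\cP}(\cdot\mid \tau_h,a_h)-\mathbb{P}_h^{\cP}(\cdot\mid \tau_h,a_h)\|_1^2\big]\le O\!\big(\log(k|\cF|/\delta)/k\big)=\zeta_k,
\$
which is the first claim.

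\textbf{Stage two (transfer to $\cM$).} Both $\cM$ and $\widehat{\cM}$ are produced by the same deterministic map $q$ from Section \ref{approximated_MDP}: $(\phi,\mu)=q(\mathbb{O},\omega,\psi)$ and $(\widehat{\phi},\widehat{\mu})=q(\widehat{\mathbb{O}},\widehat{\omega},\widehat{\psi})$. Writing $z_h$ as a deterministic function of $\tau_h$, I split via the triangle inequality
\$
\|\widehat{\mathbb{P}}_h^{\cM}(\cdot\mid z_h,a_h)-\mathbb{P}_h^{\cM}(\cdot\mid z_h,a_h)\|_1
&\le \|\widehat{\mathbb{P}}_h^{\cM}(\cdot\mid z_h,a_h)-\widehat{\mathbb{P}}_h^{\cP}(\cdot\mid \tau_h,a_h)\|_1\\
&\quad+\|\widehat{\mathbb{P}}_h^{\cP}(\cdot\mid \tau_h,a_h)-\mathbb{P}_h^{\cP}(\cdot\mid \tau_h,a_h)\|_1\\
&\quad+\|\mathbb{P}_h^{\cP}(\cdot\mid \tau_h,a_h)-\mathbb{P}_h^{\cM}(\cdot\mid z_h,a_h)\|_1.
\$
The middle term is controlled in expectation by stage one; the first and third terms are controlled in expectation under the data-collection distribution (which lies in $\Pi^{\operatorname{gen}}$) by applying Proposition \ref{prop:app-exist} once to $\widehat{\cP}$ and once to $\cP$, each giving an $\epsilon_1$ bound. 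Squaring, using $(a+b+c)^2\le 3(a^2+b^2+c^2)$, and taking expectation under $\rho_h$ gives the second claim with rate $O(\log(k|\cF|/\delta)/k+\epsilon_1)$.

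\textbf{Main obstacle.} The delicate point is the change of measure between the trajectory marginal (on which the POMDP MLE bound is phrased) and the $z_h$-marginal (on which the $\cM$ bound is phrased). Since $z_h$ is a deterministic function of $\tau_h$, conditioning on $z_h$ and applying Jensen's inequality passes the bound through; care is needed because $\rho_h$ must denote consistent pushforwards in the two statements, which is the case since data is collected identically. A secondary technical point is that Proposition \ref{prop:app-exist} is stated for ``the'' POMDP, but its construction uses only the low-rank decomposition and $\gamma$-observability; hence it applies equally to $\widehat{\cP}$ provided the learned model lies in $\cF$ (which it does by construction of the MLE step), so the same $L=O(\gamma^{-4}\log(d/\epsilon_1))$ and the same $\epsilon_1$ control both $\|\mathbb{P}_h^{\cP}-\mathbb{P}_h^{\cM}\|_1$ and $\|\widehat{\mathbb{P}}_h^{\cP}-\widehat{\mathbb{P}}_h^{\widehat{\cM}}\|_1$ in expectation. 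The horizon $H$ is absorbed into the logarithm via a union bound.
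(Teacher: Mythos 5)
Your proposal is correct and, for the first claim, follows exactly the route the paper takes: the paper gives no proof of this lemma at all, simply deferring the MLE concentration to \citet{agarwal2020flambe}, which is precisely your stage one. For the second claim the paper is silent, and your stage two supplies the argument the authors evidently have in mind: a three-term triangle inequality through $\widehat{\mathbb{P}}_h^{\cP}$ and $\mathbb{P}_h^{\cP}$, with the two outer terms controlled by Proposition \ref{prop:app-exist} applied to $\widehat{\cP}$ and $\cP$ respectively. Two small points to tighten. First, Proposition \ref{prop:app-exist} bounds $\mathbb{E}[\|\cdot\|_1]\le\epsilon_1$, not $\mathbb{E}[\|\cdot\|_1^2]$; to land on the stated rate $O(\log(k|\cF|/\delta)/k+\epsilon_1)$ rather than $+\epsilon_1^2$ you should invoke $\|\cdot\|_1\le 2$ so that $\mathbb{E}[\|\cdot\|_1^2]\le 2\,\mathbb{E}[\|\cdot\|_1]\le 2\epsilon_1$ — your $(a+b+c)^2\le 3(a^2+b^2+c^2)$ step needs this to be consistent with the lemma as written. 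Second, applying Proposition \ref{prop:app-exist} to $\widehat{\cP}$ requires that every model in $\cF$ (not just the realizable one) is itself a $\gamma$-observable low-rank POMDP; you correctly flag this, and it is an implicit assumption the paper never states. Neither point is a gap in your reasoning so much as a gap in the paper that your write-up makes visible.
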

\section{Missing Proofs of Section \ref{approximated_MDP}}\label{lem:value}
\subsection{Construction of the approximated MDP}\label{construct_q_function}
To construct this approximated MDP, we need to first calculate the belief state and approximated belief state, which is the conditional probability of state $s_h$ given the true transition and an action and observation sequence $\{o_{3-2L},a_{3-2L},\cdots,a_{h},o_{h}\}$ and $1\leq h\leq H$ respectively.

Consider a POMDP and a history $(o_{3-2L: h}, a_{3-2L: h-1}) \in \mathcal{H}_h$, the belief state $\mathbf{b}_h^{\mathcal{P}}(o_{3-2L: h}, a_{3-2L: h-1}) \in \Delta(\mathcal{S})$ is given by the distribution of the state $s_h$ conditioned on taking actions $a_{1:h-1}$ and observing $o_{1:h}$ in the first $h$ steps. Formally, the belief state is defined inductively as follows: $\mathbf{b}_1^{\mathcal{P}}(\varnothing)=b_1$, where $b_1$ is a properly chosen prior distribution whose precise form is deferred to appendix C. For $2 \leq h \leq H$ and any $(o_{3-2L: h}, a_{3-2L: h-1}) \in \mathcal{H}_h$, define
\$
&\mathbf{b}_h^{\mathcal{P}}(o_{3-2L: h}, a_{3-2L: h-1}):=U_{h-1}^{\mathcal{P}}(\mathbf{b}_{h-1}^{\mathcal{P}}(a_{1: h-2}, o_{2: h-1}) ; a_{h-1}, o_h),
\$
where for $\mathbf{b} \in \Delta(\mathcal{S}), a \in \mathcal{A}, o \in \mathcal{O},$ the belief update operator $ U_h^{\mathcal{P}}$ is defined as
\$
&U_h^{\mathcal{P}}(\mathbf{b} ; a, o)(s):=\frac{\mathbb{O}_{h+1}(o \mid s) \cdot \sum_{s^{\prime} \in \mathcal{S}} \mathbf{b}(s^{\prime}) \cdot \mathbb{P}_h(s \mid s^{\prime}, a)}{\sum_{x \in \mathcal{S}} \mathbb{O}_{h+1}(o \mid x) \sum_{s^{\prime} \in \mathcal{S}} \mathbf{b}(s^{\prime}) \cdot \mathbb{P}_h(x \mid s^{\prime}, a)},
\$
This operator calculates the belief state for $h+1$-step when the belief for the $h$-step is $b$, and after the agent takes action $a$ and receives the observation $o$.

Recall that  $\cP$ has a low-rank structure. For an extended POMDP, we have
\#\label{eq:pomdp_low_decom}
&\PP_h^{\cP}(o_{h+1} \mid o_{3-2L: h}, a_{3-2L: h})= \int_{\cS_{h+1}}\omega_h(s_{h+1})\cdot\mathbb{O}_{h+1}(o_{h+1}\mid s')\operatorname{ds}'\cdot\int_{\cS} \psi_h(s,a_h) b_h^{\cP}(o_{3-2L: h}, a_{3-2L: h-1})(s)\operatorname{ds}.
\#

For $h\in [H]$, $\tau_h\in \mathcal{H}_h$, $o_h\in \cO$ and $a_h\in \cZ_h$, we denote 
\#\label{eq:POMDP_realize}
& \int \psi_h(s_h,a_h) b_h^{\cP}(\tau_h)(s_h)\operatorname{ds}_h=\xi_h(\tau_h,a_h),
\# 
we define the approximated belief $\overline{b}_h(o_{h-L:h},a_{h-L:h-1})$ to approximate the true belief $b_h(o_{3-2L:h},a_{3-2L:h-1})$. 

For $\mathbf{b} \in \Delta(\mathcal{S})$ and $o\in \cO$, we define $B(\mathbf{b},o)$ as the operation that incorporates observation $o$ by 
\$
B(\mathbf{b},o)(s):=\frac{\mathbb{O}(o \mid s) \cdot \mathbf{b}(s)}{\sum_{x \in \mathcal{S}} \mathbb{O}(o \mid x) \sum_{s^{\prime} \in \mathcal{S}} \mathbf{b}(s')},
\$
which denotes the belief distribution after receiving the observation $o$ as the original belief distribution was $b$.

For an action and observation sequence $\{o_{3-2L},a_{3-2L},\cdots,o_{H},a_H\}$ and $2\leq h\leq H$. We define the approximated belief as :
\#
&\overline{b}_{h-L}=B(\tilde{b}^{h-L}_{0}, o_{h-L}),\label{eq:appro_belief}\\
&\overline{b}_{h-L+\tau}(o_{h-L: h-L+\tau}, a_{h-L: h-L-1+\tau})
=U^{\cP}_{h-L-1+\tau}(\overline{b}_{h-L-1+\tau}(o_{h-L: h-L-1+\tau}, a_{h-L: \tau}), o_{h-L+\tau}, a_{h-L-1+\tau}), 1 \leq \tau \leq l,\nonumber
\#
where the construction of the initial belief $\tilde{b}^{h-L}_0$ can be found in Appendix \ref{construct}.

We have the following lemma to give an upper bound for the difference between the approximate belief and the true belief. The detailed proof can be found in Appendix \ref{proof:app-exist}.

\begin{lemma}\label{masa}
For any policy $\pi\in \Delta(\Acal)$ and $h\geq 2-L$, the approximate belief $\overline{b}$ defined in \eqref{eq:appro_belief} satisfy that
\$
&\mathbb{E}^{\pi,\cP}[\|b_{h+L}(o_{3-2L: h+L}, a_{3-2L: h+L-1})-\overline{b}_{h+L}(z_h,a_h)\|_{1}]\leq \epsilon_1,~\text{where}~(z_h,a_h)=(o_{h: h+L}, a_{h: h+L-1}).
\$
\end{lemma}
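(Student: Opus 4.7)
The plan is to reduce Lemma \ref{masa} to the standard belief-contraction result for $\gamma$-observable POMDPs, as established by \citet{golowich2022planning} and used in the low-rank extension of \citet{uehara2022provably}. The point is that, while the one-step Bayesian belief update $U^{\cP}_h(\cdot; a, o)$ is in general non-expansive but not contractive in $\ell_1$, under $\gamma$-observability it becomes strictly contractive \emph{in expectation} over observations drawn from the true POMDP dynamics. Hence repeatedly folding in the last $L$ observations washes out any initial mismatch between the true belief $b_{h-L}$ (which summarizes the entire past $o_{3-2L:h-L}, a_{3-2L:h-L-1}$) and the chosen prior $\tilde{b}_0^{h-L}$ (constructed in Appendix \ref{construct}).

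Concretely, I would proceed in three steps. First, I would invoke the observability-based one-step contraction: there is an absolute constant $c>0$ such that for any two beliefs $\mathbf{b},\mathbf{b}'\in\Delta(\cS)$, any policy-selected action $a$, and observation $o$ sampled from the ground-truth predictive law induced by either belief,
\[
\mathbb{E}\bigl[\,\|U^{\cP}_h(\mathbf{b};a,o)-U^{\cP}_h(\mathbf{b}';a,o)\|_1\,\bigr]\;\leq\;\bigl(1-c\gamma^{4}\bigr)\,\|\mathbf{b}-\mathbf{b}'\|_1 .
\]
This is the contraction lemma of \citet{golowich2022planning} (their Section 4 / Lemma 4.9); it follows from writing out the Bayesian update, applying Assumption \ref{observability} after a Pinsker-style step, and averaging over $o$ under the true emission $\mathbb{O}_{h+1}$.

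Second, I would chain this inequality $L$ times along the trajectory $a_{h-L:h+L-1}, o_{h-L+1:h+L}$ generated by $\pi$ on the true POMDP. Noting that $b_{h-L}$ itself can be written as the exact belief after incorporating $o_{h-L}$ into $b_{h-L-1}$, and that $\overline{b}_{h-L}$ is defined in \eqref{eq:appro_belief} by starting from $\tilde{b}_0^{h-L}$ and folding in the same $o_{h-L}$, both beliefs evolve under the \emph{same} operator sequence after step $h-L$. Iterating the contraction thus yields
\[
\mathbb{E}^{\pi,\cP}\bigl[\,\|b_{h+L}-\overline{b}_{h+L}\|_1\,\bigr]\;\leq\;(1-c\gamma^{4})^{L}\cdot \mathbb{E}^{\pi,\cP}\bigl[\,\|b_{h-L}-\tilde{b}_0^{h-L}\|_1\,\bigr]\;\leq\;2(1-c\gamma^{4})^{L}.
\]
Third, substituting $L=\Theta(\gamma^{-4}\log(d/\epsilon_1))$ makes the right-hand side at most $\epsilon_1$, giving the claim.

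The main obstacle is the one-step contraction inequality itself: the naive total-variation bound on Bayes' rule is only non-expansive, and the $(1-c\gamma^{4})$ rate genuinely requires $\gamma$-observability and the averaging over $o\sim \mathbb{P}^{\cP}$. All the per-trajectory martingale-style arguments must be done under the \emph{true} predictive distribution (not under $\tilde{b}_0^{h-L}$), because the expectation in the statement of Lemma \ref{masa} is over $\pi$ rollouts in $\cP$. Fortunately, this contraction lemma is exactly what \citet{golowich2022planning} proved, so I would invoke it as a black box and focus the proof on verifying (i) that the approximate-belief recursion \eqref{eq:appro_belief} shares the same operator sequence as the true-belief recursion after time $h-L$, and (ii) that the dummy construction of $\tilde{b}_0^{h-L}$ in Appendix \ref{construct} yields a valid probability vector so the initial $\ell_1$ distance is bounded by $2$. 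The bookkeeping with the extended-horizon indexing $h\ge 2-L$ and the dummy prefix $o_{3-2L:0},a_{3-2L:0}$ is purely notational and does not affect the contraction count.
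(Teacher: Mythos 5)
Your proposal follows essentially the same route as the paper, which proves this lemma by direct reduction to Theorem 14 of \citet{https://doi.org/10.48550/arxiv.2206.12020} (itself built on the belief-contraction machinery of \citet{golowich2022planning}); your write-up simply unpacks the argument behind that citation, and the skeleton — same update-operator sequence for $b$ and $\overline{b}$ after step $h-L$, geometric contraction under $\gamma$-observability, then choose $L=\Theta(\gamma^{-4}\log(d/\epsilon_1))$ — is the right one.

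One caveat on the black-box step: the contraction lemma of \citet{golowich2022planning} is not a clean per-step inequality $\mathbb{E}\|U(\mathbf{b})-U(\mathbf{b}')\|_1\le(1-c\gamma^4)\|\mathbf{b}-\mathbf{b}'\|_1$ valid for \emph{arbitrary} prior pairs; it requires an absolute-continuity/domination condition between the true belief and the surrogate prior, and the resulting bound is geometric in $L$ with a prefactor controlled by the density ratio (not by the trivial $\ell_1$ bound of $2$). This is exactly the role of the G-optimal-design prior $\tilde{b}_0^{h-L}$ constructed in Appendix \ref{construct}: it guarantees the domination constant scales with $d$ rather than $|\mathcal{S}|$, and that is where the $d$ inside $\log(d/\epsilon_1)$ actually comes from — not from the initial TV distance, as your step two suggests. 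So when you verify your point (ii), the thing to check is not merely that $\tilde{b}_0^{h-L}$ is a valid probability vector but that it dominates every reachable belief $b_{h-L}$ up to a $\mathrm{poly}(d)$ factor; with that correction the argument goes through as you describe.
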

We have
\#\label{eq:mugamma}\int\omega_h (s'_{h})\cdot\mathbb{O}_h(o_{h}\mid s'_{h})\operatorname{ds}'_{h}=\mu_h(o_{h})\#

Now we can construct $\cM = (\Scal,\Acal, \Pcal, r)$ as following
\$
&\PP_h^{\cM}(o_{h+1} \mid o_{h+1-L: h}, a_{h+1-L: h})\\& = \int_{\cS_{h+1}}\omega_{h+1}(s_{h+1})\cdot\mathbb{O}_{h+1}(o_{h+1}\mid s_{h+1})\operatorname{ds}_{h+1}\cdot\int_{\cS_h}\psi(s_h,a_h) \overline{b}_h^{\cP}(o_{h-L+1: h}, a_{h-L+1: h-1})(s_h)\operatorname{ds}_h.
\$
We denote
\#\label{eq:phi}
\int \psi(s_h,a_h) \overline{b}_h^{\cP}(z_h)(s_h)\operatorname{ds}_h=\phi(z_h,a_h).\#
We define $(\mu,\phi,\xi)=q(\mathbb{O},\omega,\psi)$ by \eqref{eq:POMDP_realize}, \eqref{eq:mugamma} and \eqref{eq:phi}.
\subsection{Construction of $\tilde{b}^h_0$}\label{construct}
Now we present the construction of $\tilde{b}^h_0$ for $h\in[H]$, which is done by \citet{uehara2022provably}.
\begin{lemma}[G-optimal design \citep{uehara2022provably}]\label{gproperty}
 Suppose $\mathcal{X} \in \mathbb{R}^d$ is a compact set. There exists a distribution $\rho$ over $\mathcal{X}$ such that:
(i) $\rho$ is supported on at most $d(d+1) / 2$ points.
(ii) For any $x^{\prime} \in \mathcal{X}$, we have $x^{\prime \top} \mathbb{E}_{x \sim \rho}\left[x x^{\top}\right]^{-1} x^{\prime} \leq d$.
\end{lemma}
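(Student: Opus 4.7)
The plan is to invoke the classical Kiefer--Wolfowitz equivalence theorem, which reduces this G-optimal-design statement to a D-optimal-design problem that is easier to analyze. Define $M(\rho) := \mathbb{E}_{x \sim \rho}[x x^\top]$ and consider the D-optimal design problem $\max_\rho \log\det M(\rho)$ over Borel probability measures on $\mathcal{X}$. Since $\mathcal{X}$ is compact and (without loss of generality, after projecting out directions that vanish identically) spans $\mathbb{R}^d$, the set of feasible moment matrices is a compact convex subset of the symmetric $d\times d$ matrices containing a positive-definite element; the concave, continuous objective $\rho \mapsto \log\det M(\rho)$ then attains a maximizer $\rho^*$ with $M(\rho^*) \succ 0$ by weak-$*$ compactness.

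Next I would extract first-order KKT conditions via Dirac perturbations. For arbitrary $x \in \mathcal{X}$, set $\rho_\epsilon = (1-\epsilon)\rho^* + \epsilon\delta_x$; the matrix-calculus identity $\partial\log\det M = M^{-1}$ yields
$$\frac{d}{d\epsilon}\log\det M(\rho_\epsilon)\bigg|_{\epsilon=0^+} = \tr\bigl(M(\rho^*)^{-1}(xx^\top - M(\rho^*))\bigr) = x^\top M(\rho^*)^{-1} x - d.$$
Optimality forces this one-sided derivative to be $\leq 0$, yielding $x^\top M(\rho^*)^{-1} x \leq d$ for every $x \in \mathcal{X}$, which is exactly property (ii). A two-sided perturbation along any $x \in \supp(\rho^*)$ further shows equality on the support, a fact that will be useful for the support-size bound.

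For property (i), I would apply Caratheodory's theorem to the convex hull $K := \operatorname{conv}\{xx^\top : x \in \mathcal{X}\}$ inside the $d(d+1)/2$-dimensional real vector space of symmetric $d\times d$ matrices. A direct application yields a representation of $M(\rho^*)$ as a convex combination of at most $d(d+1)/2 + 1$ rank-one atoms. To shave this down to the stated $d(d+1)/2$, I would exploit that $M(\rho^*)$ lies on the KKT hyperplane $\{M : \tr(M(\rho^*)^{-1} M) = d\}$, a linear constraint that cuts the affine hull containing the representation down to dimension $d(d+1)/2 - 1$; Caratheodory applied within this reduced affine hull then produces the required $d(d+1)/2$-atom representation.

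The main obstacle is precisely this refinement from $d(d+1)/2 + 1$ down to $d(d+1)/2$: one must verify that the KKT hyperplane genuinely reduces the Caratheodory count (equivalently, that $M(\rho^*)$ sits on an exposed face of $K$ of the right codimension) and handle the degenerate edge cases where $\mathcal{X}$ fails to span $\mathbb{R}^d$ or where the optimal support is already too small. The remaining ingredients---existence via weak-$*$ compactness, first-order stationarity, and Caratheodory itself---are all textbook convex analysis and require no delicate estimation.
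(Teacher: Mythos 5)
Your proof is correct, but note that the paper does not actually prove this lemma: it is imported verbatim from \citet{uehara2022provably} as the classical Kiefer--Wolfowitz / G-optimal design theorem, so there is no in-paper argument to compare against. Your blind reconstruction is the standard textbook proof, and the one ``obstacle'' you flag (shaving $d(d+1)/2+1$ down to $d(d+1)/2$) closes cleanly: since $\mathbb{E}_{x\sim\rho^*}[x^\top M(\rho^*)^{-1}x]=\tr\bigl(M(\rho^*)^{-1}M(\rho^*)\bigr)=d$ and the first-order condition gives $x^\top M(\rho^*)^{-1}x\le d$ pointwise on $\mathcal{X}$, equality holds $\rho^*$-almost everywhere, so every atom $xx^\top$ in the support lies on the affine hyperplane $\{S:\tr(M(\rho^*)^{-1}S)=d\}$ of dimension $d(d+1)/2-1$; Carath\'eodory within that affine hull yields at most $d(d+1)/2$ atoms, and since the re-represented design has the same moment matrix, property (ii) is preserved. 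The only remaining housekeeping is the non-spanning case, where one restricts to $\operatorname{span}(\mathcal{X})$ (or reads $M^{-1}$ as a pseudo-inverse), exactly as you suggest. In short: a correct, self-contained derivation of a result the paper only cites.
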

By Lemma \ref{gproperty}, there exist $\rho_h\in \Delta(\cS_h\times\cA_h)$ to be the G-optimal design for $\psi_h(s,a)$. Denote the support of $\rho_h$ to be $S_{\rho,h}$, $S_{\rho,h}$ has most $d(d+1)/2$ points. We denote $S_{\rho,h}=\{s^i_h,a^i_h\}^{|S_{\rho,h}|}_{i=1}$.

We set $\tilde{b}^h_0$ as
\$
\tilde{b}^h_0(\cdot):=\sum_{\tilde{s}, \tilde{a}} \rho_h(\tilde{s}, \tilde{a}) \mathbb{P}^{\cP}_h(\cdot \mid \tilde{s}, \tilde{a})=\sum_{i=1}^{\left|S_{\rho,h}\right|} \rho\left(s^i_h, a^i_h\right) \mathbb{P}^{\cP}_h\left(\cdot \mid s^i_h, a^i_h\right)
\$
\subsection{Proof for Proposition \ref{prop:app-exist}}\label{proof:app-exist}

First, we prove the existence of the approximated MDP $\cM$ in Proposition \ref{prop:app-exist}.

\begin{proof}[Proof of Lemma \ref{masa}]
The proof is same to Theorem 14 of \citet{https://doi.org/10.48550/arxiv.2206.12020}, the only difference is that the process is start from $3-2L$ instead of $1$.
\end{proof}
Now we construct the approximated MDP $\cM$. For a state $z_{h}=(o_{h-L: h}, a_{h-L: h-1}) \in \mathcal{Z}$, action $a_{h}$, and subsequent observation $o_{h+1} \in \mathcal{O}$, define
$$
\mathbb{P}_{h}^{\mathcal{M}}((o_{h-L+1: h+1}, a_{h-L+1: h}) \mid z_{h}, a_{h}):=e_{o_{h+1}}^{\top} \cdot \mathbb{O}_{h+1}^{\mathcal{P}} \cdot \mathbb{T}_{h}^{\mathcal{P}}(a_{h}) \cdot \overline{b}_{h}(o_{h-L:h},a_{h-L:h-1}) .
$$
Hence we can define 
\$
\phi(z_h,a_h)=\mathbb{T}_{h}^{\mathcal{P}}(a_{h}) \cdot \overline{b}_{h},~\mu(o_{h+1})=e_{o_{h+1}}^{\top} \cdot \mathbb{O}_{h+1}^{\mathcal{P}}
\$

Note that for the POMDP $\cP$, we have 
\$
\mathbb{P}_{o_{h+1} \sim \pi}^{\mathcal{P}}(o_{h+1} \mid o_{1: h}, a_{1: h})=e_{o_{h+1}}^{\top} \cdot \mathbb{O}_{h+1}^{\mathcal{P}} \cdot \mathbb{T}_{h}^{\mathcal{P}}(a_{h}) \cdot \mathbf{b}_{h}^{\mathcal{P}}(o_{1: h-1}, a_{1: h}).
\$
Lemma \ref{masa} shows that $\mathbb{E}[\|b_{h+L}(o_{3-2L: h+L}, a_{3-2L: h+L-1})-\overline{b}_{h+L}(o_{h: h+L}, a_{h: h+L-1})\|_{1} ; a_{1: h+L-1} \sim \pi]$ is small in expectation under any policy $\pi$, so we have 
\$
\mathbb{E}_{a_{1: h}, o_{2: h} \sim \pi}\|\mathbb{P}_{h}^{\mathcal{M}}(o_{h+1} \mid z_{h}, a_{h})-\mathbb{P}_{h}^{\mathcal{P}}(o_{h+1} \mid o_{1: h}, a_{1: h})\|_1 \leq \epsilon_1.
\$
With the approximated MDP and approximated belief, we have the following lemma.
\begin{lemma}\label{transfer from pomdp to mdp}
For any function $g:\cS \rightarrow \mathbb{R}$, $\|g\|_{\infty}\leq 1$, we have
\$
\mathbb{E}_{a_{1: h}, o_{2: h} \sim \pi}\int_{s_{h+1}}\mathbb{P}_{h}^{\mathcal{P}}(s_{h+1} \mid o_{1: h}, a_{1: h})g(s_{h+1})\operatorname{ds} _{h+1}&\leq \mathbb{E}_{a_{1: h}, o_{2: h} \sim \pi}\int_{s_{h+1}}\omega(s_{h+1})^{\top}\phi(z_h,a_h)g(s_{h+1})\operatorname{ds}_{h+1}+\epsilon_1.
\$
\end{lemma}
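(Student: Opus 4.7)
\textbf{Proof plan for Lemma \ref{transfer from pomdp to mdp}.} The strategy is to rewrite both sides using the low-rank decomposition so that they differ only through the belief, and then invoke Lemma \ref{masa} to control the belief gap. Concretely, by the low-rank decomposition $\mathbb{P}_h(s_{h+1}\mid s,a)=\omega(s_{h+1})^{\top}\psi(s,a)$ together with the definitions $\xi_h(\tau_h,a_h)=\int \psi(s,a_h)\, b_h^{\mathcal{P}}(\tau_h)(s)\,ds$ from \eqref{eq:POMDP_realize} and $\phi(z_h,a_h)=\int \psi(s,a_h)\,\overline{b}_h(z_h)(s)\,ds$ from \eqref{eq:phi}, the LHS equals $\mathbb{E}_{\pi}\!\int \omega(s_{h+1})^{\top}\xi_h(\tau_h,a_h)\,g(s_{h+1})\,ds_{h+1}$, while the RHS (without $\epsilon_1$) equals $\mathbb{E}_{\pi}\!\int \omega(s_{h+1})^{\top}\phi(z_h,a_h)\,g(s_{h+1})\,ds_{h+1}$.

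Subtracting and exchanging the order of integration, the gap becomes
\[
\mathbb{E}_{\pi}\!\int_{s}\bigl[b_h^{\mathcal{P}}(\tau_h)(s)-\overline{b}_h(z_h)(s)\bigr]\cdot\underbrace{\int \mathbb{P}_h(s_{h+1}\mid s,a_h)\,g(s_{h+1})\,ds_{h+1}}_{=:G(s,a_h)}\,ds.
\]
Since $\|g\|_{\infty}\le 1$, we have $|G(s,a_h)|\le 1$ pointwise. Hence by H\"older's inequality the integrand in the outer expectation is at most $\|b_h^{\mathcal{P}}(\tau_h)-\overline{b}_h(z_h)\|_1$, and the full gap is upper-bounded by
\[
\mathbb{E}_{a_{1:h},o_{2:h}\sim\pi}\bigl\|b_h^{\mathcal{P}}(o_{3-2L:h},a_{3-2L:h-1})-\overline{b}_h(z_h)\bigr\|_1.
\]

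Finally, I would apply Lemma \ref{masa} (with indices shifted so that the approximated belief constructed from the $L$-step window $(o_{h-L+1:h},a_{h-L+1:h-1})=z_h$ is compared with the true belief built from the full history) to conclude the right-hand side is at most $\epsilon_1$. Rearranging gives the claimed bound. The proof is essentially mechanical once the decomposition is in place; the only subtle point is the bookkeeping on indices between the window used in $\overline{b}_h$ and the full history used in $b_h^{\mathcal{P}}$, which must align with the version of Lemma \ref{masa} stated above. No new ideas beyond the low-rank structure, the triangle/H\"older bound, and the $L$-memory belief approximation are required.
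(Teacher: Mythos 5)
Your proposal is correct and follows essentially the same route as the paper's proof: both expand the LHS via the low-rank decomposition through the true belief, add and subtract the approximated belief, bound the inner integral $\int \omega(s_{h+1})^{\top}\psi(s,a_h)|g(s_{h+1})|\,\operatorname{ds}_{h+1}\le 1$ pointwise (your $|G(s,a_h)|\le 1$), reduce the gap to $\mathbb{E}_{\pi}\|b_h^{\cP}-\overline{b}_h\|_1$, and invoke Lemma \ref{masa}. The index bookkeeping you flag is indeed the only subtlety, and the paper handles it the same way by citing Lemma \ref{masa} with the appropriate shift.
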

\begin{proof}
We have 
\$
&\mathbb{E}_{a_{1: h}, o_{2: h} \sim \pi}\int_{s_{h+1}}\mathbb{P}_{h}^{\mathcal{P}}(s_{h+1} \mid o_{1: h}, a_{1: h})g(s_{h+1})\operatorname{ds}_{h+1}\\&\quad =\mathbb{E}_{a_{1: h}, o_{2: h} \sim \pi}\int_{s_{h+1}}\omega(s_{h+1}) ^{\top} \int_{s_h} \psi(s_h,a_{h}) b_h^{\cP}(o_{3-2L: h}, a_{3-2L: h})(s_h)g(s_{h+1})\operatorname{ds}_h\operatorname{ds} _{h+1}\\&\quad\leq  \mathbb{E}_{a_{1: h}, o_{2: h} \sim \pi}\int_{s_{h+1}}\omega(s_{h+1}) ^{\top} \int_{s_h} \psi(s_h,a_{h})| (b_h^{\cP}(o_{3-2L: h}, a_{3-2L: h})-\overline{b}_{h}(z_h,a_h))(s_h)|)|g(s_{h+1})|\operatorname{ds}_h\operatorname{ds} _{h+1}\\&\qquad\quad+\mathbb{E}_{a_{1: h}, o_{2: h} \sim \pi}\int_{s_{h+1}}\omega(s_{h+1}) ^{\top} \int_{s_h} \psi(s_h,a_{h}) \overline{b}_h(z_h,a_h)(s_h)g(s_{h+1})\operatorname{ds}_h\operatorname{ds}_{h+1},
\$
where the inequality is by Cauchy-Schwarz inequality.

Since we have $\int_{s_{h+1}}\omega(s_{h+1}) ^{\top} \psi(s_h,a_{h})|g(s_{h+1})|\operatorname{ds} _{h+1}\leq 1$ for any $s_h$, so we have
\$
&\mathbb{E}_{a_{1: h}, o_{2: h} \sim \pi}\int_{s_{h+1}}\omega(s_{h+1}) ^{\top} \int_{s_h} \psi(s_h,a_{h})| (b_h^{\cP}(o_{3-2L: h}, a_{3-2L: h})-\overline{b}_{h}(z_h,a_h))(s_h)||g(s_{h+1})|\operatorname{ds}_h\operatorname{ds}_{h+1}\\&\quad\leq \mathbb{E}_{a_{1: h}, o_{2: h} \sim \pi}\int_{s_h}\big| (b_h^{\cP}(o_{3-2L: h}, a_{3-2L: h})-\overline{b}_{h}(z_h,a_h))(s_h)\big|\operatorname{ds}_h\\&\quad\leq \epsilon_1,
\$
where the last inequality is by Lemma \ref{masa}.

Remember that $\int_{\cS} \psi(s,a_{h-L})\overline{b}_h(z_{h-L},a_{h-L})(s)\operatorname{ds}=\phi_h(z_h,a_h)$, so we have 
\$
&\mathbb{E}_{a_{1: h}, o_{2: h} \sim \pi}\int_{s_{h+1}}\mathbb{P}_{h}^{\mathcal{P}}(s_{h+1} \mid o_{1: h}, a_{1: h})g(s_{h+1})\operatorname{ds} _{h+1}\\&\quad\leq \mathbb{E}_{a_{1: h}, o_{2: h} \sim \pi}\int_{s_{h+1}}\omega(s_{h+1})^{\top}\phi(z_h,a_h)g(s_{h+1})\operatorname{ds}_{h+1}+\epsilon_1,
\$
which concludes the proof.
\end{proof}
\subsection{Proof of Lemma \ref{dif}}
\begin{proof}[Proof of Lemma \ref{dif}]
$$
\begin{aligned}
|V_{1}^{\pi, \mathcal{P}, r}(\varnothing)-V_{1}^{\pi, \mathcal{M}, r}(\varnothing)|
&=\bigg|\mathbb{E}_{(a_{1: H-1}, o_{2: H}) \sim \pi}^{\mathcal{P}}\bigg[\sum_{h=1}^{H}((\mathbb{P}_{h}^{\mathcal{P}}-\mathbb{P}_{h}^{\mathcal{M}})(V_{h+1}^{\pi, \mathcal{M}, r}+r_{h+1}))(a_{1: h}, o_{2: h})\bigg]\bigg|\\ &=\bigg|\mathbb{E}^{\mathcal{P}}_{(a_{1: H-1}, o_{2: H}) \sim \pi}\bigg[\sum_{h=1}^{H-1}((\mathbb{P}_{h}^{\mathcal{P}}-\mathbb{P}_{h}^{\mathcal{M}})(V_{h+1}^{\pi, \mathcal{M}, r}+r_{h+1}))(a_{1: h}, o_{2: h})\bigg]\bigg| \\&\leq \frac{H}{2} \cdot \mathbb{E}_{(o_{3-2L: h}, a_{3-2L: h-1}) \sim \pi}^{\mathcal{P}}\bigg[\sum_{h=1}^{H-1}\|\mathbb{P}_{h}^{\mathcal{P}}(\cdot \mid a_{1: h}, o_{2: h})-\mathbb{P}_{h}^{\mathcal{M}}(\cdot \mid z_{h}, a_{h})\|_{1}\bigg]\\&\leq \frac{H^2\epsilon_1}{2},\end{aligned}
$$
where the first inequality uses the fact that $|(V_{h+1}^{\pi, \mathcal{M}, r}+r_{h+1})(a_{1: h}, o_{2: h+1})| \leq H \text { for all } a_{1: h}, o_{2: h+1} \text {, }$ the second inequality is by Proposition \ref{prop:app-exist}.
\end{proof}
\begin{lemma}[$L$-step back inequality for the true POMDP]\label{$L$-step no}
Consider a set of functions $\{g_{h}\}_{h=0}^{H}$ that satisfies $g_{h} \in \mathcal{Z} \times \mathcal{A} \rightarrow \mathbb{R}$, s.t. $\|g_{h}\|_{\infty} \leq B$ for all $h \in[H]$. Then, for any policy $\pi$, we have
\$
\sum_{h=1}^H \mathbb{E}^{\cP}_{\pi}[g(z_h,a_h)]& \leq \sum_{h=1}^H \mathbb{E}^{\cP}_{z_{h-L-1},a_{h-L-1}\sim \pi}\|\phi^{\top}(z_{h-L-1}, a_{h-L-1})\|_{\beta^{-1}_{h-L-1},\phi_{h-L-1}}\\& \qquad\quad \cdot \sqrt{|A|^L k\cdot\mathbb{E}^{\cP}_{(\tilde{z}_{h},\tilde{a}_{h}) \sim \gamma_{h}}\{[g(\tilde{z}_{h},\tilde{a}_{h})]^{2}\}+B^{2} \lambda_{k} d+kB^2\epsilon_1}+B\epsilon_1.
\$
\end{lemma}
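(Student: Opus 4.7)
My plan is to mirror the argument used for Lemma (\emph{$L$-step back inequality for the true model}) in the decodable case, but to route the bilinear decomposition through the approximated MDP $\cM$ rather than the POMDP $\cP$ itself, and then pay the unavoidable approximation cost via Proposition \ref{prop:app-exist} and Lemma \ref{transfer from pomdp to mdp}. Concretely, I would first fix $h$ and define the moment-matching policy $\mu^{\pi,h}$ exactly as in the proof of Lemma (\emph{$L$-step back inequality for the true model}), together with the intermediate policy $\tilde{\pi}^h$ that follows $\pi$ for the first $h-L$ steps and $\mu^{\pi,h}$ thereafter. Invoking Lemma B.2 of \citet{DBLP:journals/corr/abs-2202-03983} gives $\mathbb{E}^{\cP}_{\pi}[g(z_h,a_h)] = \mathbb{E}^{\cP}_{\tilde{\pi}^h}[g(z_h,a_h)]$, so the task reduces to controlling the latter quantity in terms of the $L$-memory covariance $\Sigma_{\beta_{h-L-1},\phi_{h-L-1}}$.

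Next, I would peel off the transition at step $h-L-1$: conditioned on $(z_{h-L-1},a_{h-L-1})$, the distribution of $s_{h-L}$ under $\cP$ is only \emph{approximately} given by $\phi(z_{h-L-1},a_{h-L-1})^{\top}\omega(\cdot)$, since $\cP$ itself need not be exactly low-rank and the low-rank factorisation lives in $\cM$. Applying Lemma \ref{transfer from pomdp to mdp} to the function $\tilde g(s_{h-L}) := \mathbb{E}^{\cP}_{a_{h-L:h}\sim \mu^{\pi,h}}[g(z_h,a_h)\mid s_{h-L}]$ (which is bounded by $B$) bounds the POMDP expectation by the bilinear surrogate $\mathbb{E}^{\cP}_{z_{h-L-1},a_{h-L-1}\sim\pi}\langle \phi(z_{h-L-1},a_{h-L-1}),\int \omega(s_{h-L})\tilde g(s_{h-L})\,ds_{h-L}\rangle$ up to an additive error $B\epsilon_1$ (this is where the trailing $B\epsilon_1$ in the target bound comes from). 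Then Cauchy--Schwarz in the norm induced by $\Sigma_{\beta_{h-L-1},\phi_{h-L-1}}$ splits this bilinear form into the desired feature-uncertainty factor $\|\phi(z_{h-L-1},a_{h-L-1})\|_{\beta^{-1}_{h-L-1},\phi_{h-L-1}}$ times the $\Sigma_{\beta_{h-L-1},\phi_{h-L-1}}$-norm of the integral.

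The remaining step is to bound the latter norm. Expanding it gives
\$
\bigl\|\textstyle\int \omega(s_{h-L})\tilde g(s_{h-L})\,ds_{h-L}\bigr\|^2_{\beta_{h-L-1},\phi_{h-L-1}}
\leq k\,\mathbb{E}_{(\tilde z,\tilde a)\sim\beta_{h-L-1}}\bigl[\bigl(\textstyle\int \omega^\top\!(s)\phi(\tilde z,\tilde a)\tilde g(s)\,ds\bigr)^2\bigr] + B^2\lambda_k d.
\$
Inside the expectation, I would rewrite $\int \omega^\top(s)\phi(\tilde z,\tilde a)\tilde g(s)\,ds$ as $\mathbb{E}_{s\sim\PP^{\cM}(\cdot\mid \tilde z,\tilde a)}[\tilde g(s)]$, then swap $\PP^{\cM}$ for $\PP^{\cP}$, paying a further $k B^2 \epsilon_1$ inside the square-root via Proposition \ref{prop:app-exist} (this yields the $kB^2\epsilon_1$ term). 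Jensen's inequality then lets me drop the conditional expectation, and an importance-sampling change of measure from $a_{h-L:h}\sim\mu^{\pi,h}$ to $a_{h-L:h}\sim U(\cA)$ costs a factor $|A|^L$ and converts the measure to $\gamma_h$. The resulting inside-square-root quantity is exactly $|A|^L k\,\mathbb{E}_{(\tilde z_h,\tilde a_h)\sim \gamma_h}[g(\tilde z_h,\tilde a_h)^2] + B^2\lambda_k d + kB^2\epsilon_1$, and summing over $h$ reproduces the claimed bound.

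The main obstacle, and the only genuinely new ingredient compared with the decodable-case proof, is being careful about \emph{where} the $\epsilon_1$ error enters: it must appear both (i) as an outer additive $B\epsilon_1$ (from applying Lemma \ref{transfer from pomdp to mdp} once to transport $\phi$ back through a true-POMDP transition) and (ii) as an inner additive $kB^2\epsilon_1$ under the square root (from approximating the inner $\PP^{\cM}$ expectation by the $\PP^{\cP}$ one after squaring and taking expectations under $\beta_{h-L-1}$). Keeping the two applications of Proposition \ref{prop:app-exist} cleanly separated---one before Cauchy--Schwarz and one after squaring---is the book-keeping pitfall to watch; everything else is a direct translation of the proof of the analogous lemma in the exact low-rank case.
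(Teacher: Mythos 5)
Your proposal is correct and follows essentially the same route as the paper's proof: the moment-matching policy $\mu^{\pi,h}$ with Lemma B.2 of \citet{DBLP:journals/corr/abs-2202-03983}, one application of Lemma \ref{transfer from pomdp to mdp} before Cauchy--Schwarz (producing the outer $B\epsilon_1$), the covariance-norm expansion, a second $\epsilon_1$-approximation of the inner bilinear expectation after squaring (producing the $kB^2\epsilon_1$ under the root), then Jensen and importance sampling at cost $|A|^L$. Your only cosmetic deviation is attributing the inner swap to Proposition \ref{prop:app-exist} via $\PP^{\cM}$ rather than citing Lemma \ref{transfer from pomdp to mdp} directly, which encodes the same belief-approximation bound.
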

\begin{proof}
For $h\in [H]$ and $h'\in [h-L+1,h]$, we define $\cX_{l}=\cS^{l}\times\cO^{l}\times\cA^{l-1}$ and
\$
x_{h'}=(s_{h-L+1: h'}, o_{h-L+1: h'}, a_{h-L+1: h'-1}),
\$
where $l=h'-h+L-1$.

Now we define the moment matching policy $\mu^{\pi, h}=\{\mu_{h'}^{\pi, h}: \mathcal{X}_l \rightarrow \Delta(\mathcal{A})\}_{h'=h-L+1}^h$. We set $\mu^{\pi,h}$ as following:
\$
\mu_{h'}^{\pi, h}(a_{h'} \mid x_{h'}):=\mathbb{E}^{\cP}_\pi[\pi_{h'}(a_{h'} \mid z_{h'}) \mid x_{h'}]~\text{for~}h'\leq h-1,~\text{and}~ \mu_h^{\pi,h}=\pi_h.
\$
Then we define policy $\tilde{\pi}^h$ which takes first $h-L$ actions from $\pi$ and remaining actions from $\mu^{\pi,h}$.

By Lemma B.2 in \citet{DBLP:journals/corr/abs-2202-03983}, we have $d^{\cP,\pi}_h(z_h)=d^{\cP,\tilde{\pi}^h}_h(z_h)$. Then we have $d^{\cP,\pi}_h(z_h,a_h)=d^{\cP,\tilde{\pi}^h}_h(z_h,a_h)$ since $\mu_h^{\pi,h}=\pi_h$. Hence we have $\mathbb{E}^{\cP}_{\pi}[g(z_h,a_h)]=\mathbb{E}^{\cP}_{\tilde{\pi}^h}[g(z_h,a_h)]$.

Since $\mu^{\pi,h}$ is independent of $s_{h-L+1}$, we have the $L$-step-back decomposition:
\$
&\mathbb{E}^{\cP}_{\tilde{\pi}^h}[g(z_{h},a_{h})]\\&=\mathbb{E}^{\cP}_{o_{3-2L:h-L},a_{3-2L,h-L}\sim \pi}\biggl[\int_{s_{h-L+1}}\mathbb{P}(s_{h-L+1}\mid o_{3-2L:h-L},a_{3-2L,h-L})\cdot\mathbb{E}^{\cP}_{a_{h-L+1: h} \sim \mu^{\pi, h}}[g(z_h,a_h) \mid s_{h-L+1}]\operatorname{ds}_{h-L+1}\biggl]\\&\leq \mathbb{E}^{\cP}_{z_{h-L},a_{h-L}\sim \pi}\biggl[\int_{s_{h-L+1}}\phi^{\top}(z_{h-L}, a_{h-L}) \omega(s_{h-L+1}) \cdot \mathbb{E}^{\cP}_{a_{h-L+1: h} \sim \mu^{\pi, h}}[g(z_h,a_h) \mid s_{h-L+1}]\operatorname{ds}_{h-L+1}\biggl]+B\epsilon_1\\
&=\mathbb{E}^{\cP}_{z_{h-L},a_{h-L}\sim \pi}\phi^{\top}(z_{h-L}, a_{h-L})\cdot \int_{s_{h-L+1} \in \mathcal{S}} \omega(s_{h-L+1}) \cdot \mathbb{E}^{\cP}_{a_{h-L+1: h} \sim \mu^{\pi, h}}[g(z_h,a_h) \mid s_{h-L+1}]\operatorname{ds}_{h-L+1}+B\epsilon_1\\&
\leq \mathbb{E}^{\cP}_{z_{h-L},a_{h-L}\sim \pi}\|\phi^{\top}(z_{h-L}, a_{h-L})\|_{\beta^{-1}_{h-L},\phi_{h-L}}\\&\qquad\quad \cdot \bigg\|\int_{s_{h-L+1} \in \mathcal{S}} \omega(s_{h-L+1})\cdot\mathbb{E}^{\cP}_{a_{h-L+1: h} \sim \mu^{\pi, h}}[g(z_h,a_h) \mid s_{h-L+1}]\operatorname{ds}_{h-L+1}\bigg\|_{\beta_{h-L},\phi_{h-L}}+B\epsilon_1,
\$
where the first inequality is because Lemma \ref{transfer from pomdp to mdp}.

Now we use $\tilde{g}(s_{h-L})$ to denote $\mathbb{E}^{\cP}_{a_{h-L+1: h} \sim \mu^{\pi, h}}[g(z_h,a_h) \mid s_{h-L}]$ for notational simplification. We have 

\$&\bigg\|\int_{s_{h-L+1} \in \mathcal{S}} \omega(s_{h-L+1})\tilde{g}(s_{h-L+1})\operatorname{ds}_{h-L+1}\bigg\|^2_{\beta_{h-L},\phi_{h-L}}\\
&=\biggl\{\int_{s_{h-L+1} \in \mathcal{S}} \omega(s_{h-L+1})\tilde{g}(s_{h-L+1})\operatorname{ds}_{h-L+1}\biggl\}^{\top}\\&\qquad\quad\cdot\biggl\{k \mathbb{E}_{(\tilde{z}_{h-L},\tilde{a}_{h-L}) \sim \beta_{h-L+1}}[\phi_{h-L}(\tilde{z}_{h-L},\tilde{a}_{h-L}) \phi^{\top}_{h-L}(\tilde{z}_{h-L},\tilde{a}_{h-L})]+\lambda_{k} I\biggl\}\\&\qquad\quad\cdot\biggl\{\int_{s_{h-L+1} \in \mathcal{S}} \omega(s_{h-L+1})\tilde{g}(s_{h-L+1})\operatorname{ds}_{h-L+1}\biggl\}\\
&\leq k \mathbb{E}_{(\tilde{z}_{h-L},\tilde{a}_{h-L}) \sim \beta_{h-L}}\biggl\{\bigg[\int_{s_{h-L+1} \in \mathcal{S}} \omega^{\top}(s_{h-L+1})\phi_{h-L+1}(\tilde{z}_{h-L},\tilde{a}_{h-L})\tilde{g}(s_{h-L+1})\operatorname{ds}_{h-L+1}\bigg]^{2}\biggl\}+B^{2} \lambda_{k} d\\
&=k \mathbb{E}_{(\tilde{o}_{3-2L:h-L},\tilde{a}_{3-2L:h-L}) \sim \beta_{h-L}}\biggl\{\bigg[\int_{s_{h-L+1} \in \mathcal{S}} \omega^{\top}(s_{h-L+1})\phi_{h-L+1}(\tilde{z}_{h-L},\tilde{a}_{h-L})\tilde{g}(s_{h-L+1})\operatorname{ds}_{h-L+1}\bigg]^{2}\biggl\}+B^{2} \lambda_{k} d,
\$

where the inequality is because $\|g_h\|\leq B$.

Moreover, we have
\$
&k \mathbb{E}_{(\tilde{o}_{3-2L:h-L},\tilde{a}_{3-2L:h-L}) \sim \beta_{h-L}}\biggl\{[\int_{s_{h-L+1} \in \mathcal{S}} \omega^{\top}(s_{h-L+1})\phi_{h-L+1}(\tilde{z}_{h-L},\tilde{a}_{h-L})\tilde{g}(s_{h-L+1})\operatorname{ds}_{h-L+1}]^{2}\biggl\}\\
&\leq k \mathbb{E}_{(\tilde{o}_{3-2L:h-L},\tilde{a}_{3-2L:h-L}) \sim \beta_{h-L}}\biggl\{[\mathbb{E}_{s_{h-L+1}\sim \mathbb{P}^{\cP}_{h-L+1}(\tilde{o}_{3-2L:h-L},\tilde{a}_{3-2L:h-L})}
\mathbb{E}^{\cP}_{a_{h: h} \sim \mu^{\pi, h}}[g(z_{h},a_{h}) \mid s_{h-L+1}]]^{2}\biggl\}
+kB^2\epsilon_1\\&\leq  k \mathbb{E}_{(\tilde{o}_{3-2L:h-L},\tilde{a}_{3-2L:h-L}) \sim \beta_{h-L},s_{h-L+1}\sim \mathbb{P}^{\cP}_{h-L+1}(\tilde{o}_{3-2L:h-L},\tilde{a}_{3-2L:h-L}),a_{h-L+1:h}\sim \mu^{\pi,h}}[g(z_{h},a_{h}) \mid s_{h-L+1}]^{2}+kB^2\epsilon_1
\\
&\leq |A|^L k \mathbb{E}_{(\tilde{o}_{3-2L:h-L},\tilde{a}_{3-2L:h-L}) \sim \beta_{h-L},a_{h-L+1:h}\sim U(\cA)}[g(z_{h},a_{h}) ]^{2}+kB^2\epsilon_1\
\\
&=|A|^L k\cdot\mathbb{E}_{(\tilde{z}_{h},\tilde{a}_{h}) \sim \gamma_{h}}\{[g(\tilde{z}_{h},\tilde{a}_{h})]^{2}\}+kB^2\epsilon_1,
\$

where the first inequality is by Lemma \ref{transfer from pomdp to mdp}, the second inequality is by Jensen's inequality and the last inequality is by importance sampling, the equation is by the definition of $\gamma_h$.

Then, the final statement is immediately concluded.
\end{proof}
\begin{lemma}[$L$-step back inequality for the learned POMDP]\label{$L$-step learn no}
Consider a set of functions $\{g_{h}\}_{h=0}^{H}$ that satisfies $g_{h} \in \mathcal{Z} \times \mathcal{A} \rightarrow \mathbb{R}$, s.t. $\|g_{h}\|_{\infty} \leq B$ for all $h \in[H]$. Then, for any policy $\pi$, we have
\$
\sum_{h=1}^H \mathbb{E}^{\widehat{\cP}}_{\pi}[g(z_h,a_h)]& \leq \sum_{h=1}^H \mathbb{E}^{\widehat{\cP}}_{z_{h-L-1},a_{h-L-1}\sim \pi}[\|\phi^{\top}(z_{h-L-1}, a_{h-L-1})\|_{\rho_{h-L-1}^{-1},\phi_{h-L-1}}]\\& \qquad\quad \cdot \sqrt{|A|^L k\cdot\mathbb{E}^{\cP}_{(\tilde{z}_{h},\tilde{a}_{h}) \sim \beta_{h}}\{[g(\tilde{z}_{h},\tilde{a}_{h})]^{2}\}+B^{2} \lambda_{k} d+kB^2\epsilon_1}+B\epsilon_1.
\$
\end{lemma}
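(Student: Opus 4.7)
The target lemma is the learned-model analogue of Lemma \ref{$L$-step no}, and it is also the $\gamma$-observable (POMDP) analogue of Lemma \ref{$L$-step learn}. The natural plan is to fuse the arguments used in those two lemmas: start from the learned-model $L$-step-back decomposition (as in Lemma \ref{$L$-step learn}), and whenever one needs to move between $\widehat{\cP}$ and $\cP$, pay an additive $\epsilon_1$ using Lemma \ref{transfer from pomdp to mdp} in place of an exact equality, as was done in Lemma \ref{$L$-step no}.

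Concretely, I would first fix $h\in[H]$ and introduce the moment matching policy $\mu^{\pi,h}$, defined under $\widehat{\cP}$ exactly as in the proof of Lemma \ref{$L$-step learn} (and use Lemma~B.2 of \citet{DBLP:journals/corr/abs-2202-03983} to deduce $d^{\widehat{\cP},\pi}_h(z_h,a_h)=d^{\widehat{\cP},\tilde{\pi}^h}_h(z_h,a_h)$, so that $\mathbb{E}^{\widehat{\cP}}_\pi[g(z_h,a_h)]=\mathbb{E}^{\widehat{\cP}}_{\tilde\pi^h}[g(z_h,a_h)]$). Next, I would perform the $L$-step back decomposition by invoking Lemma \ref{transfer from pomdp to mdp} applied to $\widehat{\cP}$: this rewrites $\mathbb{E}^{\widehat{\cP}}_{\tilde\pi^h}[g(z_h,a_h)]$ as the inner product $\langle \widehat{\phi}(z_{h-L},a_{h-L}),\, \int \widehat{\omega}(s_{h-L+1}) \cdot \tilde g(s_{h-L+1})\,ds_{h-L+1}\rangle$ up to an additive $B\epsilon_1$, where $\tilde g(s_{h-L+1}):=\mathbb{E}^{\widehat{\cP}}_{a_{h-L+1:h}\sim\mu^{\pi,h}}[g(z_h,a_h)\mid s_{h-L+1}]$. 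Applying Cauchy--Schwarz in the norm induced by $\Sigma_{\rho_{h-L},\widehat{\phi}_{h-L}}$ then splits the bound into the desired elliptical potential factor and the norm $\|\int \widehat{\omega}(s)\tilde g(s)\,ds\|_{\rho_{h-L},\widehat{\phi}_{h-L}}$.

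The second, and slightly more delicate, step is to bound this second factor. Expanding the weighted inner product and using $\|g\|_\infty\leq B$ gives a $B^2\lambda_k d$ term plus the expectation under $\rho_{h-L}$ of $\bigl[\int \widehat{\omega}^\top(s_{h-L+1})\widehat{\phi}(\tilde z_{h-L},\tilde a_{h-L})\tilde g(s_{h-L+1})\,ds_{h-L+1}\bigr]^2$. This integral is exactly $\mathbb{E}^{\widehat{\cP}}_{s_{h-L+1}\sim(\cdot\mid \tilde z_{h-L},\tilde a_{h-L})}[\tilde g(s_{h-L+1})]$. I would then use Lemma \ref{transfer from pomdp to mdp} a second time to swap $\widehat{\cP}$ for $\cP$ at the cost of an additive $k B^2\epsilon_1$, then Jensen to pull the square inside, and finally importance sampling (paying the $|A|^L$ factor) to convert $\mu^{\pi,h}$ into $U(\cA)$, which collapses the inner expectation into $\mathbb{E}^{\cP}_{(\tilde z_h,\tilde a_h)\sim\beta_h}[g^2(\tilde z_h,\tilde a_h)]$.

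The main obstacle I anticipate is bookkeeping the two distinct error sources: the $B\epsilon_1$ additive slack from applying Lemma \ref{transfer from pomdp to mdp} at the outer decomposition step, versus the $kB^2\epsilon_1$ slack that appears inside the square-root after the second application of Lemma \ref{transfer from pomdp to mdp}. The first enters additively outside the Cauchy--Schwarz factorization, while the second enters additively inside the square root alongside $|A|^L k \cdot \mathbb{E}^{\cP}_{\beta_h}[g^2]+B^2\lambda_k d$, which matches the stated bound. Note that, unlike Lemma \ref{$L$-step learn}, here one does \emph{not} use the MLE guarantee to transfer from $\widehat{\cP}$ to $\cP$, because the stated RHS already lives under $\cP$ and $\beta_h$; the only source of model mismatch that must be controlled is the belief approximation error $\epsilon_1$ from Lemma \ref{masa}, channelled through Lemma \ref{transfer from pomdp to mdp}. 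Summing over $h\in[H]$ yields the claimed bound.
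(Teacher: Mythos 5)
Your overall architecture matches the paper's: moment-matching policy plus Lemma~B.2 of \citet{DBLP:journals/corr/abs-2202-03983} under $\widehat{\cP}$, an $L$-step-back decomposition paying $B\epsilon_1$ via Lemma \ref{transfer from pomdp to mdp}, Cauchy--Schwarz in the $\Sigma_{\rho_{h-L},\widehat{\phi}_{h-L}}$ norm, and then Jensen plus importance sampling to land on $\beta_h$. However, there is a genuine gap in how you handle the second Cauchy--Schwarz factor, and it is located exactly where you make your explicit claim that the MLE guarantee is not needed.

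The quantity $\int \widehat{\omega}^\top(s)\widehat{\phi}(\tilde z_{h-L},\tilde a_{h-L})\tilde g(s)\,ds$ is the expectation of $\tilde g$ under the \emph{learned approximated MDP's} state kernel, not ``exactly $\mathbb{E}^{\widehat{\cP}}_{s\sim(\cdot\mid \tilde z,\tilde a)}[\tilde g(s)]$'' as you assert. Two distinct replacements are then required to reach $\mathbb{E}^{\cP}_{\beta_h}[g^2]$: (i) learned bilinear form $\to$ learned POMDP $\widehat{\cP}$, which is what Lemma \ref{transfer from pomdp to mdp} (applied to the learned model) provides at cost $kB^2\epsilon_1$; and (ii) $\widehat{\cP}\to\cP$, which Lemma \ref{transfer from pomdp to mdp} cannot provide --- that lemma compares a model's exact belief with its own $L$-memory approximation and says nothing about the discrepancy between the learned and true models. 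The paper closes step (ii) by invoking the MLE guarantee (Lemma \ref{guarantee}/\ref{guarantee_POMDP}), which is valid precisely because $\rho_{h-L}$ is the data-collection distribution under which that guarantee holds; this is the same move as in Lemma \ref{$L$-step learn}. Your proposal performs only one $\epsilon_1$ correction and explicitly disclaims the MLE step, so the learned-versus-true model mismatch is never controlled and the final expectation cannot legitimately be placed under $\cP$ and $\beta_h$. (As a side remark, the paper's own write-up is itself loose at this point --- the MLE step is cited without its additive $kB^2\zeta_k$ slack appearing in the display --- but the step cannot simply be dropped, which is what your argument does.)
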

\begin{proof}
For $h\in [H]$ and $h'\in [h-L+1,h]$, we define $\cX_{l}=\cS^{l}\times\cO^{l}\times\cA^{l-1}$ and
\$
x_{h'}=(s_{h-L+1: h'}, o_{h-L+1: h'}, a_{h-L+1: h'-1}),
\$
where $l=h'-h+L-1$.

Now we define the moment matching policy $\mu^{\pi, h}=\{\mu_{h'}^{\pi, h}: \mathcal{X}_l \rightarrow \Delta(\mathcal{A})\}_{h'=h-L+1}^h$. We set $\mu^{\pi,h}$ as following:
\$
\mu_{h'}^{\pi, h}(a_{h'} \mid x_{h'}):=\mathbb{E}^{\widehat{\cP}}_\pi[\pi_{h'}(a_{h'} \mid z_{h'}) \mid x_{h'}]~\text{for~}h'\leq h-1,~\text{and}~ \mu_h^{\pi,h}=\pi_h.
\$
Then we define policy $\tilde{\pi}_h$ which takes first $h-L$ actions from $\pi$ and remaining actions from $\mu^{\pi,h}$.

By Lemma B.2 in \citet{DBLP:journals/corr/abs-2202-03983}, we have $d^{\widehat{\cP},\pi}_h(z_h)=d^{\widehat{\cP},\tilde{\pi}^h}_h(z_h)$. Then we have $d^{\widehat{\cP},\pi}_h(z_h,a_h)=d^{\widehat{\cP},\tilde{\pi}^h}_h(z_h,a_h)$ since $\mu_h^{\pi,h}=\pi_h$. Hence we have $\mathbb{E}^{\widehat{\cP}}_{\pi}[g(z_h,a_h)]=\mathbb{E}^{\widehat{\cP}}_{\tilde{\pi}^h}[g(z_h,a_h)]$.

Since $\mu^{\pi,h}$ is independent of $s_{h-L+1}$, we have the $L$-step-back decomposition:
\$
&\mathbb{E}^{\widehat{\cP}}_{\tilde{\pi}^h}[g(z_{h},a_{h})]\\&=\mathbb{E}^{\widehat{\cP}}_{o_{3-2L:h-L},a_{3-2L,h-L}\sim \pi}\biggl[\int_{s_{h-L+1}}\mathbb{P}(s_{h-L+1}\mid o_{3-2L:h-L},a_{3-2L,h-L})\cdot\mathbb{E}^{\widehat{\cP}}_{a_{h-L+1: h} \sim \mu^{\pi, h}}[g(z_h,a_h) \mid s_{h-L+1}]\operatorname{ds}_{h-L+1}\biggl]\\&\leq \mathbb{E}^{\widehat{\cP}}_{z_{h-L},a_{h-L}\sim \pi}\biggl[\int_{s_{h-L+1}}\phi^{\top}(z_{h-L}, a_{h-L}) \omega(s_{h-L+1}) \cdot \mathbb{E}^{\widehat{\cP}}_{a_{h-L+1: h} \sim \mu^{\pi, h}}[g(z_h,a_h) \mid s_{h-L+1}]\operatorname{ds}_{h-L+1}\biggl]+B\epsilon_1\\
&=\mathbb{E}^{\widehat{\cP}}_{z_{h-L},a_{h-L}\sim \pi}\phi^{\top}(z_{h-L}, a_{h-L})\cdot \int_{s_{h-L+1} \in \mathcal{S}} \omega(s_{h-L+1}) \cdot \mathbb{E}^{\widehat{\cP}}_{a_{h-L+1: h} \sim \mu^{\pi, h}}[g(z_h,a_h) \mid s_{h-L+1}]\operatorname{ds}_{h-L+1}+B\epsilon_1\\&
\leq \mathbb{E}^{\widehat{\cP}}_{z_{h-L},a_{h-L}\sim \pi}\|\phi^{\top}(z_{h-L}, a_{h-L})\|_{\rho^{-1}_{h-L},\phi_{h-L}}\\&\qquad\quad \cdot \|\int_{s_{h-L+1} \in \mathcal{S}} \omega(s_{h-L+1})\cdot\mathbb{E}^{\widehat{\cP}}_{a_{h-L+1: h} \sim \mu^{\pi, h}}[g(z_h,a_h) \mid s_{h-L+1}]\operatorname{ds}_{h-L+1}\|_{\rho_{h-L},\phi_{h-L}}+B\epsilon_1,
\$
where the first inequality is because Lemma \ref{transfer from pomdp to mdp}.

Now we use $\tilde{g}(s_{h-L})$ to denote $\mathbb{E}^{\widehat{\cP}}_{a_{h-L+1: h} \sim \mu^{\pi, h}}[g(z_h,a_h) \mid s_{h-L}]$ for notational simplification. We have 

\$&\|\int_{s_{h-L+1} \in \mathcal{S}} \omega(s_{h-L+1})\tilde{g}(s_{h-L+1})\operatorname{ds}_h\|^2_{\rho_{h-L},\phi_{h-L}}\\
&=\biggl\{\int_{s_{h-L+1} \in \mathcal{S}} \omega(s_{h-L+1})\tilde{g}(s_{h-L+1})\operatorname{ds}_h\biggl\}^{\top}\\&\qquad\quad\cdot\biggl\{k \mathbb{E}_{(\tilde{z}_{h-L},\tilde{a}_{h-L}) \sim \rho_{h-L+1}}[\phi_{h-L}(\tilde{z}_{h-L},\tilde{a}_{h-L}) \phi^{\top}_{h-L}(\tilde{z}_{h-L},\tilde{a}_{h-L})]+\lambda_{k} I\biggl\}\\&\qquad\quad\cdot\biggl\{\int_{s_{h-L+1} \in \mathcal{S}} \omega(s_{h-L+1})\tilde{g}(s_{h-L+1})\operatorname{ds}_h\biggl\}\\
&\leq k \mathbb{E}_{(\tilde{z}_{h-L},\tilde{a}_{h-L}) \sim \rho_{h-L}}\biggl\{[\int_{s_{h-L+1} \in \mathcal{S}} \omega^{\top}(s_{h-L+1})\phi_{h-L+1}(\tilde{z}_{h-L},\tilde{a}_{h-L})\tilde{g}(s_{h-L+1})\operatorname{ds}_h]^{2}\biggl\}+B^{2} \lambda_{k} d\\
&=k \mathbb{E}_{(\tilde{o}_{3-2L:h-L},\tilde{a}_{3-2L:h-L}) \sim \rho_{h-L}}\biggl\{[\int_{s_{h-L+1} \in \mathcal{S}} \omega^{\top}(s_{h-L+1})\phi_{h-L+1}(\tilde{z}_{h-L},\tilde{a}_{h-L})\tilde{g}(s_{h-L+1})\operatorname{ds}_h]^{2}\biggl\}+B^{2} \lambda_{k} d\\&\leq k \mathbb{E}_{(\tilde{o}_{3-2L:h-L},\tilde{a}_{3-2L:h-L}) \sim \rho_{h-L}}\biggl\{[\mathbb{E}_{s_h\sim \widehat{\cP}_h(\tilde{o}_{3-2L:h-L},\tilde{a}_{3-2L:h-L})}
\mathbb{E}^{\widehat{\cP}}_{a_{h: h} \sim \mu^{\pi, h}}[g(z_{h},a_{h}) \mid s_{h-L+1}]]^{2}\biggl\}+B^{2} \lambda_{k} d
+kB^2\epsilon_1,
\$

where the first inequality is because $\|g_h\|\leq B$, the second inequality is by Lemma \ref{transfer from pomdp to mdp}. Moreover, we have
\$&k \mathbb{E}_{(\tilde{o}_{3-2L:h-L},\tilde{a}_{3-2L:h-L}) \sim \rho_{h-L}}\biggl\{[\mathbb{E}_{s_h\sim \widehat{\cP}_h(\tilde{o}_{3-2L:h-L},\tilde{a}_{3-2L:h-L})}
\mathbb{E}^{\widehat{\cP}}_{a_{h: h} \sim \mu^{\pi, h}}[g(z_{h},a_{h}) \mid s_{h-L+1}]]^{2}\biggl\}\\
&\leq k \mathbb{E}_{(\tilde{o}_{3-2L:h-L},\tilde{a}_{3-2L:h-L}) \sim \rho_{h-L}}\biggl\{[\mathbb{E}_{s_h\sim \mathbb{P}^{\cP}_h(\tilde{o}_{3-2L:h-L},\tilde{a}_{3-2L:h-L})}
\mathbb{E}_{a_{h: h} \sim \mu^{\pi, h}}[g(z_{h},a_{h}) \mid s_{h-L+1}]]^{2}\biggl\}
\\&\leq  k \mathbb{E}_{(\tilde{o}_{3-2L:h-L},\tilde{a}_{3-2L:h-L}) \sim \rho_{h-L},s_h\sim \mathbb{P}^{\cP}_h(\tilde{o}_{3-2L:h-L},\tilde{a}_{3-2L:h-L}),a_{h-L+1:h}\sim \mu^{\pi,h}}[g(z_{h},a_{h}) \mid s_{h-L+1}]^{2}
\\
&\leq |A|^L k \mathbb{E}_{(\tilde{o}_{3-2L:h-L},\tilde{a}_{3-2L:h-L}) \sim \beta_{h-L},a_{h-L+1:h}\sim U(\cA)}[g(z_{h},a_{h}) ]^{2}
\\
&=|A|^L k\cdot\mathbb{E}_{(\tilde{z}_{h},\tilde{a}_{h}) \sim \beta_{h}}\{[g(\tilde{z}_{h},\tilde{a}_{h})]^{2}\},
\$

where the first inequality is by Lemma \ref{guarantee}, the second inequality is by Jensen's inequality, the last inequality is by importance sampling and the equation is by the definition of $\beta_h$.

Then, the final statement is immediately concluded.
\end{proof}

\begin{lemma}\label{transfer}
For any $\pi$, $h$, we have 
\$
\|d^{\pi}_{\cP,h}-d^{\pi}_{\cM,h}\|_{TV}\leq h\epsilon_1.
\$
\end{lemma}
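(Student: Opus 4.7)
I will prove the bound by induction on $h$, with the inductive step mirroring the standard simulation/telescoping argument but adapted to the fact that the $\cP$-dynamics depend on full histories while the $\cM$-dynamics only on the $L$-memory $z_h$. Both $d^{\pi}_{\cP,h}$ and $d^{\pi}_{\cM,h}$ are viewed as distributions on the common space $\cZ_h$; under $\cP$ we take the pushforward of the full-history distribution through the deterministic map $\tau_h\mapsto z_h$. The base case at $h$ equal to the initial step is immediate because both distributions are $d_0$ by construction, so their TV distance is $0$.

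For the inductive step, the plan is to introduce the hybrid distribution
\$
\tilde d_{h+1}(z_{h+1})=\sum_{\tau_h,a_h}d^{\pi}_{\cP,h}(\tau_h)\,\pi(a_h\mid z_h(\tau_h))\,\mathbb{P}^{\cM}_h\bigl(o_{h+1}\mid z_h(\tau_h),a_h\bigr),
\$
where the sum is restricted to $(\tau_h,a_h,o_{h+1})$ consistent with $z_{h+1}$. This rollouts $\cP$ for $h$ steps and then applies one $\cM$-transition. By the triangle inequality,
\$
\|d^{\pi}_{\cP,h+1}-d^{\pi}_{\cM,h+1}\|_{TV}\ \le\ \|d^{\pi}_{\cP,h+1}-\tilde d_{h+1}\|_{TV}\ +\ \|\tilde d_{h+1}-d^{\pi}_{\cM,h+1}\|_{TV}.
\$
I would bound the first term by noting that $d^{\pi}_{\cP,h+1}$ uses the kernel $\mathbb{P}^{\cP}_h(\cdot\mid\tau_h,a_h)$ where $\tilde d_{h+1}$ uses $\mathbb{P}^{\cM}_h(\cdot\mid z_h,a_h)$ with the \emph{same} trajectory distribution up to step $h$; hence the difference is controlled pointwise by $\|\mathbb{P}^{\cP}_h(\cdot\mid\tau_h,a_h)-\mathbb{P}^{\cM}_h(\cdot\mid z_h,a_h)\|_{1}$, and taking expectation gives $\epsilon_1$ by Proposition \ref{prop:app-exist}.

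For the second term, both $\tilde d_{h+1}$ and $d^{\pi}_{\cM,h+1}$ are pushforwards of the same Markov kernel (the $\pi$-action followed by $\mathbb{P}^{\cM}_h$) applied respectively to the $z_h$-marginal of $d^{\pi}_{\cP,h}$ and to $d^{\pi}_{\cM,h}$. By the data-processing inequality for TV distance, this difference is at most $\|d^{\pi}_{\cP,h}-d^{\pi}_{\cM,h}\|_{TV}$, which the inductive hypothesis bounds by $h\epsilon_1$. Combining the two pieces yields $(h+1)\epsilon_1$, closing the induction. The only subtlety to watch is the bookkeeping regarding the $L$-memory map $\tau_h\mapsto z_h$ and the convention on TV (whether $\|\cdot\|_{TV}$ equals $\tfrac12\|\cdot\|_{1}$); since Proposition \ref{prop:app-exist} is stated in the $\ell_1$ norm, matching conventions is the main routine obstacle, but otherwise the argument is a direct telescoping of per-step approximation errors.
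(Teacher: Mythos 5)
Your argument is correct and is exactly the route the paper intends: the paper's "proof" is a one-line citation to Lemma D.1 of \citet{https://doi.org/10.48550/arxiv.2205.14571}, which is precisely this induction with a hybrid distribution, bounding the one-step discrepancy by Proposition \ref{prop:app-exist} and the remainder by the data-processing inequality. Your write-up correctly handles the only delicate point — that the $\cM$-kernel and the $L$-memory policy factor through $z_h$, so the second term reduces to the $z_h$-marginals — and the TV-versus-$\ell_1$ constant is immaterial here.
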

\begin{proof}
The proof is similar to Lemma D.1 in \citet{https://doi.org/10.48550/arxiv.2205.14571}. The only difference is that our condition is Proposition \ref{prop:app-exist}.

\end{proof}
Next, we prove the almost optimism lemma restated below.
\begin{lemma}[Almost Optimism for $\gamma$-observable  POMDPs]\label{lem:almost_pomdp}
Consider an episode $k~(1 \leq k\leq K)$ and set
$$
\alpha_{k}=\sqrt{k|\mathcal{A}|^{L} \zeta_{k}+4 \lambda_{k} d+k\epsilon_1} / c, \quad \lambda_{k}=O(d \log (|\cF| k / \delta)) .
$$
where $c$ is an absolute constant. Conditioning on the event defined in Lemma \ref{guarantee}, with probability $1-\delta$,
$$
V^{\pi^{*},\widehat{\cM}_k,r+\widehat{b}^k}-V^{\pi^{*},\cM,r} \geq-\frac{c\alpha_kL}{\sqrt{k}}-\cO(H^2\epsilon_1)
$$
holds for all $k \in[1, \cdots, K]$.
\end{lemma}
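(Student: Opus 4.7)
}
The plan is to mirror the proof of Lemma \ref{lem:almost}, but carry out the argument entirely inside the approximated MDP $\cM$ (with $L$-structure) rather than in the true POMDP $\cP$. The second MLE statement in Lemma \ref{guarantee_POMDP} gives us $\mathbb{E}_{(z,a)\sim\rho_h}\|\mathbb{P}_h^{\widehat{\cM}}(\cdot\mid z,a)-\mathbb{P}_h^{\cM}(\cdot\mid z,a)\|_1^2\leq \zeta_k+O(\epsilon_1)$, so the usual MLE-based control still holds after paying an additive $\epsilon_1$ at every step. This slack, together with the POMDP-to-MDP approximation error $\epsilon_1$ per step, is what eventually produces the extra $\cO(H^2\epsilon_1)$ on the right-hand side.

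First I apply the simulation lemma (Lemma \ref{simulation}) to the pair of MDPs $(\widehat{\cM},r+\widehat{b}^k)$ and $(\cM,r)$ along the rollout of $\pi^*$, obtaining
\$
V^{\pi^*,\widehat{\cM},r+\widehat{b}^k}-V^{\pi^*,\cM,r}=\sum_{h=0}^{H-1}\mathbb{E}_{(z_h,a_h)\sim d^{\pi^*}_{\widehat{\cM},h}}\bigl[\widehat{b}_h(z_h,a_h)+g_h(z_h,a_h)\bigr],
\$
where $g_h(z,a):=\mathbb{E}_{o'\sim \mathbb{P}^{\widehat{\cM}}_h(\cdot\mid z,a)}[V^{\pi^*,\cM,r}_{h+1}(c(z,a,o'))]-\mathbb{E}_{o'\sim \mathbb{P}^{\cM}_h(\cdot\mid z,a)}[V^{\pi^*,\cM,r}_{h+1}(c(z,a,o'))]$, and $\|g_h\|_\infty\le 1$. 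Replacing the empirical covariance in $\widehat{b}_h$ by the population covariance via Lemma \ref{bonus_concentration}, it suffices to show $\sum_h \mathbb{E}_{d^{\pi^*}_{\widehat{\cM},h}}[g_h]$ is not too negative.

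Next I apply the $L$-step back inequality for the learned model (Lemma \ref{$L$-step learn no} applied to $\widehat{\cM}$, $\cM$, which carries an additive $B\epsilon_1$ per step and an extra $kB^2\epsilon_1$ inside the square root, with $B=1$). Combined with the improved MLE bound $\mathbb{E}_{\rho_h}[g_h^2]\le \zeta_k+O(\epsilon_1)$, this yields
\$
\sum_{h=0}^{H-1}\mathbb{E}_{d^{\pi^*}_{\widehat{\cM},h}}[g_h]\ge -c\alpha_k\sum_{h=1}^H\mathbb{E}_{(z_{h-L-1},a_{h-L-1})\sim d^{\pi^*}_{\widehat{\cM},h-L-1}}\bigl\|\widehat{\phi}_{h-L-1}(z_{h-L-1},a_{h-L-1})\bigr\|_{\rho^{-1}_{h-L-1},\widehat{\phi}_{h-L-1}}-\cO(H\epsilon_1),
\$
once we set $\alpha_k=\sqrt{k|\cA|^L\zeta_k+4\lambda_k d+k\epsilon_1}/c$, which matches the statement. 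The elliptical norm is then pointwise dominated by the bonus $\widehat{b}_{h-L-1}$ (after bonus-concentration), so these negative contributions are absorbed into the bonus sum, leaving only the boundary terms from $h-L-1\le 0$. For those indices, \eqref{eq:h<0} gives $\|\widehat{\phi}\|_{\rho^{-1},\widehat{\phi}}\le 1/\sqrt{k}$, producing the $c\alpha_k L/\sqrt{k}$ correction.

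The main obstacle is the book-keeping of the three distinct error sources (MLE slack $\zeta_k$, POMDP$\to$MDP approximation $\epsilon_1$, and the $L$-step importance-sampling blow-up $|\cA|^L$) inside a single elliptical-potential argument, and in particular verifying that the POMDP-to-MDP error, which enters both through Lemma \ref{guarantee_POMDP} and through the extra $B\epsilon_1$ in each of the $H$ applications of Lemma \ref{$L$-step learn no}, compounds only to $\cO(H^2\epsilon_1)$ rather than worse. Once those terms are tracked carefully, the remainder of the argument is a direct translation of the decodable-case proof of Lemma \ref{lem:almost}.
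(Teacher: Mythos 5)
Your overall architecture (simulation lemma, bonus concentration, MLE guarantee, $L$-step back inequality, then the $h\le 0$ boundary terms) is the same as the paper's, but there is one structural deviation that leaves a genuine gap. You propose to stay entirely inside the approximated MDPs $\widehat{\cM}$ and $\cM$: you define $g_h$ via $\mathbb{P}_h^{\widehat{\cM}}$ and $\mathbb{P}_h^{\cM}$, control it with the second (MDP-level) bound of Lemma \ref{guarantee_POMDP}, and then invoke ``Lemma \ref{$L$-step learn no} applied to $\widehat{\cM},\cM$.'' But Lemma \ref{$L$-step learn no} is stated and proved for expectations under $\widehat{\cP}$: its proof runs the moment-matching policy through the latent state $s_{h-L+1}$ of the POMDP and uses Lemma \ref{transfer from pomdp to mdp} to replace $\mathbb{P}^{\widehat{\cP}}(s_{h-L+1}\mid \tau_{h-L},a_{h-L})$ by $\widehat{\omega}^\top\widehat{\phi}(z_{h-L},a_{h-L})$. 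There is no off-the-shelf version of this lemma for occupancies under $\widehat{\cM}$, and deriving one would require redoing the moment-matching argument for the constructed MDP. The paper avoids this entirely: after the simulation lemma it uses the occupancy-transfer Lemma \ref{transfer} ($\|d^{\pi}_{\cM,h}-d^{\pi}_{\cP,h}\|_{TV}\le h\epsilon_1$) to move from $d^{\pi^*}_{\widehat{\cM},h}$ to $d^{\pi^*}_{\widehat{\cP},h}$ and redefines $g_h$ via $\mathbb{P}_h^{\widehat{\cP}},\mathbb{P}_h^{\cP}$, controlled by the \emph{first} bound of Lemma \ref{guarantee_POMDP}; the $L$-step back inequality is then applied in its native habitat. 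Crucially, summing the per-step transfer cost $h\epsilon_1$ over $h$ is exactly what produces the $\cO(H^2\epsilon_1)$ in the statement — your accounting (roughly $H$ steps times $\epsilon_1$) only yields $\cO(H\epsilon_1)$ and does not explain where the $H^2$ comes from.

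Two smaller bookkeeping issues. First, if you feed the MDP-level MLE bound $\mathbb{E}_{\rho_h}[g_h^2]\le \zeta_k+O(\epsilon_1)$ into the $|\cA|^L k\,\mathbb{E}[g^2]$ term of the $L$-step back inequality, you get $k|\cA|^L\epsilon_1$ inside the square root, not the $k\epsilon_1$ appearing in the stated $\alpha_k$; the paper's $k\epsilon_1$ instead originates from the belief-approximation term $kB^2\epsilon_1$ in Lemma \ref{$L$-step learn no}, while the MLE term contributes only $k|\cA|^L\zeta_k$. Second, your claim that the negative contributions are ``absorbed into the bonus sum'' by pointwise domination is not how the paper closes the argument — the paper simply keeps the bonus term (which is nonnegative) and lower-bounds $\sum_h \mathbb{E}[g_h]$ by $-c\alpha_k$ times the sum of elliptical norms, of which only the $h-L-1\le 0$ boundary terms survive as $-c\alpha_k L/\sqrt{k}$; for $h-L-1>0$ the $\min(\cdot,1)$ in the bonus and the matching elliptical norms cancel. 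Your plan is repairable by inserting the Lemma \ref{transfer} step, but as written the central application of the $L$-step back inequality is not justified.
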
 
\begin{proof}
By Lemma \ref{simulation}, we have
\#\nonumber
&V^{\pi^{*},\widehat{\cM},r+\widehat{b}}-V^{\pi^{*},\cM,r} \\\nonumber
&=\sum_{h=0}^{H-1} \mathbb{E}_{(z_{h}, a_{h}) \sim d_{\widehat{\cM},h}^{\pi^{*}}}[\widehat{b}_{h}(z_{h}, a_{h})+\mathbb{E}_{o'\sim\mathbb{P}_{h}^{\widehat{\cM}}(\cdot\mid z_{h}, a_{h})}[V_{ h+1}^{\pi^{*},\cM,r}(z_{h+1}')]-\mathbb{E}_{o'\sim\mathbb{P}_{h}^{\cM}(\cdot\mid z_{h}, a_{h})}[V_{ h+1}^{\pi^{*},\cM,r}(z_{h+1}')]] 
\\\nonumber
&\geq \sum_{h=0}^{H-1} \mathbb{E}_{(z_{h}, a_{h}) \sim d_{\widehat{\cM}, h}^{\pi^{*}}}\biggl[\min (c \alpha_{k}\|\widehat{\phi}_{h}(z, a)\|_{\Sigma_{\rho_{h}, \widehat{\phi}_{h}}^{-1}}, 1)+\mathbb{E}_{o'\sim\mathbb{P}_{h}^{\widehat{\cM}}(\cdot\mid z_{h}, a_{h})}[V_{ h+1}^{\pi^{*},\cM,r}(z_{h+1}')]\\ &\qquad\quad-\mathbb{E}_{o'\sim\mathbb{P}_{h}^{\cM}(\cdot\mid z_{h}, a_{h})}[V_{ h+1}^{\pi^{*},\cM,r}(z_{h+1}')]\biggl]
\\\nonumber &\geq \sum_{h=0}^{H-1} \mathbb{E}_{(z_{h}, a_{h}) \sim d_{\widehat{\cP}, h}^{\pi^{*}}}\biggl[\min (c \alpha_{k}\|\widehat{\phi}_{h}(z, a)\|_{\Sigma_{\rho_{h}, \widehat{\phi}_{h}}^{-1}}, 1)+\mathbb{E}_{o'\sim\mathbb{P}_{h}^{\widehat{\cP}}(\cdot\mid z_{h}, a_{h})}[V_{ h+1}^{\pi^{*},\cM,r}(z_{h+1}')]\\ \label{eq:almost3}&\qquad\quad-\mathbb{E}_{o'\sim\mathbb{P}_{h}^{\cP}(\cdot\mid z_{h}, a_{h})}[V_{ h+1}^{\pi^{*},\cM,r}(z_{h+1}')]\biggl]-\cO(H^2\epsilon_1),
\#
where in the fist inequality, we replace empirical covariance by population covariance by Lemma \ref{bonus_concentration}, here $c$ is an absolute constant, the second inequality is by Lemma \ref{transfer}. We define
$$
g_{h}(z, a)=\mathbb{E}_{o_h'\sim \mathbb{P}_{h}^{\widehat{\cP}}(\cdot \mid z, a)}[V_{ h+1}^{\pi^{*},\cM,r}(c(z,a,o'_h)
)]-\mathbb{E}_{o_h'\sim\mathbb{P}_{h}^{\cP}(\cdot\mid z, a)}[V_{ h+1}^{\pi^{*},\cM,r}(c(z,a,o'_h))]
$$
Notice that we have $\|g_{h}\|_{\infty} \leq 1$.
With Lemma \ref{guarantee_POMDP}, for any $(z,a)$ we have 
\#\label{eq:g_pomdp}
\mathbb{E}_{(z, a) \sim \rho_{h}}[g_{h}^{2}(z,a)] \leq \zeta_{k},~
\mathbb{E}_{(z, a) \sim \beta_{h}}[g_{h}^{2}(z,a)] \leq \zeta_{k}
\#
By Lemma \ref{$L$-step learn no}, we have:
\$&\sum_{h=0}^{H-1} \mathbb{E}_{(z, a) \sim d_{\widehat{\cP}, h}^{\pi^{*}}}[g_{h}(z, a)]+\cO(H^2\epsilon_1)\\\leq &\sum_{h=1}^H \mathbb{E}^{\widehat{\cP}}_{z_{h-L-1},a_{h-L-1}\sim \pi}\|\widehat{\phi}^{\top}(z_{h-L-1}, a_{h-L-1})\|_{\rho_{h-L-1}^{-1},\widehat{\phi}_{h-L-1}}\\& \qquad\quad \cdot \sqrt{|A|^{L} k\cdot\mathbb{E}_{(\tilde{z}_{h},\tilde{a}_{h}) \sim \beta_{h}}\{[g(\tilde{z}_{h},\tilde{a}_{h})]^{2}\}+B^{2} \lambda_{k} d+k\epsilon_1}+\cO(H^2\epsilon_1),
\$
where the first inequality is by Lemma \ref{masa} and the second inequality is by Lemma \ref{$L$-step learn}. Hence we have 
\#\nonumber&\sum_{h=0}^{H-1} \mathbb{E}_{(z, a) \sim d_{\widehat{\cP}, h}^{\pi^{*}}}[g_{h}(z, a)]\\&
\leq \nonumber\sum_{h=0}^{H-1} \min\{ 1,\mathbb{E}^{\widehat{\cP}}_{z_{h-L-1},a_{h-L-1}\sim \pi}\|\widehat{\phi}^{\top}(z_{h-L-1}, a_{h-L-1})\|_{\rho_{h-L-1}^{-1},\widehat{\phi}_{h-L-1}}\\\nonumber& \qquad\quad \cdot \sqrt{|A|^{L} k\zeta_k+B^{2} \lambda_{k} d+k\epsilon_1}\}+\cO(H^2\epsilon_1)
\\&\label{eq:almost4}\leq   \sum_{h=0}^{H-1}\min\{1,c\alpha_k \mathbb{E}^{\widehat{\cM}}_{z_{h-L-1},a_{h-L-1}\sim \pi}\|\widehat{\phi}^{\top}(z_{h-L-1}, a_{h-L-1})\|_{\rho_{h-L-1}^{-1},\widehat{\phi}_{h-L-1}}\}+\cO(H^2\epsilon_1),\#
where the first inequality is by \eqref{eq:g_pomdp}, in the last step we use Lemma \ref{transfer} and the definition
$$
\alpha_{k}=\sqrt{k|\mathcal{A}|^{L} \zeta_{k}+4 \lambda_{k} d+k\epsilon_1} / c.
$$
For $h\leq 0$, we have 
\#\label{eq:h<0 for pomdp}
\|\widehat{\phi}_h^{\top}(z_h,a_h)\|_{\rho_h^{-1},\widehat{\phi}_h}=\sqrt{\frac{1}{k+\lambda}}<\frac{1}{\sqrt{k}},
\#
since $\phi(s,a)=e_1$ for $h\leq 0$.

Combine \eqref{eq:almost1}, \eqref{eq:almost4} and \eqref{eq:h<0 for pomdp}, we conclude the proof.
\end{proof}
\begin{theorem}\label{theorem:M-regret for general}
With probability $1-\delta$, we have
$$
\sum_{k=1}^K V^{\pi^{*},\cM,r}-V^{\pi^k,\cM,r}  \leq \cO\big(H^{2}|\mathcal{A}|^{L} d^{2} K^{1/ 2} \log (d K|\cF| / \delta)^{1 / 2}+H^{2}K d^{1/2}\log (d K|\cF| / \delta)^{1 / 2}\epsilon_1 )\big).
$$
\end{theorem}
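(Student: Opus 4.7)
\textbf{Proof Plan for Theorem \ref{theorem:M-regret for general}.} The plan is to mirror the regret decomposition used in the proof of Theorem \ref{theorem:M-regret}, but to route all expectations through the true POMDP $\cP$ (where both the MLE guarantee of Lemma \ref{guarantee_POMDP} and the $L$-step back inequalities of Lemmas \ref{$L$-step no} and \ref{$L$-step learn no} are stated), and then to pay a uniform transfer cost $\cO(H\epsilon_1)$ when moving between $\cM$ and $\cP$ via Lemma \ref{transfer}. Concretely, for a fixed $k$, I would first invoke the almost-optimism bound (Lemma \ref{lem:almost_pomdp}) and the fact that $\pi^k=\arg\max_\pi V^{\pi,\widehat{\cM}_k,r+\widehat b^k}$ to get
\$
V^{\pi^{*},\cM,r}-V^{\pi^{k},\cM,r}\;\le\; V^{\pi^{k},\widehat{\cM}_k,r+\widehat b^k}-V^{\pi^{k},\cM,r}+\tfrac{c\alpha_kL}{\sqrt{k}}+\cO(H^2\epsilon_1).
\$

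Next I would apply the simulation lemma (Lemma \ref{simulation}) on the pair $(\widehat{\cM}_k,r+\widehat b^k)$ vs.\ $(\cM,r)$ under $d^{\pi^k}_{\cM}$, which decomposes the first term on the right into a sum of the bonus $\mathbb{E}_{(z,a)\sim d^{\pi^k}_{\cM,h}}[\widehat b_h^k(z,a)]$ plus an $(2H+1)$-weighted one-step transition error $f_h(z,a):=\mathbb{E}_{\widehat{\cM}}[V_{h+1}^{\pi^k,\widehat{\cM},r+\widehat b^k}]-\mathbb{E}_{\cM}[V_{h+1}^{\pi^k,\widehat{\cM},r+\widehat b^k}]$. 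Using Lemma \ref{transfer} I would replace $d^{\pi^k}_{\cM,h}$ by $d^{\pi^k}_{\cP,h}$ at cost $\cO(H\epsilon_1)$ per level, and then apply the $L$-step back inequality for the true model (Lemma \ref{$L$-step no}) to the functions $\widehat b_h^k$ and $f_h$. This yields, for each $h$, a product of an elliptical-potential factor $\mathbb{E}_{(\tilde z,\tilde a)\sim d^{\pi^k}_{\cP,h-L}}\|\phi^*_{h-L}(\tilde z,\tilde a)\|_{\Sigma_{\beta_{h-L},\phi_{h-L}}^{-1}}$ and a second factor of order $\sqrt{|\mathcal{A}|^L\alpha_k^2 d+\lambda_k d+k\epsilon_1}$ (where the MLE guarantee from Lemma \ref{guarantee_POMDP} controls $\mathbb{E}_{\gamma_h}[f_h^2]\le\zeta_k$ and the bonus is bounded by $2$).

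Summing over $k\in[K]$, I would use Cauchy–Schwarz together with Lemmas \ref{lemma14} and \ref{potential} (the elliptical-potential lemma) on the sequence $\{\Sigma_{\gamma^k_{h-L},\phi^*_{h-L}}\}$ to bound $\sum_{k=1}^K\mathbb{E}_{d^{\pi^k}_{\cP,h-L}}\|\phi^*_{h-L}\|_{\Sigma_{\gamma^k_{h-L},\phi^*_{h-L}}^{-1}}\le\sqrt{dK\log(1+K/(d\lambda_1))}$. Plugging in $\alpha_k=\tilde\Theta(\sqrt{k|\mathcal{A}|^L\zeta_k+\lambda_k d+k\epsilon_1})$, $\zeta_k=\tilde\Theta(1/k)$, and $\lambda_k=\Theta(d\log(|\cF|k/\delta))$, the dominant term from the MLE side reproduces $H^2|\mathcal{A}|^L d^2 K^{1/2}\log(dK|\cF|/\delta)^{1/2}$ exactly as in Theorem \ref{theorem:M-regret}, while the $k\epsilon_1$ contribution inside the square root produces the extra $H^2 K d^{1/2}\log(dK|\cF|/\delta)^{1/2}\sqrt{\epsilon_1}$ term; the $\cO(H\epsilon_1)$ per-step transfer losses from Lemma \ref{transfer}, summed over $h$ and $k$, contribute only lower-order terms that are absorbed into the stated bound.

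The main obstacle is bookkeeping of the $\epsilon_1$ slack as it propagates through three distinct places: (i) the $\cP$-vs.-$\cM$ transfer in the occupancy (Lemma \ref{transfer}), (ii) the MLE-guarantee version stated for $\cM$ rather than $\cP$ in Lemma \ref{guarantee_POMDP}, and (iii) the $\epsilon_1$ term appearing \emph{inside} the square root of the $L$-step back inequality for POMDPs (Lemma \ref{$L$-step learn no}). The subtlety is ensuring that these three sources combine additively and that the per-episode elliptical-potential telescoping still goes through with the ground-truth representation $\phi^*$ even though the MLE estimate is only $\phi^*$-close under $\rho_h$; this is handled exactly as in Lemma \ref{lem:almost}, by passing between $\widehat\phi$ and $\phi^*$ using Lemma \ref{bonus_concentration} at the cost of constants absorbed into $c$.
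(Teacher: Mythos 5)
Your proposal follows essentially the same route as the paper's proof: almost-optimism (Lemma \ref{lem:almost_pomdp}) plus the argmax property of LSVI-LLR, the simulation lemma, transfer of occupancies from $\cM$ to $\cP$ via Lemma \ref{transfer}, the $L$-step back inequality for the true POMDP applied to the bonus and to $f_h$, and then Cauchy--Schwarz with the elliptical-potential lemmas, with identical parameter choices. The only point of divergence is cosmetic bookkeeping of the approximation slack: you track the $k\epsilon_1$ inside the square root to a $\sqrt{\epsilon_1}$-type contribution, whereas the paper's stated bound records the slack linearly in $\epsilon_1$ (driven by the additive $\cO(H^2\epsilon_1)$ per-episode transfer terms); this does not change the substance of the argument.
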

\begin{proof}
The proof is similar to Theorem \ref{theorem:M-regret}, we condition on the event that the MLE guarantee \ref{guarantee_POMDP} holds, which happens with probability $1-\delta$.
For fixed  $k$ we have
\$
& V^{\pi^{*},\cM,r}-V^{\pi^{k},\cM,r}
\\&\leq V^{\pi^{*},\cM,r+\widehat{b}^k}-V^{\pi^{k},\cM,r}+\frac{c\alpha_kL}{\sqrt{k}} +\cO(H^2\epsilon_1)
\\&\leq V^{\pi^k,\widehat{\cM},r+\widehat{b}^k}-V^{\pi^{k},\cM,r}+\frac{c\alpha_kL}{\sqrt{k}} +\cO(H^2\epsilon_1)\\
& =\sum_{h=0}^{H-1}\bigg[ \mathbb{E}_{(z_{h}, a_{h}) \sim d_{h}^{\pi^{k},\cM}}\big[\widehat{b}_{h}(z_{h}, a_{h})+\mathbb{E}_{o_h'\sim\mathbb{P}_{h}^{\widehat{\cM}}(o_{h}' \mid z_{h}, a_{h})}[V_{h+1}^{\pi^{k},\widehat{\cM},r+\widehat{b}^k}(z_{h+1}')]\\ &\qquad\quad-\mathbb{E}_{o_h'\sim \mathbb{P}_{h}^{\cM}(o_{h}' \mid z_{h}, a_{h})}[V_{h+1}^{\pi^{k},\widehat{\cM},r+\widehat{b}^k}(z_{h+1}')]\big]\bigg]+\frac{c\alpha_kL}{\sqrt{k}}+\cO(H^2\epsilon_1) .
\$
The first inequality comes from Lemma \ref{lem:almost_pomdp}, the second inequality comes from $\pi^k=\operatorname{argmax}_{\pi} V^{\pi,\widehat{\cM},r+\widehat{b}^k}$, and
the last equation comes from Lemma \ref{simulation}.

By Lemma \ref{transfer}, we have
\$
&\sum_{h=0}^{H-1}\bigg[ \mathbb{E}_{(z_{h}, a_{h}) \sim d_{h}^{\pi^{k},\cM}}\big[\widehat{b}_{h}(z_{h}, a_{h})+\mathbb{E}_{o_h'\sim\mathbb{P}_{h}^{\widehat{\cM}}(o_{h}' \mid z_{h}, a_{h})}[V_{h+1}^{\pi^{k},\widehat{\cM},r+\widehat{b}^k}(z_{h+1}')]\\ &\qquad\quad-\mathbb{E}_{o_h'\sim \mathbb{P}_{h}^{\cM}(o_{h}' \mid z_{h}, a_{h})}[V_{h+1}^{\pi^{k},\widehat{\cM},r+\widehat{b}^k}(z_{h+1}')]\big]\bigg]+\frac{c\alpha_kL}{\sqrt{k}}\\&\leq \sum_{h=0}^{H-1}\bigg[ \mathbb{E}_{(z_{h}, a_{h}) \sim d_{h}^{\pi^{k},\cP}}\big[\widehat{b}_{h}(z_{h}, a_{h})+\mathbb{E}_{o_h'\sim\mathbb{P}_{h}^{\widehat{\cP}}(o_{h}' \mid z_{h}, a_{h})}[V_{h+1}^{\pi^{k},\widehat{\cM},r+\widehat{b}^k}(z_{h+1}')]\\ &\qquad\quad-\mathbb{E}_{o_h'\sim \mathbb{P}_{h}^{\cP}(o_{h}' \mid z_{h}, a_{h})}[V_{h+1}^{\pi^{k},\widehat{\cM},r+\widehat{b}^k}(z_{h+1}')]\big]\bigg]+\frac{c\alpha_kL}{\sqrt{k}}+\cO(H^2\epsilon_1) 
\$

Denote
\$
f_h(z_h,a_h)=\frac{1}{2H+1}\biggl(\mathbb{E}_{o_h'\sim\mathbb{P}_{h}^{\widehat{\cP}}(o_{h}' \mid z_{h}, a_{h})}[V_{h+1}^{\pi^{k},\widehat{\cM},r+\widehat{b}^k}(z_{h+1}')]-\mathbb{E}_{o_h'\sim \mathbb{P}_{h}^{\cP}(o_{h}' \mid z_{h}, a_{h})}[V_{h+1}^{\pi^{k},\widehat{\cM},r+\widehat{b}^k}(z_{h+1}')]\biggl).
\$

Note that $\|\widehat{b}\|_{\infty} \leq 1$, hence we have $\|V_{ h+1}^{\pi^k,\widehat{\cM},r+b}\|_{\infty} \leq(2 H+1)$. Combining this fact with the above expansion, we have
\#\nonumber
V^{\pi^{*},\cM,r}-V^{\pi^k,\widehat{\cM},r}&=\sum_{h=0}^{H-1} \mathbb{E}_{(z_{h}, a_{h}) \sim d_{h}^{\pi^k,\cP}}[\widehat{b}_{h}(z_{h}, a_{h})]+(2 H+1) \sum_{h=0}^{H-1} \mathbb{E}_{(z_{h}, a_{h}) \sim d_{ h}^{\pi^k,\cP}}[f_{h}(z_{h}, a_{h})]\\\nonumber&\quad\qquad+\frac{c\alpha_kL}{\sqrt{k}}+\cO(H^2\epsilon_1)\\&\nonumber\leq\sum_{h=0}^{H-1} \mathbb{E}_{(z_{h}, a_{h}) \sim d_{h}^{\pi^k,\cP}}[\widehat{b}_{h}(z_{h}, a_{h})]+(2 H+1) \sum_{h=0}^{H-1} \mathbb{E}_{(z_{h}, a_{h}) \sim d_{ h}^{\pi^k,\cP}}[f_{h}(z_{h}, a_{h})]\\\label{eq:8pomdp}&\quad\qquad+\frac{c\alpha_kL}{\sqrt{k}}+\cO(H^2\epsilon_1 ).
\#
First, we calculate the bonus term in \eqref{eq:8}. We have
\$&\sum_{h=0}^{H-1} \mathbb{E}_{(z_{h}, a_{h}) \sim d_{h}^{\pi^{k},\cP}}[\widehat{b}_{h}(z_{h}, a_{h})]\\&\leq \sum_{h=0}^{H-1} \mathbb{E}_{(\tilde{z}, \tilde{a}) \sim d_{h-L}^{\pi^k,\cP}}\|\phi_{h-L}^{*}(\tilde{z}, \tilde{a})\|_{\Sigma_{\gamma_{h-L}, \phi_{h-L}^{*}}^{-1}} \sqrt{k|\mathcal{A}|^L \mathbb{E}_{(z, a) \sim \rho_{h}}[(\widehat{b}_{h}(z, a))^{2}]+4 \lambda_{k}d+k\epsilon_1} +2H\epsilon_1
%+\sqrt{|\mathcal{A}|^L \mathbb{E}_{(z, a) \sim \rho_{0}}[(\widehat{b}_{0}(z, a))^{2}]} 
,\$
where the inequality is following Lemma \ref{$L$-step no} associate with $\|\widehat{b}_{h}\|_{\infty} \leq 1$.

Note that we use the fact that $B=2$ when applying Lemma \ref{$L$-step no}. In addition, we have that for any $h \in[H]$,
$$
k \mathbb{E}_{(z, a) \sim \rho_{h}}[\|\widehat{\phi}_{h}(z, a)\|_{\Sigma_{\rho_{h}, \widehat{\phi}_{h}}^{-1}}^{2}]=k \operatorname{Tr}(\mathbb{E}_{\rho_{h}}[\widehat{\phi}_{h} \widehat{\phi}_{h}^{\top}]\{k \mathbb{E}_{\rho_{h}}[\widehat{\phi}_{h} \widehat{\phi}_{h}^{\top}]+\lambda_{k} I\}^{-1}) \leq d .
$$
Then we have
$$
\sum_{h=1}^{H} \mathbb{E}_{(z, a) \sim d_{h}^{\pi^k,\cP}}[b_{h}(z, a)] \leq \sum_{h=1}^{H} \mathbb{E}_{(\tilde{z}, \tilde{a}) \sim d_{h-L}^{\pi^k,\cP}}\|\phi_{h-L}^{*}(\tilde{z}, \tilde{a})\|_{\Sigma_{\rho_{h-L}, \phi_{h-L}^{*}}^{-1}} \sqrt{|\mathcal{A}|^L \alpha_{k}^{2} d+4 \lambda_{k} d+k\epsilon_1}+2H\epsilon_1.
%+\sqrt{|\mathcal{A}|^L \alpha_{1}^{2} d / k}
$$
Now we bound the second term in  \eqref{eq:8pomdp}. Following Lemma \ref{$L$-step no}, with $\|f_{h}(z, a)\|_{\infty} \leq 1$, we have
\$&\sum_{h=0}^{H-1} \mathbb{E}_{(z_{h}, a_{h}) \sim d_{h}^{\pi^k,\cP}}[f_{h}(z_{h}, a_{h})]\\&\leq \sum_{h=0}^{H-1} \mathbb{E}_{(\tilde{z}, \tilde{a}) \sim d_{h-L}^{\pi^k,\cP}}\|\phi_{h-L}^{*}(\tilde{z}, \tilde{a})\|_{\Sigma_{\gamma_{h-L}, \phi_{h-L}^{*}}^{-1}} \sqrt{k|\mathcal{A}|^L \mathbb{E}_{(z, a) \sim \rho_{h}}[f_{h}^{2}(z, a)]+4 \lambda_{k} d+k\epsilon_1}
%+\sqrt{|\mathcal{A}| \mathbb{E}_{(z, a) \sim \rho_{0}}[f_{0}^{2}(z, a)]} 
\\ &\leq \sum_{h=0}^{H-1} \mathbb{E}_{(\tilde{z}, \tilde{a}) \sim d_{h-L}^{\pi^k,\cP}}\|\phi_{h-L}^{*}(\tilde{z}, \tilde{a})\|_{\Sigma_{\gamma_{h-L}, \phi_{h-L}^{*}}^{-1}} \sqrt{k|\mathcal{A}|^L (\zeta_{k})+4 \lambda_{k} d+k\epsilon_1},
%+\sqrt{|\mathcal{A}| \zeta_{k}}
\$
where in the second inequality, we use $\mathbb{E}_{z, a \sim \rho_{h}}[f_{h}^{2}(z, a)] \leq \zeta_{k}$. Then we have
$$
\begin{aligned}
& V^{\pi^{*},\cM,r}-V^{\pi^{k},\cM,r} \\
& \leq \sum_{h=0}^{H-1} \mathbb{E}_{(z_{h}, a_{h}) \sim d_{h}^{\pi^k,\cM}}[b_{h}(z_{h}, a_{h})]+(2 H+1) \sum_{h=0}^{H-1} \mathbb{E}_{(z_{h}, a_{h}) \sim d_{h}^{\pi^k,\cM}}[f_{h}(z_{h}, a_{h})]+\frac{c\alpha_kL}{\sqrt{k}}+\cO(H^2\epsilon_1)\\
\leq & \sum_{h=0}^{H-1} \mathbb{E}_{(\tilde{z}, \tilde{a}) \sim d_{h-L}^{\pi^{k},\cP}}\|\phi_{h}^{*}(\tilde{z}, \tilde{a})\|_{\Sigma_{\gamma_{h-L}, \phi_{h-L}^{*}}^{-1}} \sqrt{|\mathcal{A}|^L \alpha_{k}^{2} d+4 \lambda_{k} d+k\epsilon_1}
%+\sqrt{|\mathcal{A}|^L \alpha_{1}^{2} d / k}+
\\&+(2 H+1) \sum_{h=0}^{H-1} \mathbb{E}_{(\tilde{z}, \tilde{a}) \sim d_{h-L}^{\pi^{k},\cP}}\|\phi_{h-L}^{*}(\tilde{z}, \tilde{a})\|_{\Sigma_{\gamma_{h-L}^{-1}, \phi_{h-L}^{*}}} \sqrt{k|\mathcal{A}|^L \zeta_{k}+4 \lambda_{k} d+k\epsilon_1}
%+(2 H+1) \sqrt{|\mathcal{A}|^L \zeta_{k}}
+\frac{c\alpha_kL}{\sqrt{k}}+\cO(H^2\epsilon_1).
\end{aligned}
$$
Hereafter, we take the dominating term out. First, recall
$$
\alpha_{k}=O\bigg(\sqrt{k|\mathcal{A}|^L \zeta_{k}+\lambda_{k} d} \bigg).
$$
Second, recall that $\gamma_{h}^{k}(z, a)=\frac{1}{k} \sum_{i=0}^{k-1} d_{h}^{\pi^{i}}(z, a)$. Thus
\$
& \sum_{k=1}^{K}\mathbb{E}_{(\tilde{z}, \tilde{a}) \sim d_{ h}^{\pi^k,\cP}}\|\phi^{*}(\tilde{z}, \tilde{a})\|_{\Sigma_{\gamma_{h}^{n}, \phi_{h}^{*}}^{-1}}\\&\quad
\leq \sqrt{K \sum_{k=1}^{K} \mathbb{E}_{(\tilde{z}, \tilde{a}) \sim d_{ h}^{\pi^k,\cP}}[\phi_{h}^{*}(\tilde{z}, \tilde{a})^{\top} \Sigma_{\gamma_{h}^{k}, \phi_{h}^{*}}^{-1} \phi_{h}^{*}(\tilde{z}, \tilde{a})]}\\&\quad\leq  \sqrt{K\bigg(\log \operatorname{det}\bigg(\sum_{k=1}^{K} \mathbb{E}_{(\tilde{z}, \tilde{a}) \sim d_{h}^{\pi^k,\cP}}[\phi_{h}^{*}(\tilde{z}, \tilde{a}) \phi_{h}^{*}(\tilde{z}, \tilde{a})^{\top}]\bigg)-\log \operatorname{det}(\lambda_{1} I)\bigg)}\\& \quad\leq \sqrt{d K \log \bigg(1+\frac{K}{d \lambda_{1}}\bigg)},
\$
where the first inequality is by Cauchy-Schwarz inequality, the second inequality is by Lemma \ref{lemma14} and the third inequality is by Lemma \ref{potential}.

Finally, The MLE guarantee gives
$$
\zeta_{k}=O\bigg(\frac{\log (d k|\cF| / \delta)}{k}\bigg).
%+\epsilon_1^2)
$$
Combining all of the above, we have
\$
 \sum_{k=1}^{K}V^{\pi^{*},\cM,r}-V^{\pi^{k},\cM,r} &\leq \cO\big(H^{2}|\mathcal{A}|^{L} d^{2} K^{1/ 2} \log (d K|\cF| / \delta)^{1 / 2} +H^{2}K d^{1/2}\log (d K|\cF| / \delta)^{1 / 2}\epsilon_1 )\big),
\$
which concludes the proof.
\end{proof}
\begin{lemma}
With probability $1-\delta$, we have 
\$
V^{\pi^{*},\cP,r}-V^{\pi^k,\cP,r}\leq \cO\big(H^{2}|\mathcal{A}|^{L} d^{2} K^{1/ 2} \log (d K|\cF| / \delta)^{1 / 2} +H^{2}K d^{1/2}\log (d K|\cF| / \delta)^{1 / 2}\epsilon_1 )\big).
\$
\end{lemma}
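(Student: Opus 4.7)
My plan is to translate the regret bound for the approximated MDP $\cM$ established in Theorem \ref{theorem:M-regret for general} back to the original POMDP $\cP$, using Lemma \ref{dif} as the bridge. Since Lemma \ref{dif} applies to any policy $\pi \in \Pi^{\operatorname{gen}}$, and in particular to both the optimal generic policy $\pi^{*}$ and to every $L$-memory policy $\pi^{k}$ returned by Algorithm \ref{POBRIEE gamma} (note $\Pi^{L} \subset \Pi^{\operatorname{gen}}$), I would apply it twice---once as an upper bound on $V^{\pi^{*},\cP,r}$ in terms of $V^{\pi^{*},\cM,r}$, and once as a lower bound on $V^{\pi^{k},\cP,r}$ in terms of $V^{\pi^{k},\cM,r}$---to obtain
\$
V^{\pi^{*},\cP,r} - V^{\pi^{k},\cP,r} \le V^{\pi^{*},\cM,r} - V^{\pi^{k},\cM,r} + H^{2}\epsilon_{1}.
\$

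Summing this inequality over $k \in [K]$ and invoking Theorem \ref{theorem:M-regret for general} would then yield
\$
\sum_{k=1}^{K}\bigl(V^{\pi^{*},\cP,r} - V^{\pi^{k},\cP,r}\bigr) \le \cO\bigl(H^{2}|\cA|^{L} d^{2} K^{1/2} \log(dK|\cF|/\delta)^{1/2} + H^{2} K d^{1/2} \log(dK|\cF|/\delta)^{1/2}\epsilon_{1}\bigr) + H^{2} K \epsilon_{1},
\$
where the extra $H^{2} K \epsilon_{1}$ term is absorbed by the second term on the right-hand side, since $d^{1/2}\log(dK|\cF|/\delta)^{1/2} \ge 1$. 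The failure probability $1-\delta$ is inherited directly from Theorem \ref{theorem:M-regret for general}, which itself conditions on the MLE guarantee of Lemma \ref{guarantee_POMDP}. This matches the claimed bound (interpreting it as the cumulative regret form, consistent with the analogous Theorem \ref{theorem:M-regret}).

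I do not anticipate any genuine obstacle in this step; the hard analytical work has already been done in Lemma \ref{dif} (for the uniform value discrepancy between $\cP$ and $\cM$) and in Theorem \ref{theorem:M-regret for general} (for the within-$\cM$ regret of our algorithm). The only subtle point worth checking is that we do \emph{not} require $\pi^{*}$ to be optimal in $\cM$; we only need the one-sided bound $V^{\pi^{*},\cP,r} \le V^{\pi^{*},\cM,r} + H^{2}\epsilon_{1}/2$, which Lemma \ref{dif} supplies regardless of optimality. With this lemma in hand, Theorem \ref{theorem:sample_POMDP} follows by averaging over $k$ to pass from cumulative regret to the PAC bound on $\overline{\pi}$, and then tuning $\epsilon_{1}$ as specified in the theorem so that both terms in the regret are of order $\epsilon$ when $HK$ matches the stated sample complexity.
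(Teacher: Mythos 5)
Your proposal is correct and follows exactly the route the paper takes: its proof of this lemma is literally ``combine Lemma \ref{dif} with Theorem \ref{theorem:M-regret for general},'' and you have simply spelled out that combination (two applications of Lemma \ref{dif}, summation over $k$, and absorption of the extra $H^{2}K\epsilon_{1}$ term), while also correctly reading the statement as a cumulative-regret bound despite the missing $\sum_{k=1}^{K}$ in the paper's display. No gaps.
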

\begin{proof}
Combined Lemma \ref{dif} and Theorem \ref{theorem:M-regret for general}, we conclude the proof.
\end{proof}
\section{Experiment Details}\label{sed:details}
In our experiment, we assess the performance of the proposed algorithm on the partially-observed diabolical combination lock (pocomblock) problem, characterized by a horizon $H$ and a set of $10$ actions. At each temporal stage $h$, there exist three latent states $s_{i ; h}$ for $i \in \{0,1,2\}$. We denote the states $s_{i ; h}$ for $i \in \{0,1\}$ as favorable states and $s_{2 ; h}$ as unfavorable states. For each $s_{i ; h}$ with $i \in \{0,1\}$, a specific action $a_{i ; h}^*$ is randomly selected from the 10 available actions.
When the agent is in state $s_{i ; h}$ for $i \in \{0,1\}$ and performs action $a_{i ; h}^*$, it transitions to states $s_{0 ; h+1}$ and $s_{1 ; h+1}$ with equal likelihood. Conversely, executing any other actions will deterministically lead the agent to state $s_{2 ; h+1}$. In the unfavorable state $s_{2 ; h}$, any action taken by the agent will inevitably result in transitioning to state $s_{2 ; h+1}$.

As for the reward function, a reward of $1$ is assigned to states $s_{i ; H}$ for $i \in \{0,1\}$, signifying that favorable states at horizon $H$ yield a reward of $1$. Additionally, with a 0.5 probability, the agent receives an anti-shaped reward of $0.1$ upon transitioning from a favorable state to an unfavorable state. All other states and transitions yield a reward of zero.
When the step $h$ is odd, the observation $o$ is generated with a dimension of $2^{\lceil\log (H+4)\rceil}$ by concatenating the one-hot vectors of latent state $s$ and horizon $h$, introducing Gaussian noise sampled from $\mathcal{N}(0,0.1)$ for each dimension, appending 0 if needed, and multiplying with a Hadamard matrix. For even steps $h$, the observations corresponding to one of the good states and the bad states are identical, the other good state's observation function is the same as the time step is odd. The initial state distribution is uniformly distributed across $s i ; 0$ for $i \in \{0,1\}$ . We employ a two-layer neural network to capture the essential features of the problem.

It is noteworthy that the optimal policy consists of selecting the specific action $a_{i ; h}^*$ at each step $h$. Once the agent enters an unfavorable state, it remains trapped in the unfavorable state for the entire episode, thus failing to attain the substantial reward signal at the conclusion. This presents an exceptionally demanding exploration problem, as a uniform random policy offers a mere $10^{-H}$ probability of achieving the objectives.
In our experiment, we compare our method with BRIEE, the latest representation learning algorithm for MDP. In particular, we make modifications to BRIEE to take a sequence of observations as input for representation learning, in order to work in the POMDP settings.
\begin{figure}[H]
  \centering
  
    \includegraphics[width=0.45\textwidth]{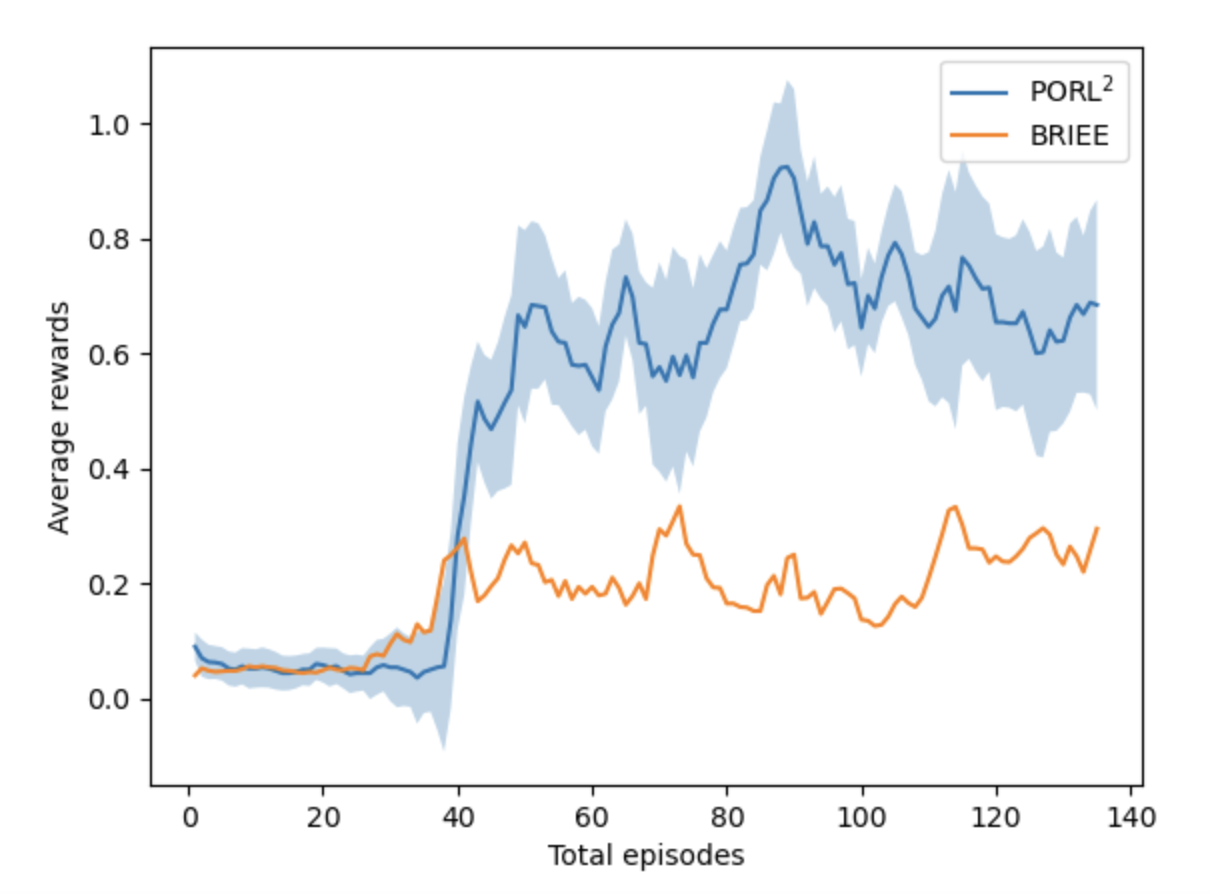}
    \caption{Moving average of evaluation returns of pocomblock for PORL$^2$ and BRIEE}
    \label{fig2:subfig1}
\end{figure}

\paragraph{Implementation details for PORL$^2$} In our implementation, we extend the BRIEE framework by considering two consecutive observations (pocomblock is $2$-decodable), $o_{h-1}$ and $o_h$, as a single variable, denoted as $z_h$. When employing LSVI-LLR, we compute the value function based on the state instead of $z$. This choice is justified by the fact that the rewards associated with different observations of the same state are equal. Figure \ref{fig2:subfig1} is the moving average of evaluation returns of pocomblock for PORL$^2$ and BRIEE.

We record the hyperparameters we try and the final hyperparameter we use for PORL$^2$ in Table \ref{tab:PORL} and BRIEE in Table \ref{tab:briee}.

\begin{table}[h]
\centering
\begin{tabular}{ccc} 
\toprule
& Value Considered & Value \\
\hline
 Batch size  & \{256, 512\} & 512\\
 Discriminator $ f$  number of gradient steps  & \{4,8,16,32\} & 8\\
Horizon & \{4,6,7,10\} & 7\\
Seeds & \{1,12,123,1234,12345\}& 12345\\
Decoder $\phi$ number of gradient steps  & \{4,8,16,32\} & 8 \\
The number of iterations of representation learning & \{6,8,10,12\}& 10 \\
LSVI-LLR bonus coefficient  $\beta$ & \{1\} & 1 \\
\text { LSVI-LLR regularization coefficient } $\lambda$ & \{1\} & 1  \\

 Optimizer  & \{ SGD \} & SGD \\
Decoder  $\phi$ learning rate  & \{1e-3,1e-2\} & 1e-2 \\
Discriminator $f$ learning rate  & \{1e-3,1e-2\} & 1e-2 \\
\toprule
\end{tabular}
\caption{Hyperparameters for PORL$^2$}
\label{tab:PORL}
\end{table}

\begin{table}[h]
\centering
\begin{tabular}{ccc} 
\toprule
& Value Considered & Value \\
\hline
Batch size  & \{256, 512\} & 512\\
 Discriminator $ f$  number of gradient steps  & \{4,8,16,32\} & 8\\
Horizon & \{4,6,7,10\} & 7\\
Seeds & \{1,12,123,1234,12345\}& 12345\\
Decoder $\phi$ number of gradient steps  & \{8\} & 8 \\
The number of iterations of representation learning & \{10\}& 10 \\
LSVI-LLR bonus coefficient  $\beta$ & \{1,10\} & 10 \\
 LSVI-LLR regularization coefficient  $\lambda$ & \{1\} & 1  \\

\text { Optimizer } & \{\text { SGD }\} & \text { SGD } \\
Decoder  $\phi$ learning rate  & \{1e-2\} & 1e-2 \\
Discriminator $f$ learning rate  & \{1e-2\} & 1e-2 \\
\toprule
\end{tabular}
\caption{Hyperparameters for BRIEE}
\label{tab:briee}
\end{table}
%%%%%%%%%%%%%%%%%%%%%%%%%%%%%%%%%%%%%%%%%%%%%%%%%%%%%%%%%%%%%%%%%%%%%%%%%%%%%%%
%%%%%%%%%%%%%%%%%%%%%%%%%%%%%%%%%%%%%%%%%%%%%%%%%%%%%%%%%%%%%%%%%%%%%%%%%%%%%%%

\end{document}